\def\eqref#1{equation~\ref{#1}}
\def\1{\bm{1}}
\DeclareMathAlphabet{\mathsfit}{\encodingdefault}{\sfdefault}{m}{sl}
\SetMathAlphabet{\mathsfit}{bold}{\encodingdefault}{\sfdefault}{bx}{n}
\newcommand{\E}{\mathbb{E}}
\newcommand{\Var}{\mathrm{Var}}
\DeclareMathOperator*{\argmax}{arg\,max}
\newtheorem{proposition}{Proposition}%[section]
\theoremstyle{definition}
\theoremstyle{remark}
\newtheorem{remark}{Remark}
\newcommand{\calX}{\mathcal{X}}
\newcommand{\calY}{\mathcal{Y}}
\newcommand{\calB}{\mathcal{B}}
\newcommand{\calK}{\mathcal{K}}
\newcommand{\esssup}{\text{ess\,sup}}
\newcommand{\abs}[1]{\left\lvert#1\right\rvert}
\newcommand{\indicator}[1]{\mathbbm{1}\curlybrack{#1}}
\newcommand{\roundbrack}[1]{\left( #1 \right)}
\newcommand{\curlybrack}[1]{\left\lbrace #1 \right\rbrace}
\newcommand{\squarebrack}[1]{\left\lbrack #1 \right\rbrack}
\newcommand{\Prob}{\mathbb{P}}
\newcommand{\Exp}[2]{\mathbb{E}_{#1}\left\lbrack#2\right\rbrack}
\newcommand{\pmh}{\text{PM-H}}
\newcommand{\pmeb}{\text{PM-EB}}
\title{Tracking the risk of a deployed model\\ and detecting harmful distribution shifts}
\author{Aleksandr Podkopaev$^{1,2}$, Aaditya Ramdas$^{1,2}$\\ 
	Department of Statistics \& Data Science$^1$\\
	Machine Learning Department$^2$\\
	Carnegie Mellon University\\
\texttt{\{podkopaev,aramdas\}@cmu.edu}
}
\begin{document}

\maketitle

\begin{abstract}
When deployed in the real world, machine learning models inevitably encounter changes in the data distribution, and certain---but not all---distribution shifts could result in significant performance degradation. In practice, it may make sense to ignore benign shifts, under which the performance of a deployed model does not degrade substantially, making interventions by a human expert (or model retraining) unnecessary.  While several works have developed tests for distribution shifts, these typically either use non-sequential methods, or detect arbitrary shifts (benign or harmful), or both. We argue that a sensible method for firing off a warning has to both (a) detect harmful shifts while ignoring benign ones, and (b) allow continuous monitoring of model performance without increasing the false alarm rate. In this work, we design simple sequential tools for testing if the difference between source (training) and target (test) distributions leads to a significant increase in a risk function of interest, like accuracy or calibration. Recent advances in constructing time-uniform confidence sequences allow efficient aggregation of statistical evidence accumulated during the tracking process. The designed framework is applicable in settings where (some) true labels are revealed after the prediction is performed, or when batches of labels become available in a delayed fashion. We demonstrate the efficacy of the proposed framework through an extensive empirical study on a collection of simulated and real datasets.
\end{abstract}

\tableofcontents

\section{Introduction}\label{sec:intro}

Developing a machine learning system usually involves data splitting where one of the labeled folds is used to assess its generalization properties. Under the assumption that the incoming test instances (target) are sampled independently from the same underlying distribution as the training data (source), estimators of various performance metrics, such as accuracy or calibration, are accurate. However, a model deployed in the real world inevitably encounters variability in the input distribution, a phenomenon referred to as \emph{dataset shift; see the book by~\citet{quionero2009dataset_shift}. Commonly studied settings} include covariate shift~\citep{shimodaira2000improving} and label shift~\citep{saerens2002adjusting}. While testing whether a distribution shift is present has been studied both in offline~\citep{rabanser2019failing,gretton12kernel,hu2020df_test} and online~\citep{vovk2005algorithmic,vovk2020concept,vovk2020testing} settings, a natural question is whether an intervention is required once there is evidence that a shift has occurred.

A trustworthy machine learning system has to be supplemented with a set of tools designed to raise alarms whenever critical changes to the environment take place. \citet{vovk2021retrain} propose retraining once an i.i.d.\ assumption becomes violated and design corresponding online testing protocols. However, naively testing for the presence of distribution shift is not fully practical since it does not take into account the \emph{malignancy} of a shift~\citep{rabanser2019failing}. To elaborate, users are typically interested in how a model performs according to certain prespecified metrics. In \emph{benign} scenarios, distribution shifts could be present but may not significantly affect model performance. Raising unnecessary alarms might then lead to delays and a substantial increase in the cost of model deployment. The recent approach by~\citet{vovk2021retrain} based on conformal test martingales is highly dependent on the choice of conformity score. In general, the methodology raises an alarm whenever a deviation from i.i.d.\ is detected, which does not necessarily imply that the deviation is harmful (see Appendix~\ref{appsubsec:issues_conformal}).

\begin{figure*}[ht]
    \centering
    \begin{subfigure}{0.45\textwidth}
    \centering
    \includegraphics[width=\textwidth]{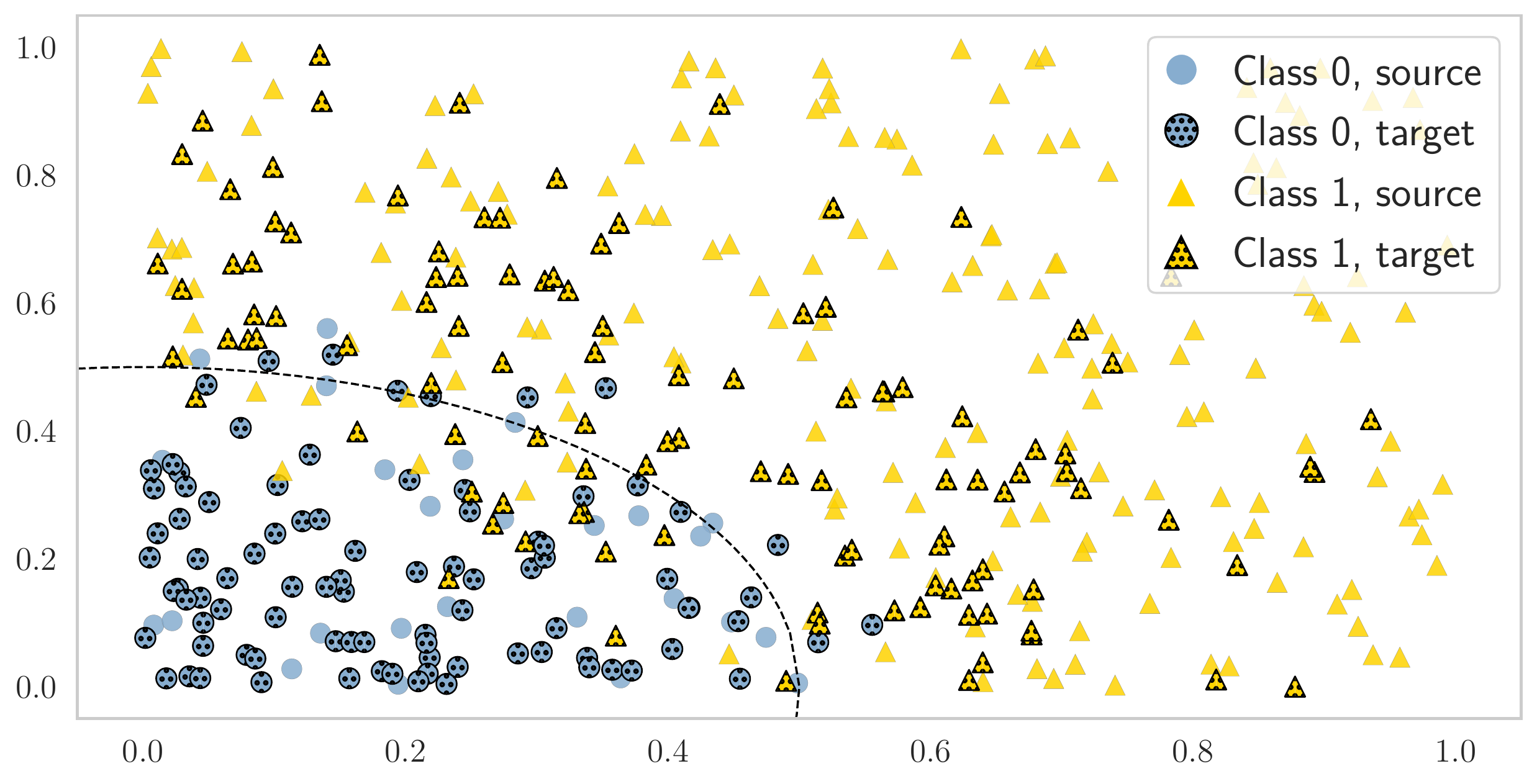}
    \caption{}
    \label{subfig:benign_covariate}
    \end{subfigure}%
    ~ 
    \begin{subfigure}{0.45\textwidth}
     \centering
     \includegraphics[width=\textwidth]{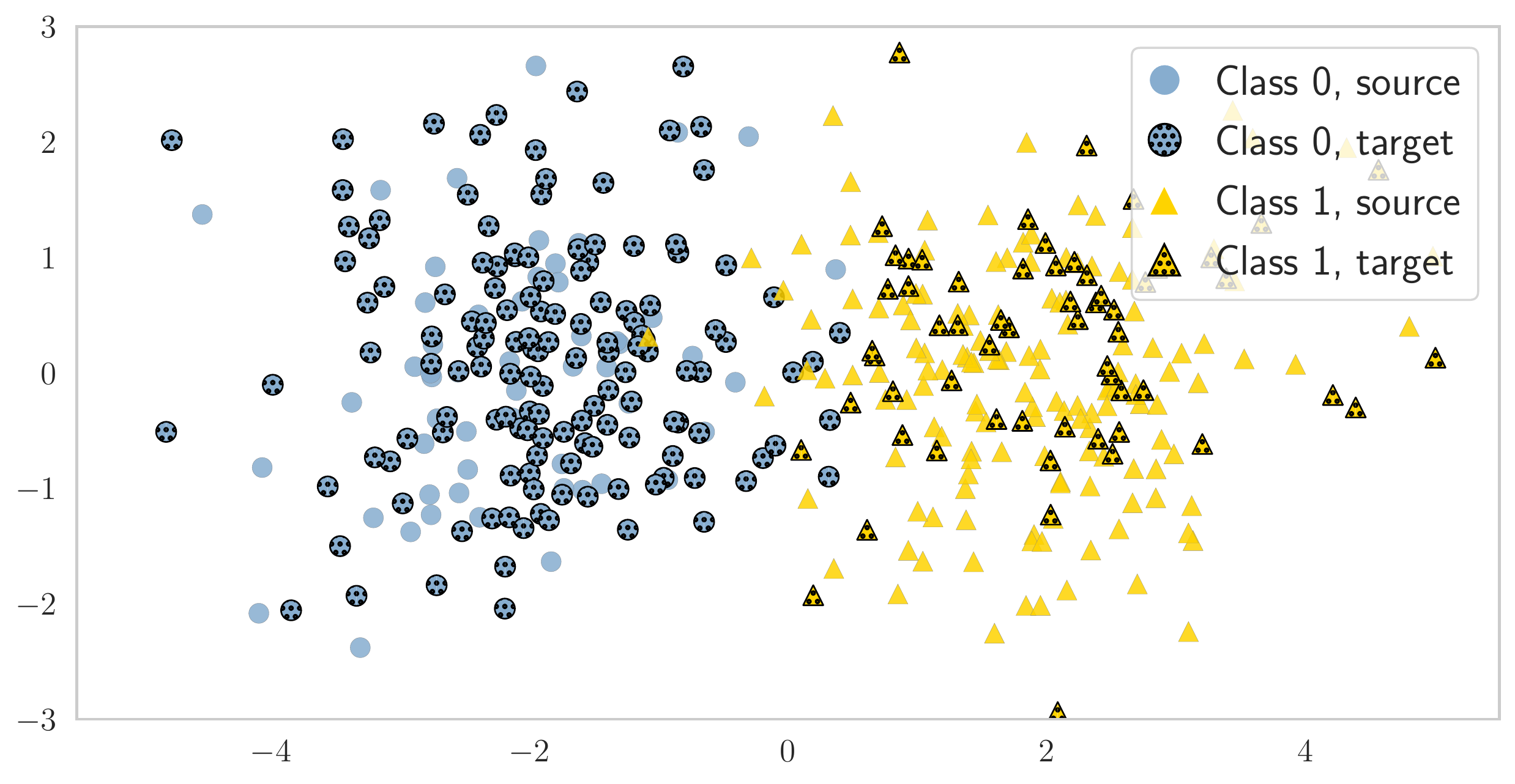}
    \caption{}
    \label{subfig:benign_label}
    \end{subfigure}
    \caption{Samples from the source and the target (hatched) distributions under benign (a) covariate and (b) label shifts. (a) $X\sim \mathrm{Unif}([0,1]\times[0,1])$ on the source and $X_i\sim \text{Beta}(1,2)$, $i=1,2$ on the target. Labels satisfy: $\Prob\roundbrack{Y=1\mid X=x} = \Prob\roundbrack{x_1^2+x_2^2+\varepsilon\geq 1/4}$ where $\varepsilon\sim\mathcal{N}(0,0.01)$. 
    % The dashed line outlines a circle of radius 1/2 centered at the origin. 
    (b) Marginal probability of class 1 changes from $\pi_1^S=0.7$ on the source to $\pi_1^T=0.3$ on the target. Covariates satisfy: $X\mid Y=y\sim\mathcal{N}(\mu_y,I_2)$, where $\mu_0=(-2,0)^\top$, $\mu_1=(2,0)^\top$. In both cases, a model which separates well data from the source will generalize well to the target.
    }
    \label{fig:sim_benign_shifts}
\end{figure*}

In some cases, it is possible to handle structured shifts in a post-hoc fashion without performing expensive actions, such as model retraining. One example arises within the context of distribution-free uncertainty quantification where the goal is to supplement predictions of a model with a measure of uncertainty valid under minimal assumptions. Recent works~\citep{tibs2019conf,gupta2020df_calib,podkopaev2021label} show how to adapt related procedures for handling covariate and label shifts without labeled data from the target. However, both aforementioned shifts impose restrictive assumptions on the possible changes in the underlying probability distribution, assuming either that $P(X)$ changes but $P(Y|X)$ stays unchanged (covariate shift assumption), or that $P(Y)$ changes but $P(X|Y)$ stays unchanged (label shift assumption).

Thinking of distribution shifts only in terms of covariate or label shifts has two drawbacks: such (unverifiable) assumptions often may be unrealistic, and even if they were plausible, such shifts may be benign and thus could be ignored. To elaborate on the first point, it is evident that while distribution shifts constantly occur in practice, they may generally have a more complex nature. In medical diagnosis,  $P(Y)$ and $P(X|Y=y)$ could describe the prevalence of certain diseases in the population and symptoms corresponding to disease $y$. One might reasonably expect not only the former to change over time (say during flu season or epidemics) but also the latter (due to potential mutations and partially-effective drugs/vaccines), thus violating both the covariate and label shift assumptions. Regarding the second point, a model capable of separating classes sufficiently well on the source distribution can sometimes generalize well to the target. We illustrate such benign covariate and label shifts on Figures~\ref{subfig:benign_covariate} and~\ref{subfig:benign_label} respectively. We argue that the critical distinction---from the point of view on raising alarms---should be built between harmful and benign shifts, and not between covariate and label shifts.

A related question is whether labeled data from one distribution can be used for training a model in a way that it generalizes well to another distribution where it is hard to obtain labeled examples. Importance-weighted risk minimization can yield models with good generalization properties on the target domain, but the corresponding statistical guarantees typically become vacuous if the importance weights are unbounded, which happens when the source distribution's support fails to cover the target support. Adversarial training schemes~\citep{ganin2016domain,wu2019domain} for deep learning models often yield models with reasonable performance on some types of distribution shifts, but some unanticipated shifts could still degrade performance. This paper does not deal with how to train a model if a particular type of shift is anticipated; it answers the question of when one should consider retraining (or re-evaluating) a currently deployed model.

We argue for triggering a warning once the non-regularities in the data generating distribution lead to a statistically significant increase in a user-specified risk metric. We design tools for nonparametric sequential testing for an unfavorable change in a chosen risk function of any black-box model. The procedure can be deployed in settings where (some) true target labels can be obtained, immediately after prediction or in a delayed fashion. 

During the preparation of our paper, we noticed a very recent preprint~\citep{kamulete2021test} on broadly the same topic. While they also advise against testing naively for the presence of a shift, their approach is different from ours as (a) it is based on measuring the malignancy of a shift based on outlier scores (while we suggest measuring malignancy via a drop in test accuracy or another prespecified loss), and arguably more importantly (b) their procedure is non-sequential (it is designed to be performed once, e.g, at the end of the year, and cannot be continuously monitored, but ours is designed to flag alarms at any moment when a harmful shift is detected). Adapting fixed-time testing to the sequential settings requires performing corrections for multiple testing: if the correction is not performed, the procedure is no longer valid, but if naively performed, the procedure becomes too conservative, due to the dependence among the tests being ignored (see Appendix~\ref{appsubsec:issues_fixed_time}). We reduce the testing problem to performing sequential estimation that allows us to accumulate evidence over time, without throwing away any data or the necessity of performing explicit corrections for multiple testing; these are implicitly handled efficiently by the martingale methods that underpin the sequential estimation procedures~\citep{howard2020time,ws2020est_means}. 

In summary, the main contributions of this work are:
\begin{enumerate}
    \item We deviate from the literature on detecting covariate or label shifts and instead focus on differentiating harmful and benign shifts. We pose the latter problem as a nonparametric sequential hypothesis test, and we differentiate between malignant and benign shifts by measuring changes in a user-specified risk metric.
    \item We utilize recent progress in sequential estimation to develop tests that provably control the false alarm rate despite the multiple testing issues caused by continuously monitoring the deployed model (Section~\ref{subsec:seq_test_via_seq_est}), and without constraining the form of allowed distribution shifts. For example, we do not require the target data to itself be i.i.d.; our methods are provably valid even if the target distribution is itself shifting or drifting over time. 
    \item We evaluate the framework on both simulated (Section~\ref{subsec:sim_data_exp}) and real data (Section~\ref{subsec:real_data_exp}), illustrating its promising empirical performance. In addition to traditional losses, we also study several generalizations of the Brier score~\citep{brier1950} to multiclass classification.
\end{enumerate}

\section{Sequential testing for a significant risk increase}\label{sec:test_for_change}

Let $\calX$ and $\calY$ denote the covariate and label spaces respectively.
Consider predictors $f:\calX\to \calY$. Let $\ell(\cdot,\cdot)$ be the loss function chosen to be monitored, with $R(f) := \Exp{}{\ell(f(X),Y)}$ denoting the corresponding expected loss, called the risk of $f$. 

We assume from here onwards that $\ell$ is bounded, which is the only restriction made. This is needed to quantify how far the empirical target risk is from the true target risk without making assumptions on $P(X,Y)$. This is not a restriction of the current paper only, but appears broadly in statistical learning theory (or the study of concentration inequalities): if one picks an unbounded loss function---say the logarithmic loss---which can take on infinite values, then it is impossible to say how far the true and empirical risks are without further assumptions, for example that the distribution of $P(Y|X)$ is light tailed. We prefer not to make such assumptions in this paper, keeping it as assumption-light as possible and thus generally applicable in various domains. The restriction to bounded losses is not heavy since examples abound in the machine learning literature; see some examples in Appendix~\ref{appsec:losses}.

\begin{remark}
For classification, sometimes one does not predict a single label, but a distribution over labels. In that case the range of $f$ would be $\Delta^{|\calY|}$. This poses no issue, and it is common to use bounded loss functions and risks such as the Brier score, as exemplified in Section~\ref{sec:experiments}. On a different note, for regression, the loss (like squared error) is bounded only if the observations are. This is reasonable in some contexts (predicting rain or snow) but possibly not in others (financial losses).
\end{remark}

% \begin{remark} The reader will notice that the following methodology works equally well for regression problems with bounded observations (such as the amount of rain or snow) and bounded loss functions (such as the squared loss when the observations are bounded). We focus on classification only for concreteness. 
% \end{remark}

\subsection{Casting the detection of risk increase as a sequential hypothesis test}\label{subsec:detect_seq_hyp_test}

We aim to trigger a warning whenever the risk on the target domain exceeds the risk on the source by a non-negligible amount specified in advance. For example, alerting could happen once it is possible to conclude with certain confidence that the accuracy has decreased by 10\%. Shifts that lead to a decrease or an insignificant increase in the risk are then treated as benign. Formally, we aim to construct a sequential test for the following pair of hypotheses:
\begin{equation}\label{eq:null_vs_alt}
    H_0: \quad R_T(f) \leq R_S(f)+\varepsilon_{\text{tol}}, \quad \text{vs.} \quad H_1:\quad R_T(f) >R_S(f)+ \varepsilon_{\text{tol}},
\end{equation}
where $\varepsilon_{\text{tol}}\geq 0$ is an acceptable tolerance level, and $R_S(f)$ and $R_T(f)$ stand for the risk of $f$ on the source and target domains respectively.  Assume that one observes a sequence of data points $Z_1,Z_2,\dots$. At each time point $t$, a sequential test takes the first $t$ elements of this sequence and output either a 0 (continue) or 1 (reject the null and stop). The resulting sequence of 0s and 1s satisfies the property that if the null $H_0$ is true, then the probability that the test ever outputs a 1 and stops (false alarm) is at most $\delta$. In our context, this means that if a distribution shift is benign, then with high probability, the test will never output a 1 and stop, and thus runs forever. Formally, a level-$\delta$ sequential test $\Phi$ defined as a mapping $\bigcup_{n=1}^\infty {\mathcal Z}^n \to \{0,1\}$ must satisfy: $\Prob_{H_0}(\exists t \geq 1: \Phi(Z_1,...,Z_t)=1) \leq \delta$. Note that the sequential nature of a test is critical here as we aim to develop a framework capable of continuously updating inference as data from the target is collected, making it suitable for many practical scenarios. We distinguish between two important settings when we assume that the target data either satisfies (a) an i.i.d.\ assumption (under the same or different distribution as the source) or (b) only an independence assumption. While the i.i.d.\ assumption may be arguably reasonable on the source, it is usually less realistic on the target, since in practice, one may expect the distribution to drift slowly in a non-i.i.d.\ fashion instead of shifting sharply but staying i.i.d.. Under setting (b), the quantity of interest on the target domain is the \emph{running risk}:
\begin{equation*}
    R^{(t)}(f) = \frac{1}{t}\sum_{i=1}^t \Exp{}{\ell \roundbrack{f(X'_i),Y_i'}}, \quad t\geq 1,
\end{equation*}
where the expected value is taken with respect to the joint distribution of $(X'_i,Y'_i)$, possibly different for each test point $i$. The goal transforms into designing a test for the following pair of hypotheses:
\begin{equation}\label{eq:hyp_test_running_risk}
     H_0: \quad R^{(t)}_T(f) \leq R_S(f)+\varepsilon_{\text{tol}}, \ \forall t \geq 1, \quad \text{vs.} \quad H_1:\quad \exists t^\star \geq 1: R_T^{(t^\star)}(f) >R_S(f)+ \varepsilon_{\text{tol}}.
\end{equation}
When considering other notions of risk beyond the misclassification error, one could also be interested in relative changes in the risk, and thus a sequential test for the following pair of hypotheses:
\begin{equation}\label{eq:null_vs_alt_relative}
    H_0': \quad R_T(f)\leq (1+\varepsilon_{\text{tol}})R_S(f), \quad \text{vs.} \quad H_1': \quad R_T(f)> (1+\varepsilon_{\text{tol}})R_S(f).
\end{equation}
The proposed framework handles all of the aforementioned settings as we discuss next. The most classical approach for sequential testing is the sequential probability ratio test (SPRT) due to~\citet{wald1945seq_tests}. However, it can only be applied, even for a point null and a point alternative, when the relevant underlying distributions are known. While extensions of the SPRT exist to the composite null and alternative (our setting above), these also require knowledge of the distributions of the test statistics (e.g., empirical risk) under the null and alternative, and being able to maximize the likelihood. Clearly, we make no distributional assumptions and so we require a nonparametric approach. We perform sequential testing via the dual problem of nonparametric sequential estimation, a problem for which there has been much recent progress to draw from.

\subsection{Sequential testing via sequential estimation}\label{subsec:seq_test_via_seq_est}

When addressing a particular prediction problem, the true risk on neither the source nor the target domains is known. Performance of a model on the source domain is usually assessed through a labeled holdout source sample of a fixed size $n_S$: $\curlybrack{(X_i, Y_i)}_{i=1}^{n_S}$. 
% For a chosen loss $\ell$ and a learnt predictor $f$, we use $Z:= \ell (f(X), Y)$ to denote the losses on individual data points. 
We can write:
\begin{equation*}
    R_S(f)+\varepsilon_{\text{tol}} = \widehat{R}_S(f) + \roundbrack{R_S(f)- \widehat{R}_S(f)}+\varepsilon_{\text{tol}},
\end{equation*}
where $\widehat{R}_S(f):=\roundbrack{\sum_{i=1}^{n_S} \ell(f(X_i),Y_i)}/n_S$. For any fixed tolerance level $\delta_S\in (0,1)$, classic concentration results can be used to obtain an upper confidence bound $\varepsilon_{\text{appr}}$ on the difference $R_S(f)- \widehat{R}_S(f)$, and thus to conclude that with probability at least $1-\delta_S$:
\begin{equation}\label{eq:U_s_definition}
    R_S(f)+\varepsilon_{\text{tol}} \leq \widehat{U}_S(f)+\varepsilon_{\text{tol}}, \quad \text{where ~ $\widehat{U}_S(f) = \widehat{R}_S(f)+\varepsilon_{\text{appr}}$.}
\end{equation}
For example, by Hoeffding's inequality, $n_S = O(1/\varepsilon_{\text{appr}}^2)$ points suffice for the above guarantee, but that bound can be quite loose when the individual losses $\ell(f(X_i),Y_i)$ have low variance. In such settings, recent variance-adaptive confidence bounds~\citep{ws2020est_means, howard2020time} are tighter. It translates to an increase in the power of the framework, allowing for detecting harmful shifts much earlier, while still controlling the false alarm rate at a prespecified level.

In contrast, the estimator of the target risk has to be updated as losses on test instances 
% $(X_1',Y_1'), (X_2',Y_2'), \dots$ 
are observed. While the classic concentration results require specifying in advance the size of a sample used for estimation, time-uniform confidence sequences retain validity under adaptive data collection settings. For any chosen $\delta_T\in (0,1)$, those yield a time-uniform lower confidence bound on $R_T(f)$:
\begin{equation*}
    \Prob \roundbrack{\exists t\geq 1: R_T(f) < \widehat{L}^{(t)}_T(f)} \leq \delta_T,
\end{equation*}
where $\widehat{L}^{(t)}_T(f)$ is the bound constructed after processing $t$ test points. We typically set $\delta_S=\delta_T = \delta/2$, where $\delta$ refers to the desired type I error. 
Under the independence assumption (\eqref{eq:hyp_test_running_risk}), the form of the drift in the distribution of the data is allowed to change with time. From the technical perspective, the difference with the i.i.d.\ setting is given by the applicability of particular concentration results. While the betting-based approach of~\citet{ws2020est_means} necessitates assuming that random variables share a common mean, proceeding with the conjugate-mixture empirical-Bernstein bounds~\citep{howard2020time} allows us to lift the common-mean assumption and handle a time-varying mean.
% We invoke time-uniform variance-adaptive lower bound on the running risk $R^{(t)}_T(f)$ due to~\citet{howard2020time}. 
We summarize the testing protocol in Algorithm~\ref{alg:test_framework} and refer the reader to Appendix~\ref{appsec:concentration} for a review of the concentration results used in this work. One can easily adapt the framework to proceed with a fixed, absolute threshold on the risk, rather than a relative threshold $R_S(f)+\varepsilon_{\text{tol}}$, e.g., we can raise a warning once accuracy drops below 80\%, rather than 5\% below the training accuracy.

% \paragraph{Framework} With upper and lower confidence bounds $\widehat{U}_S$ and $\widehat{L}^{(t)}_T,$ $t=1,2,\dots$ at hand, the testing procedure terminates at time $t$ if and only if:
% \begin{enumerate}
%     \item $\widehat{L}_T^{(t)}>\widehat{U}_S+\varepsilon_{\text{tol}}$ for the absolute change in the risk (rejecting $H_0$ defined in~\eqref{eq:null_vs_alt}).
%     \item $\widehat{L}_T^{(t)}>(1+\varepsilon'_{\text{tol}})\widehat{U}_S$ for the relative change in the risk (rejecting $H'_0$ defined in~\eqref{eq:null_vs_alt_relative}).
% \end{enumerate}

\begin{algorithm}
\caption{Sequential testing for an absolute increase in the risk.}\label{alg:test_framework}
\hspace*{\algorithmicindent} \textbf{Input:} Predictor $f$, loss $\ell$, tolerance level $\varepsilon_{\text{tol}}$, sample from the source $\curlybrack{(X_i,Y_i)}_{i=1}^{n_S}$.
% \hspace*{\algorithmicindent} \textbf{Output} 
\begin{algorithmic}[1]
\Procedure{}{}
%    \Procedure{MyProcedure}{$x,y$}
%     % Input:
%     \Comment{Input: x}
%     % Output:
%     \Comment{Output:y}
\State Compute the upper confidence bound on the source risk $\widehat{U}_S(f)$;
% \State Process data from the target domain
\For {$t=1,2,\dots$}
\State Compute the lower confidence bound on the target risk $\widehat{L}_T^{(t)}(f)$;
\If {$\widehat{L}_T^{(t)}(f)>\widehat{U}_S(f)+\varepsilon_{\text{tol}}$}
\State Reject $H_0$ (\eqref{eq:null_vs_alt}) and fire off a warning.
\EndIf
\EndFor
\EndProcedure
\end{algorithmic}
\end{algorithm}

Testing for a relative increase in the risk is performed by replacing the line 5 in the Algorithm~\ref{alg:test_framework} by the condition $\widehat{L}_T^{(t)}(f)>(1+\varepsilon_{\text{tol}})\widehat{U}_S(f)$. For both cases, the proposed test controls type I error as formally stated next. The proof is presented in Appendix~\ref{appsec:proofs}.

\begin{proposition}\label{prop:drop_validity}
Fix any $\delta\in (0,1)$. Let $\delta_S,\delta_T\in (0,1)$ be chosen in a way such that $\delta_S+\delta_T=\delta$. Let $\widehat{L}^{(t)}_T(f)$ define a time-uniform lower confidence bound on $R^{(t)}_T(f)$ at level $\delta_T$ after processing $t$ data points ($t\geq 1$), and let $\widehat{U}_S(f)$ define an upper confidence bound on $R_S(f)$ at level $\delta_S$. Then:
\begin{equation}\label{eq:prop_valid_guar}
\begin{cases}
    \Prob_{H_0}\roundbrack{\exists t\geq 1: \widehat{L}^{(t)}_T(f)>\widehat{U}_S(f)+\varepsilon_{\text{tol}}}\leq \delta, \\
    \Prob_{H_0'}\roundbrack{\exists t\geq 1: \widehat{L}^{(t)}_T(f)>(1+\varepsilon_{\text{tol}})\widehat{U}_S(f)} \leq \delta,
\end{cases}
\end{equation}
that is, the procedure described in Algorithm~\ref{alg:test_framework} controls the type I error for testing the hypotheses $H_0$ (\eqref{eq:null_vs_alt} and \eqref{eq:hyp_test_running_risk}) and $H_0'$ (\eqref{eq:null_vs_alt_relative}).
\end{proposition}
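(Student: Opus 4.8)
The plan is to show that a false alarm under $H_0$ (or $H_0'$) forces at least one of the two confidence bounds to fail, and then bound the probability of that bad event by a union bound over the two failure events, each of which has probability controlled by construction. Concretely, define the ``bad'' events
\begin{equation*}
    A_S = \curlybrack{\widehat{U}_S(f) < R_S(f)}, \qquad A_T = \curlybrack{\exists t\geq 1 : \widehat{L}^{(t)}_T(f) > R^{(t)}_T(f)}.
\end{equation*}
By hypothesis, $\Prob(A_S)\leq \delta_S$ (upper confidence bound at level $\delta_S$) and $\Prob(A_T)\leq \delta_T$ (time-uniform lower confidence bound at level $\delta_T$), so $\Prob(A_S\cup A_T)\leq \delta_S+\delta_T=\delta$.

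Next I would argue that, on the complement $A_S^c\cap A_T^c$, no false alarm can occur. Suppose $H_0$ holds, i.e.\ $R^{(t)}_T(f)\leq R_S(f)+\varepsilon_{\text{tol}}$ for all $t\geq 1$. On $A_S^c\cap A_T^c$ we have, for every $t$,
\begin{equation*}
    \widehat{L}^{(t)}_T(f) \leq R^{(t)}_T(f) \leq R_S(f)+\varepsilon_{\text{tol}} \leq \widehat{U}_S(f)+\varepsilon_{\text{tol}},
\end{equation*}
so the stopping condition in line 5 of Algorithm~\ref{alg:test_framework} is never triggered. Contrapositively, the event $\curlybrack{\exists t\geq 1 : \widehat{L}^{(t)}_T(f) > \widehat{U}_S(f)+\varepsilon_{\text{tol}}}$ is contained in $A_S\cup A_T$, which gives the first line of \eqref{eq:prop_valid_guar}. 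The same chain applies verbatim when the target data is i.i.d.\ (taking $R^{(t)}_T(f)=R_T(f)$), so both \eqref{eq:null_vs_alt} and \eqref{eq:hyp_test_running_risk} are covered.

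For the relative version, under $H_0'$ we have $R_T(f)\leq (1+\varepsilon_{\text{tol}})R_S(f)$, and since losses are nonnegative (bounded in $[0,1]$ after rescaling), $R_S(f)\geq 0$ and $1+\varepsilon_{\text{tol}}>0$; hence on $A_S^c\cap A_T^c$,
\begin{equation*}
    \widehat{L}^{(t)}_T(f) \leq R_T(f) \leq (1+\varepsilon_{\text{tol}})R_S(f) \leq (1+\varepsilon_{\text{tol}})\widehat{U}_S(f),
\end{equation*}
where the last inequality uses $\widehat{U}_S(f)\geq R_S(f)$ together with $1+\varepsilon_{\text{tol}}\geq 0$. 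This yields the second line of \eqref{eq:prop_valid_guar} by the same union-bound argument.

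I do not expect a genuine obstacle here; the proof is essentially a union bound plus monotonicity. The one point requiring a little care is the relative case, where multiplying the inequality $\widehat{U}_S(f)\geq R_S(f)$ by $(1+\varepsilon_{\text{tol}})$ is only valid because this factor is nonnegative — this is where boundedness/nonnegativity of the loss (so that $R_S(f)\ge 0$) is quietly used, and it should be stated explicitly. A secondary subtlety is making sure the time-uniform ($\exists t$) quantifier is handled on the target side but not needed on the source side, since $\widehat{U}_S(f)$ is computed once from a fixed-size sample; the argument above keeps these straight by putting the ``$\exists t$'' only inside $A_T$.
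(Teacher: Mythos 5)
Your proof is correct and is essentially the same argument as the paper's: a union bound over the two confidence-bound failure events, with the observation that under $H_0$ (resp.\ $H_0'$) an alarm forces at least one bound to fail. The only cosmetic difference is that you argue contrapositively on the good event $A_S^c\cap A_T^c$ while the paper rearranges the alarm event algebraically before applying the union bound; your explicit remark that multiplying by $(1+\varepsilon_{\text{tol}})\geq 0$ preserves the inequality is a point the paper leaves implicit.
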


\begin{remark}
Both the testing protocol (Algorithm~\ref{alg:test_framework}) and the corresponding guarantee (Proposition~\ref{prop:drop_validity}) are stated in a form which requires the lower bound on the target risk to be recomputed after processing each test point. More generally, test data could be processed in minibatches of size $m\geq 1$. 
% Similarly, the initial sample size could be replaced with some other initial size  $t_0\geq 1$.
\end{remark}

\begin{remark} 
Type I error guarantee in~\eqref{eq:prop_valid_guar} holds under continuous monitoring. This goes beyond standard fixed-time guarantees, for which type I error is controlled only when the sample size is fixed in advance, and not under continuous monitoring. Define a stopping time of a sequential test in Algorithm~\ref{alg:test_framework}:
\begin{equation*}
    N(\delta) := \inf \curlybrack{t\geq 1:  \widehat{L}^{(t)}_T(f)>\widehat{U}_S(f)+\varepsilon_{\text{tol}}}.
\end{equation*}
Then the guarantee in~\eqref{eq:prop_valid_guar} can be restated as: $\Prob_{H_0}\roundbrack{N(\delta)<\infty}\leq \delta$,
% \begin{equation*}
%     \Prob_{H_0}\roundbrack{N(\delta)<\infty}\leq \delta,
% \end{equation*}
that is the probability of ever raising a false alarm is at most $\delta$.
\end{remark}
\paragraph{From sequential testing to changepoint detection.} A valid sequential test can be transformed into a changepoint detection procedure with certain guarantees~\citep{lorden1971procedures}. The key characteristics of changepoint detection procedures are \emph{average run length} (ARL), or average time to a false alarm, and \emph{average detection delay} (ADD). One way to convert a sequential test into a detection procedure is by running a separate test starting at each time point $t=1,2,\dots$, and claiming a change whenever the first one of the tests rejects the null. Subsequently, these tests yield a sequence of stopping variables $N_1(\delta),N_2(\delta),\dots$ The corresponding stopping time is defined as:
\begin{equation*}
    N^\star (\delta) := \inf_{k=1,2,\dots} \roundbrack{N_k(\delta)+(k-1)}.
\end{equation*}
\citet{lorden1971procedures} established a lower bound on the (worst-case) ARL of such changepoint detection procedure of the form: $\Exp{H_0}{N^\star (\delta)}\geq 1/\delta$. The (worst-case) average detection delay is defined as:
\begin{equation*}
    \overline{\mathbb{E}}_1N(\delta) = \sup_{m\geq 1} \esssup \ \mathbb{E}_m \squarebrack{(N(\delta)-(m-1))_+ \ \vert \ (X'_1,Y_1'),\dots,(X'_{m-1},Y'_{m-1})},
\end{equation*}
where $\mathbb{E}_m$ denotes expectation under $P_m$, the distribution of a sequence $(X'_1,Y_1'),(X'_1,Y_1'),\dots$ under which $(X'_{m},Y'_{m})$ is the first term from a shifted distribution.

\section{Experiments}\label{sec:experiments}

In Section~\ref{subsec:sim_data_exp}, we analyze the performance of the testing procedure on a collection of simulated datasets. First, we consider settings where the i.i.d.\ assumption on the target holds, and then relax it to the independence assumption. In Section~\ref{subsec:real_data_exp}, we evaluate the framework on real data. We consider classification problems with different metrics of interest including misclassification loss, several versions of the Brier score and miscoverage loss for set-valued predictors. Due to space limitations, we refer the reader to Appendix~\ref{appsec:losses} for a detailed review of the considered loss functions.

\subsection{Simulated data}\label{subsec:sim_data_exp}

% \subsection{Tracking the risk under the i.i.d. assumption}\label{subsec:monitoring_iid}

\paragraph{Tracking the risk under the i.i.d.\ assumption.} 

Here we induce label shift on the target domain and emulate a setting where it noticeably harms the accuracy of a predictor by modifying the setup from Section~\ref{sec:intro} through updating the class centers to $\mu_0=(-1,0)^\top$ and $\mu_1 = (1,0)^\top$, making the classes largely overlap. The (oracle) Bayes-optimal predictor on the source domain is:
\begin{equation}\label{eq:source_bayes_opt}
\begin{aligned}
    f^\star(x) & = \frac{\pi_1^S\cdot \varphi(x;\mu_1,I_2)}{\pi_0^S\cdot \varphi(x;\mu_0,I_2)+\pi_1^S\cdot \varphi(x;\mu_1,I_2)}, 
    % &= \frac{1}{1+ \frac{\pi_0^S}{\pi_1^S}\exp\roundbrack{x^\top (\mu_0-\mu_1)+ \frac{1}{2}\roundbrack{\|\mu_1\|_2^2-\|\mu_0\|_2^2}}},
\end{aligned}
\end{equation}
where $\varphi(x;\mu_i,I_2)$ denotes the probability density function of a Gaussian random vector with mean $\mu_i$, $i\in\{0,1\}$ and an identity covariate matrix. Let $\ell$ be the 0-1 loss, and thus the misclassification risk $R_T(f^*)$ on the target is:
\begin{equation*}
\begin{aligned}
    % \Prob_T\roundbrack{f^\star(X)\neq Y} & = \Prob \roundbrack{f^\star(X)<1/2 \mid Y =1}\cdot \pi_1^T + \Prob \roundbrack{f^\star(X)\geq 1/2\mid Y =0}\cdot \pi_0^T\\
    % & = \Prob \roundbrack{ \frac{\varphi(X;\mu_1,\Sigma)}{\varphi(X;\mu_0,\Sigma)}< \frac{\pi_0^S}{\pi_1^S}\mid Y =1}\cdot \pi_1^T \\
    % & + \Prob \roundbrack{\frac{\varphi(X;\mu_1,\Sigma)}{\varphi(X;\mu_0,\Sigma)}\geq \frac{\pi_0^S}{\pi_1^S}\mid Y =0}\cdot \pi_0^T\\
    \Prob_T\roundbrack{f^\star(X)\neq Y} & = \Prob \roundbrack{ X^\top (\mu_1-\mu_0) < \log\roundbrack{\frac{\pi_0^S}{\pi_1^S}} + \frac{1}{2}\roundbrack{\|\mu_1\|_2^2-\|\mu_0\|_2^2}\mid Y =1}\cdot \pi_1^T \\
    & + \Prob \roundbrack{X^\top (\mu_1-\mu_0)\geq \log\roundbrack{\frac{\pi_0^S}{\pi_1^S}}+ \frac{1}{2}\roundbrack{\|\mu_1\|_2^2-\|\mu_0\|_2^2}\mid Y =0}\cdot \pi_0^T.
\end{aligned}
\end{equation*}
% In words, the oracle risk (and in fact the risk of any $f$) is a linear function of $\pi^T_1$, the marginal probability of class 1 on the target (Figure~\ref{subfig:risk_bayes_not_well_sep}). 
For three values of $\pi_1^S$, the source marginal probability of class 1, we illustrate how label shift affects the misclassification risk of the Bayes-optimal predictor on Figure~\ref{subfig:risk_bayes_not_well_sep}, noting that it is linear in $\pi_1^T$. Importantly, whether label shift hurts or helps depends on the value of $\pi_1^S$.

We fix $\pi_1^S=0.25$ and use the corresponding Bayes-optimal rule. On Figure~\ref{subfig:approx_source_misclas}, we compare upper confidence bounds on the source risk due to different concentration results, against the size of the source holdout set. Variance-adaptive upper confidence bounds---predictably-mixed empirical-Bernstein (PM-EB) and betting-based (see Appendix~\ref{appsec:concentration})---are much tighter than the non-adaptive Hoeffding's bound. Going forward, we use a source holdout set of $1000$ points to compute upper confidence bound on the source risk, where $\varepsilon_{\text{appr}}$ from~\eqref{eq:U_s_definition} is around 0.025.

\begin{figure*}[ht]
    \centering
    \begin{subfigure}{0.45\textwidth}
        \centering
        \includegraphics[width=\textwidth]{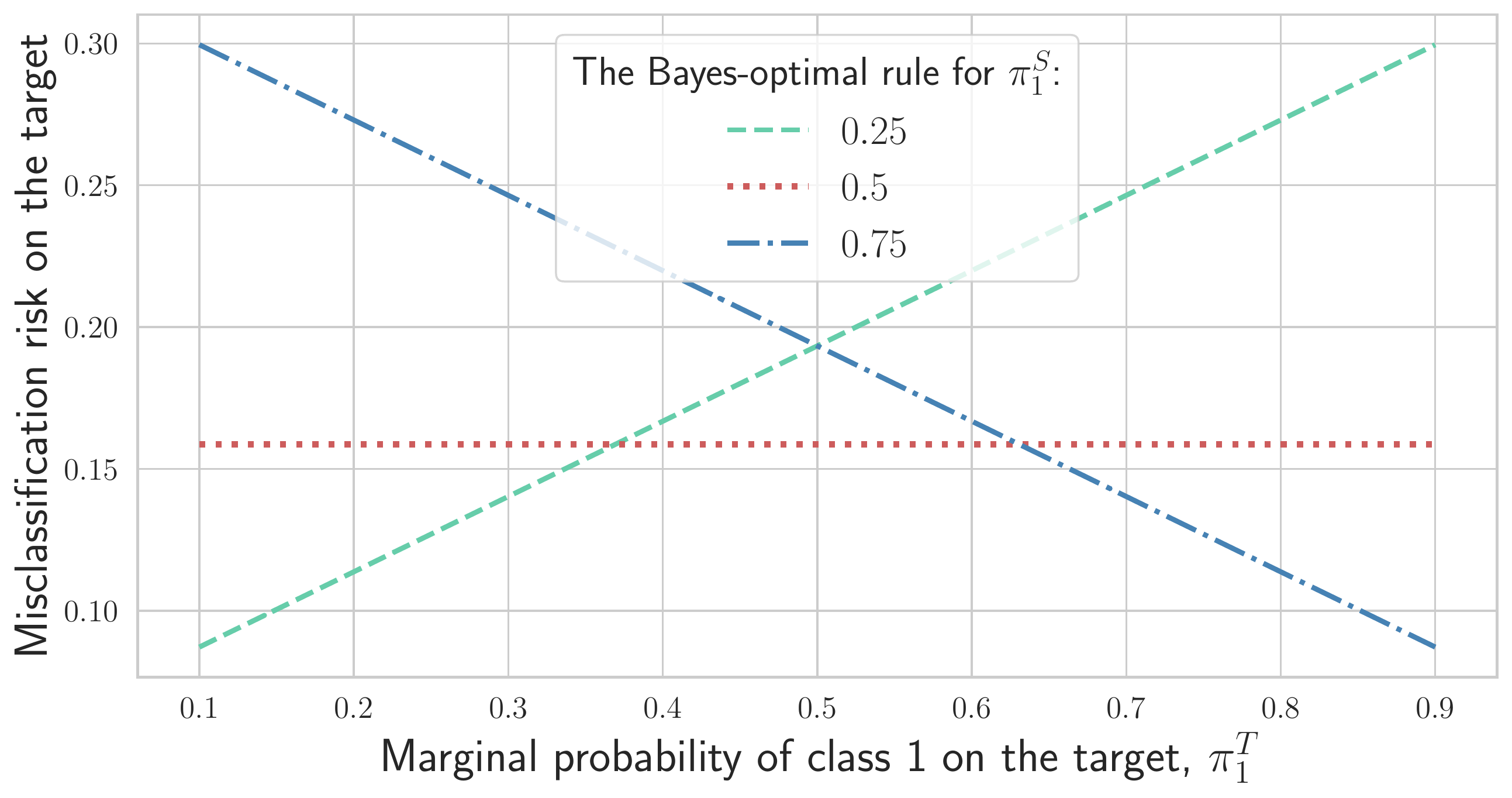}
    \caption{}
    \label{subfig:risk_bayes_not_well_sep}
    \end{subfigure}%
    ~ 
    \begin{subfigure}{0.45\textwidth}
        \centering
        \includegraphics[width=\textwidth]{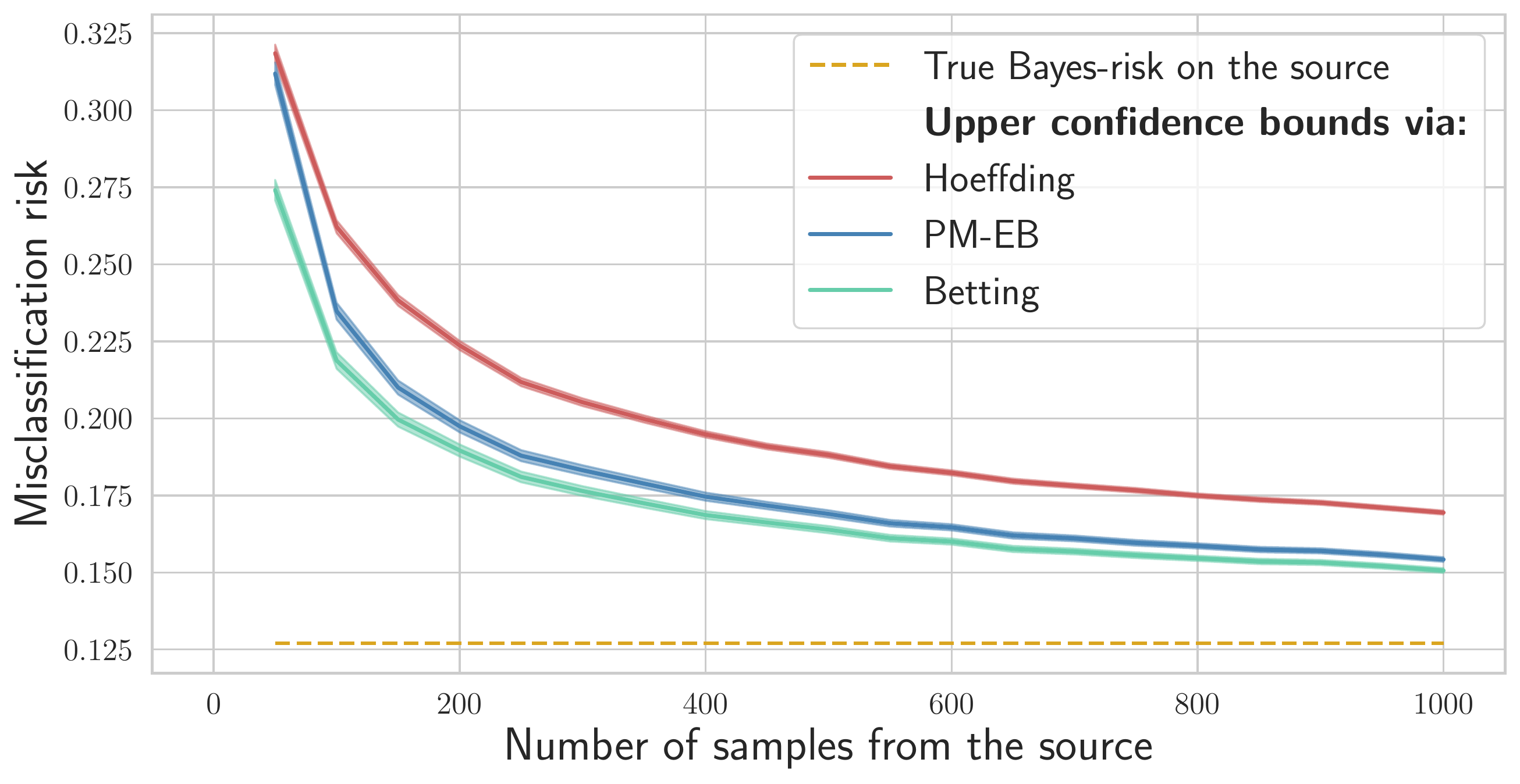}
    \caption{}
    \label{subfig:approx_source_misclas}
    \end{subfigure}
    \caption{(a) The misclassification risk on the target of the Bayes-optimal predictors for three values of $\pi_1^S$. Notice that label shift does not necessarily lead to an increase in the risk. (b) Upper confidence bounds $\widehat{U}_S(f)$ on the misclassification risk on the source obtained via several possible concentration results. For each sample size, the results are aggregated over 1000 random data draws. The variance-adaptive confidence bounds (predictably-mixed empirical-Bernstein and the betting-based one) are much tighter when compared against the non-adaptive one.}
    \label{fig:sim_data_risk}
\end{figure*}

Next, we fix $\varepsilon_{\text{tol}}=0.05$, i.e., we treat a 5\% drop in accuracy as significant. For 20 values of $\pi_1^T$, evenly spaced in the interval $[0.1,0.9]$, we sample 40 batches of 50 data points from the target distribution. On Figure~\ref{subfig:label_shift_number_of_rejections}, we track the proportion of null rejections after repeating the process 250 times. Note that here stronger label shift hurts accuracy more. On Figure~\ref{subfig:label_shift_number_of_samples}, we illustrate average size of a sample from the target needed to reject the null. The results confirm that tighter bounds yield better detection procedures, with the most powerful test utilizing the betting-based bounds~\citep{ws2020est_means}. A similar analysis for the Brier score~\citep{brier1950} as a target metric is presented in Appendix~\ref{appsec:sim_label_shift}. We further study the performance of the framework under the covariate shift setting (Appendix~\ref{appsec:cov_shift}).

\begin{figure*}[ht]
    \centering
    % \begin{subfigure}{0.45\textwidth}
    %     \centering
    %     \includegraphics[width=\textwidth]{}
    % \caption{}
    % \label{subfig:label_shift_number_of_rejections}
    % \end{subfigure}%
    % ~ 
        \begin{subfigure}{0.45\textwidth}
        \centering
        \includegraphics[width=\textwidth]{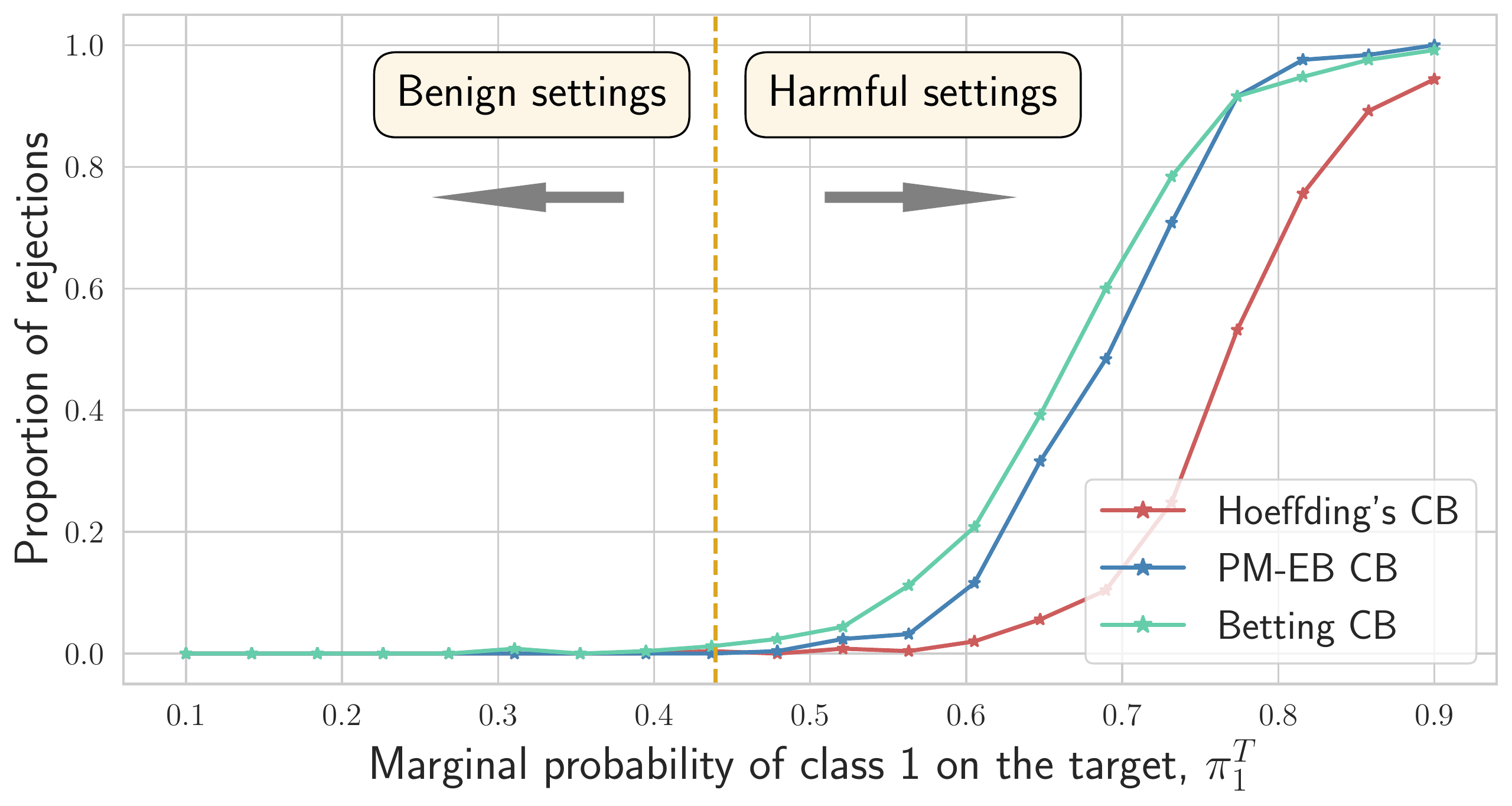}
    \caption{}
    \label{subfig:label_shift_number_of_rejections}
    \end{subfigure}%
    ~ 
     \begin{subfigure}{0.45\textwidth}
        \centering
        \includegraphics[width=\textwidth]{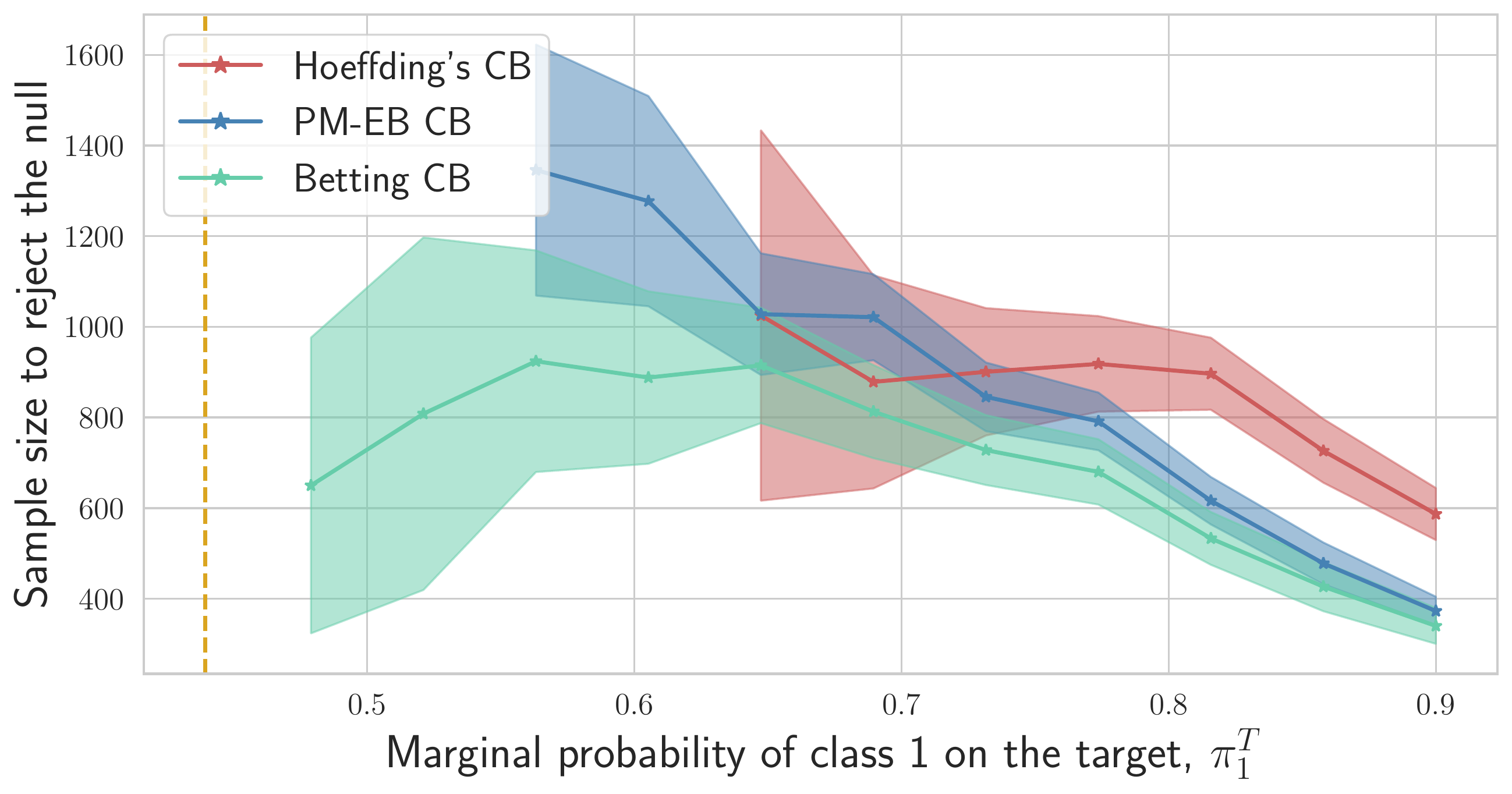}
    \caption{}
    \label{subfig:label_shift_number_of_samples}
    \end{subfigure}%
     \caption{(a) Proportion of null rejections when testing for an increase in the misclassification risk after processing 2000 samples from a shifted distribution. The vertical dashed yellow line separates null (benign) and alternative (harmful) settings. Testing procedures that rely on variance-adaptive confidence bounds (CBs) have more power. (b) Average sample size from the target that was needed to reject the null. Tighter concentration results allow to raise an alarm after processing less samples. }
    % \label{fig:label_shift_number_of_rejections}
\end{figure*}

\paragraph{Tracking beyond the i.i.d.\ setting (distribution \emph{drift}).} Here we consider testing for an increase in the running risk (\eqref{eq:hyp_test_running_risk}). First, we fix $\pi_1^T = 0.75$ and keep the data generation pipeline as before, that is, the target data are still sampled in an i.i.d.\ fashion. We compare lower confidence bounds on the target risk studied before against the conjugate-mixture empirical-Bernstein (CM-EB) bound~\citep{howard2020time}. We note that this bound on the running mean of the random variables does not have a closed-form and has to be computed numerically. On Figure~\ref{subfig:label_shift_iid_vs_non_iid_misclas}, we illustrate that the lower confidence bounds based on betting are generally tighter only for a small number of samples. Similar results hold for the Brier score as a target metric (see Appendix~\ref{appsubsec:brier_sim}).

Next, we lift the i.i.d.\ assumption by modifying the data generation pipeline: starting with $\pi^T_1=0.25$, we increase $\pi^T_1$ by 0.1 after sampling each 200 instances, until it reaches the value 0.85. It makes CM-EB the only valid lower bound on the running risk on the target domain. The results of running the framework for this setting are presented on Figure~\ref{subfig:ls_gradual_drift}.

\begin{figure*}[ht]
    \centering
    \begin{subfigure}{0.45\textwidth}
        \centering
        \includegraphics[width=\textwidth]{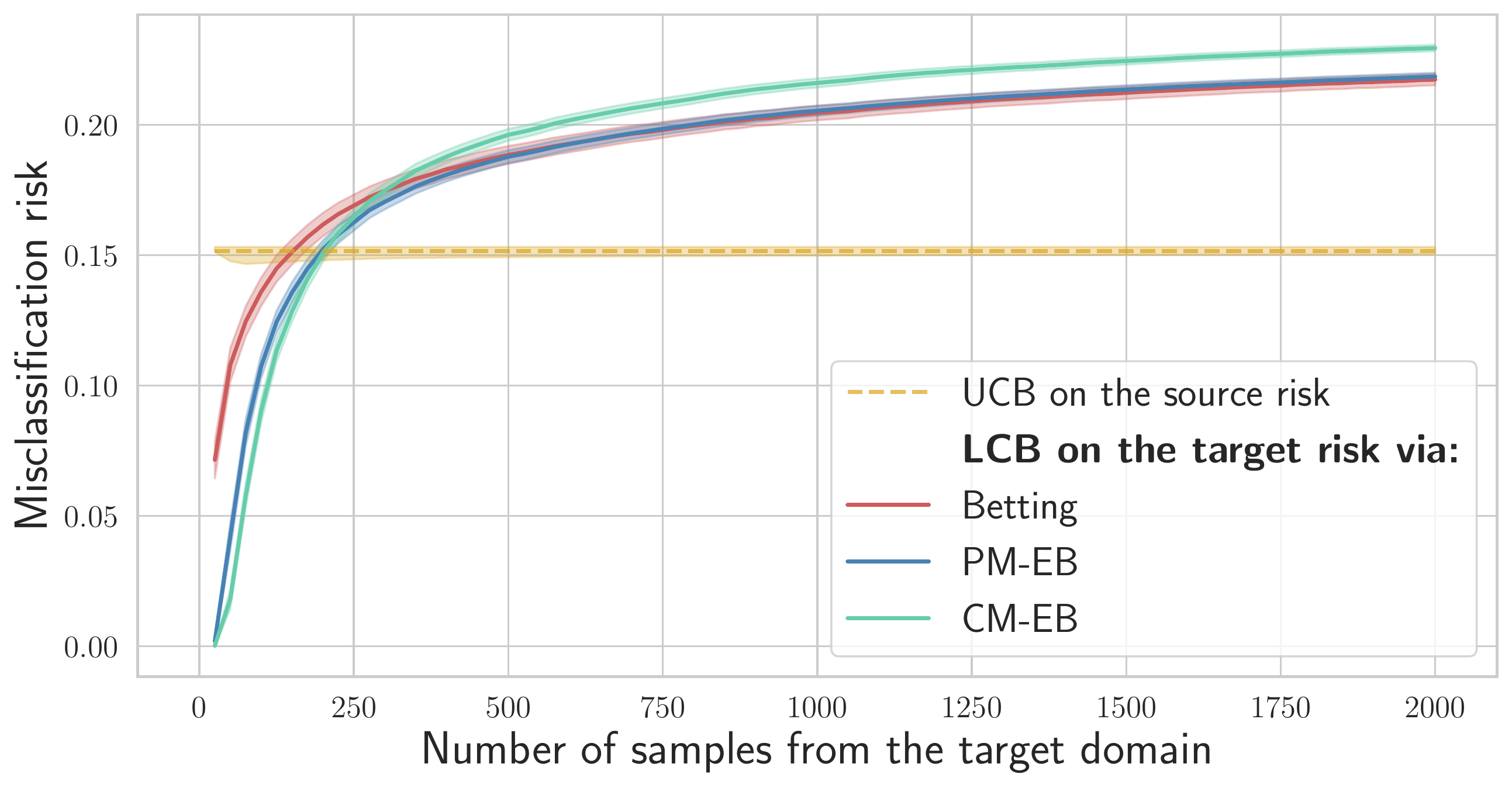}
    \caption{}
    \label{subfig:label_shift_iid_vs_non_iid_misclas}
    \end{subfigure}%
    ~ 
     \begin{subfigure}{0.45\textwidth}
        \centering
        \includegraphics[width=\textwidth]{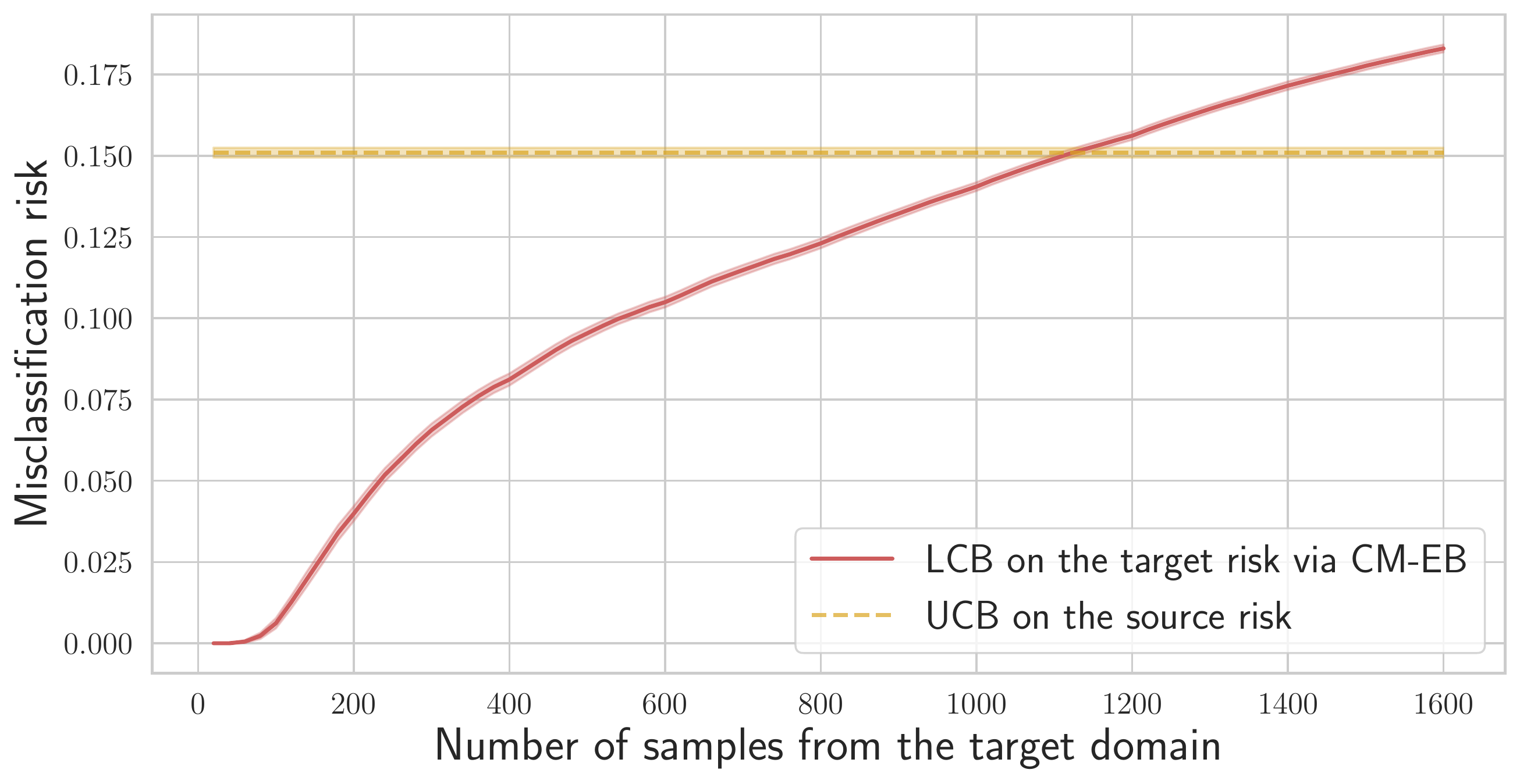}
    \caption{}
    \label{subfig:ls_gradual_drift}
    \end{subfigure}%
     \caption{(a) Different lower confidence bounds (LCB) on the target risk under the i.i.d.\ assumption. Betting-based LCB is only tighter than conjugate-mixture empirical-Bernstein (CM-EB) for a small number of samples. (b) Under distribution drift, only CM-EB performs estimation of the running risk. The resulting test consistently detects a harmful increase in the running risk.}
    %  (a) Different lower confidence bounds on the target risk when the i.i.d. assumption is satisfied. The horizontal dashed yellow line indicated an upper confidence bound $\widehat{U}_S(f)$ on the misclassification risk on the source. (b) Running the testing framework when the target distribution gradually drifts, and thus only the independence assumption is satisfied. Procedure based on conjugate-mixture empirical-Bernstein consistently detects a harmful increase in the running risk.}
    % \label{fig:label_shift_iid_vs_non_iid}
\end{figure*}

\subsection{Real data}\label{subsec:real_data_exp}

In deep learning, out-of-distribution robustness is often assessed based on a model performance gap between the original data (used for training) and data to which various perturbations are applied. We focus on two image classification datasets with induced corruptions: MNIST-C~\citep{mu2019mnist} and CIFAR-10-C~\citep{krizhevsky2009learning,hendrycks2019benchmarking}. We illustrate an example of a clean MNIST image on Figure~\ref{subfig:mnist_c_clean} along with its corrupted versions after applying \emph{motion blur}, blur along a random direction (Figure~\ref{subfig:mnist_c_blur}), \emph{translate}, affine transformation along a random direction (Figure~\ref{subfig:mnist_c_translate}), and \emph{zigzag}, randomly
oriented zigzag over an image (Figure~\ref{subfig:mnist_c_zigzag}). For CIFAR-10-C, we consider corrupting original images by applying the \emph{fog} effect with 3 levels of severity as illustrated on the bottom row of Figure~\ref{fig:corrupted_datasets_vis}. For both cases, clean or corrupted test samples are passed as input to networks trained on clean data. While corruptions are visible to the human eye, one might still hope that they will not significantly hurt classification performance. We use the betting-based confidence bounds on the source and conjugate-mixture empirical-Bernstein confidence bounds on the target domain. 

% In Appendix~\ref{appsec:real_data_sims}, we also analyze \texttt{EEG Eye State} dataset from the UCI repository, for which observations are supplemented with the corresponding timestamps. 

\begin{figure*}[ht]
    \centering
    \begin{subfigure}{0.2\textwidth}
        \centering
        \includegraphics[width=\textwidth]{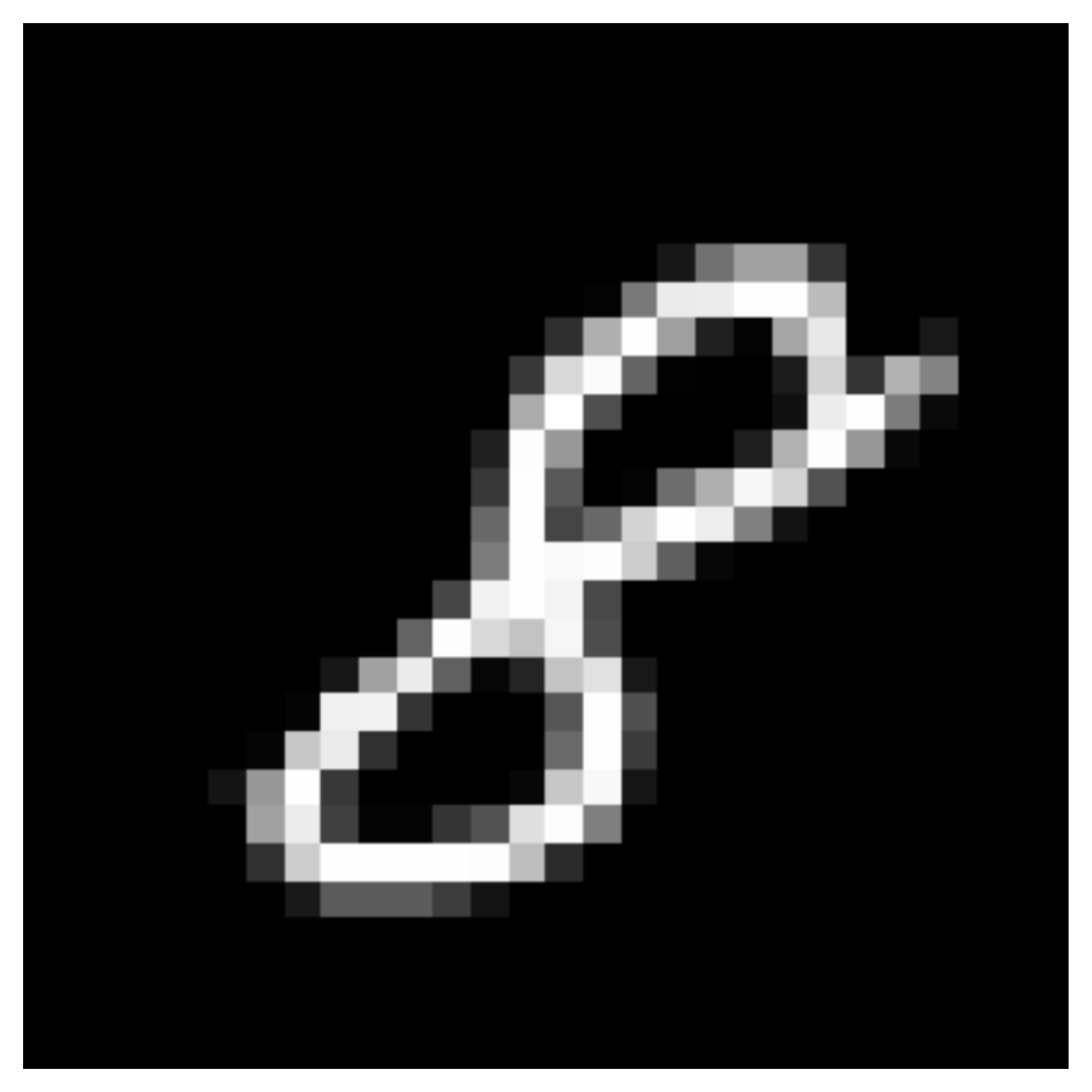}
    \caption{Clean}
    \label{subfig:mnist_c_clean}
    \end{subfigure}%
    ~ 
     \begin{subfigure}{0.2\textwidth}
         \centering
         \includegraphics[width=\textwidth]{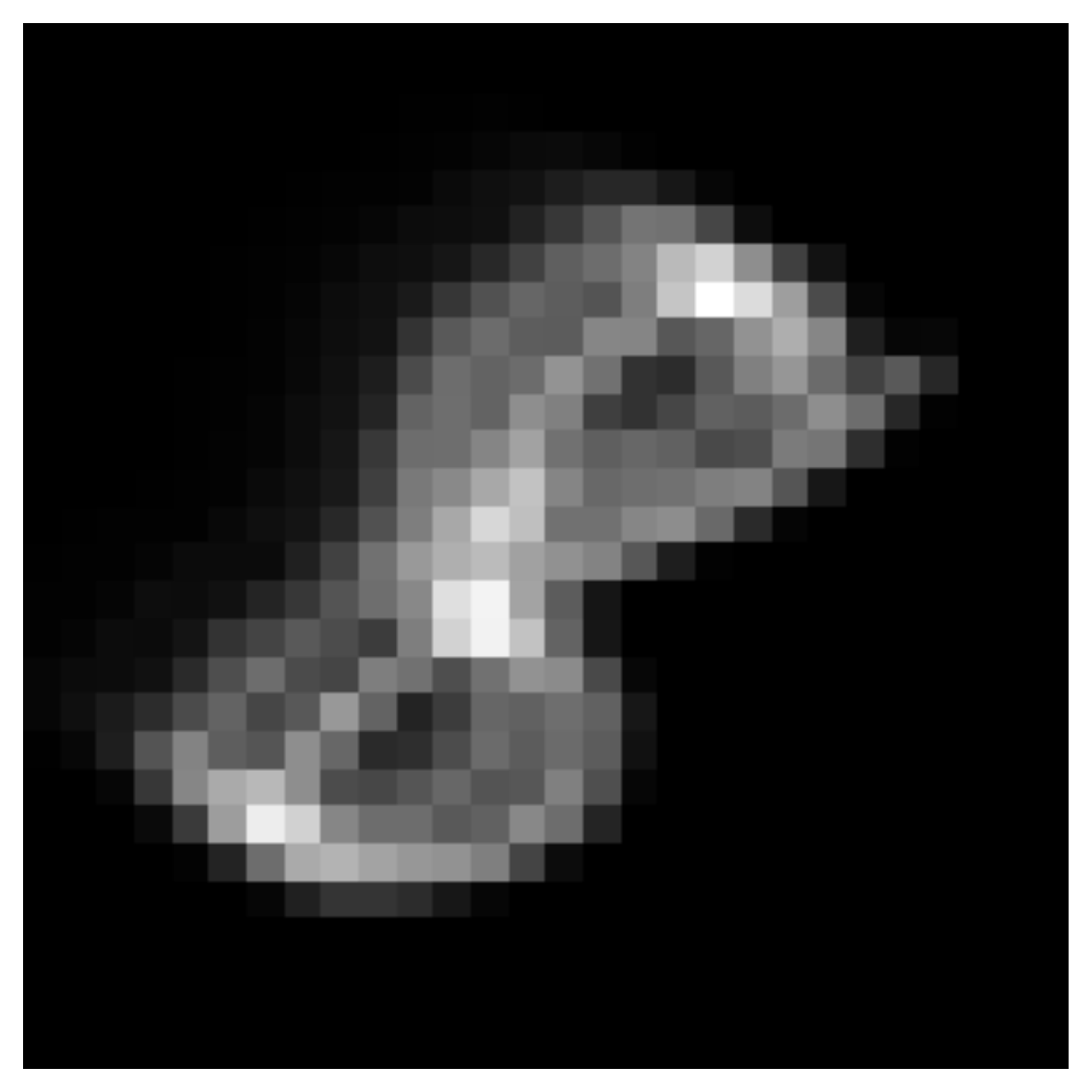}
     \caption{Motion blur}
     \label{subfig:mnist_c_blur}
     \end{subfigure}
         ~ 
     \begin{subfigure}{0.2\textwidth}
         \centering
         \includegraphics[width=\textwidth]{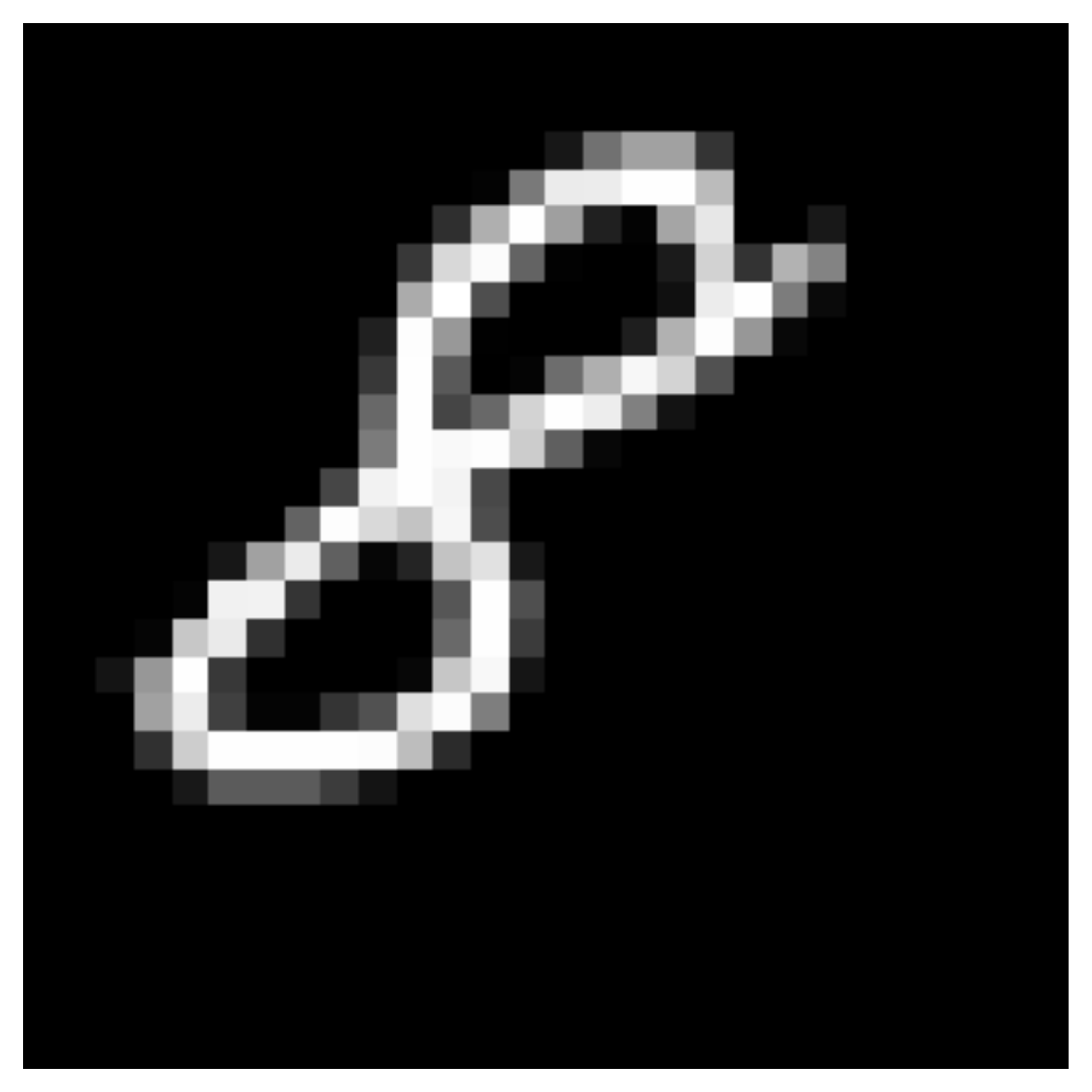}
     \caption{Translate}
     \label{subfig:mnist_c_translate}
     \end{subfigure}
        %  ~ 
     \begin{subfigure}{0.2\textwidth}
         \centering
         \includegraphics[width=\textwidth]{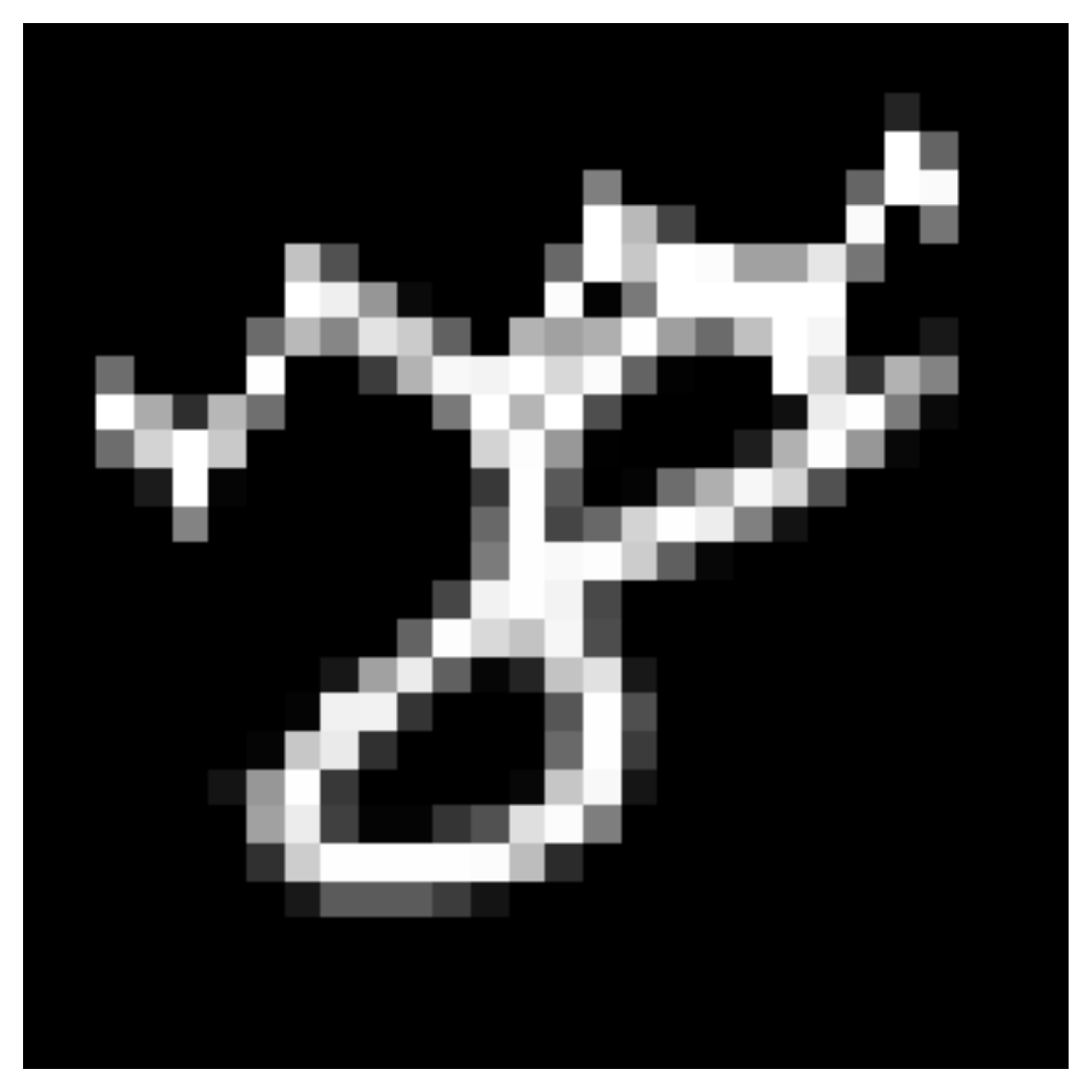}
     \caption{Zigzag}
     \label{subfig:mnist_c_zigzag}
     \end{subfigure}
     ~
     \begin{subfigure}{0.2\textwidth}
        \centering
        \includegraphics[width=\textwidth]{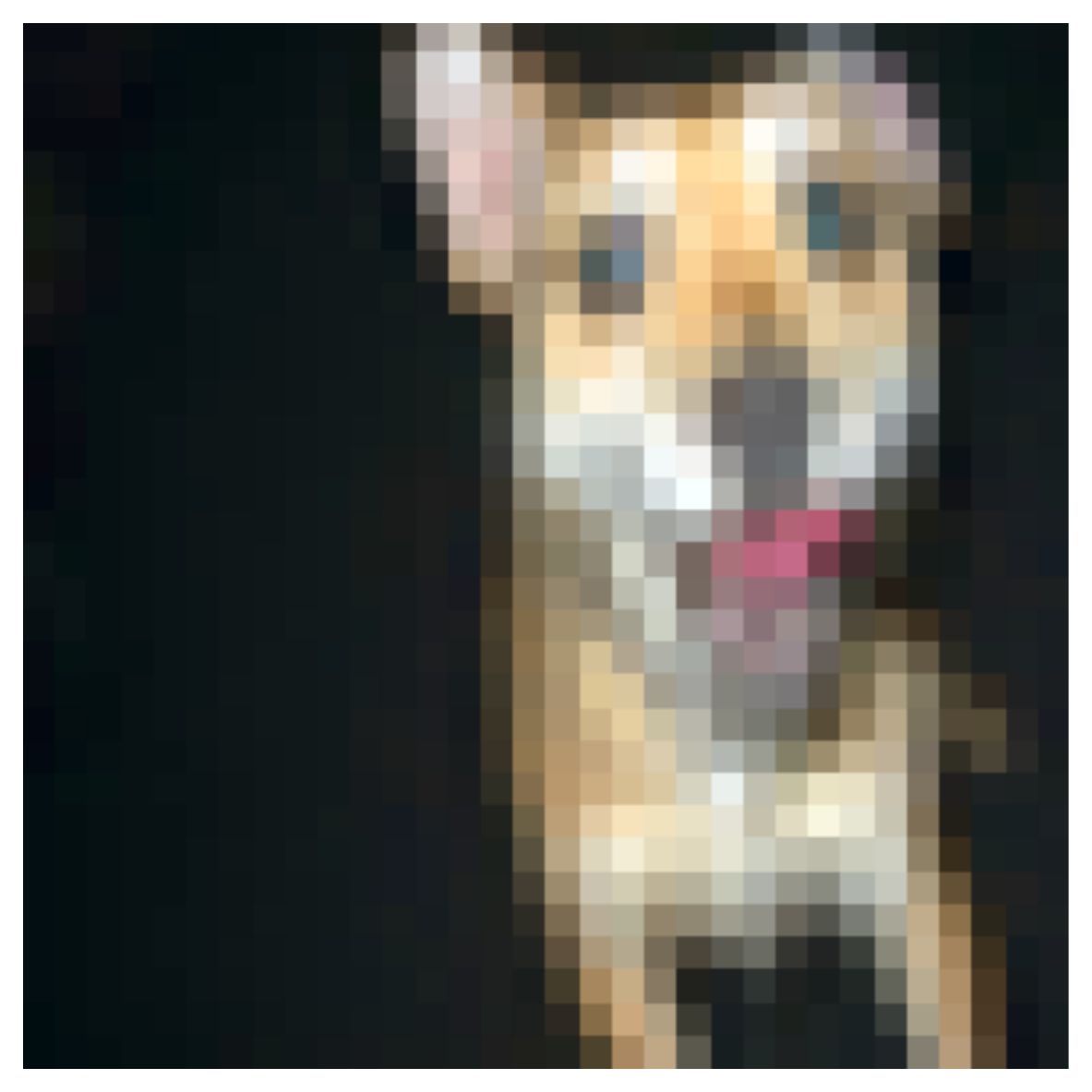}
    \caption{Clean}
    \label{subfig:cifar_clean}
    \end{subfigure}%
    ~ 
     \begin{subfigure}{0.2\textwidth}
         \centering
         \includegraphics[width=\textwidth]{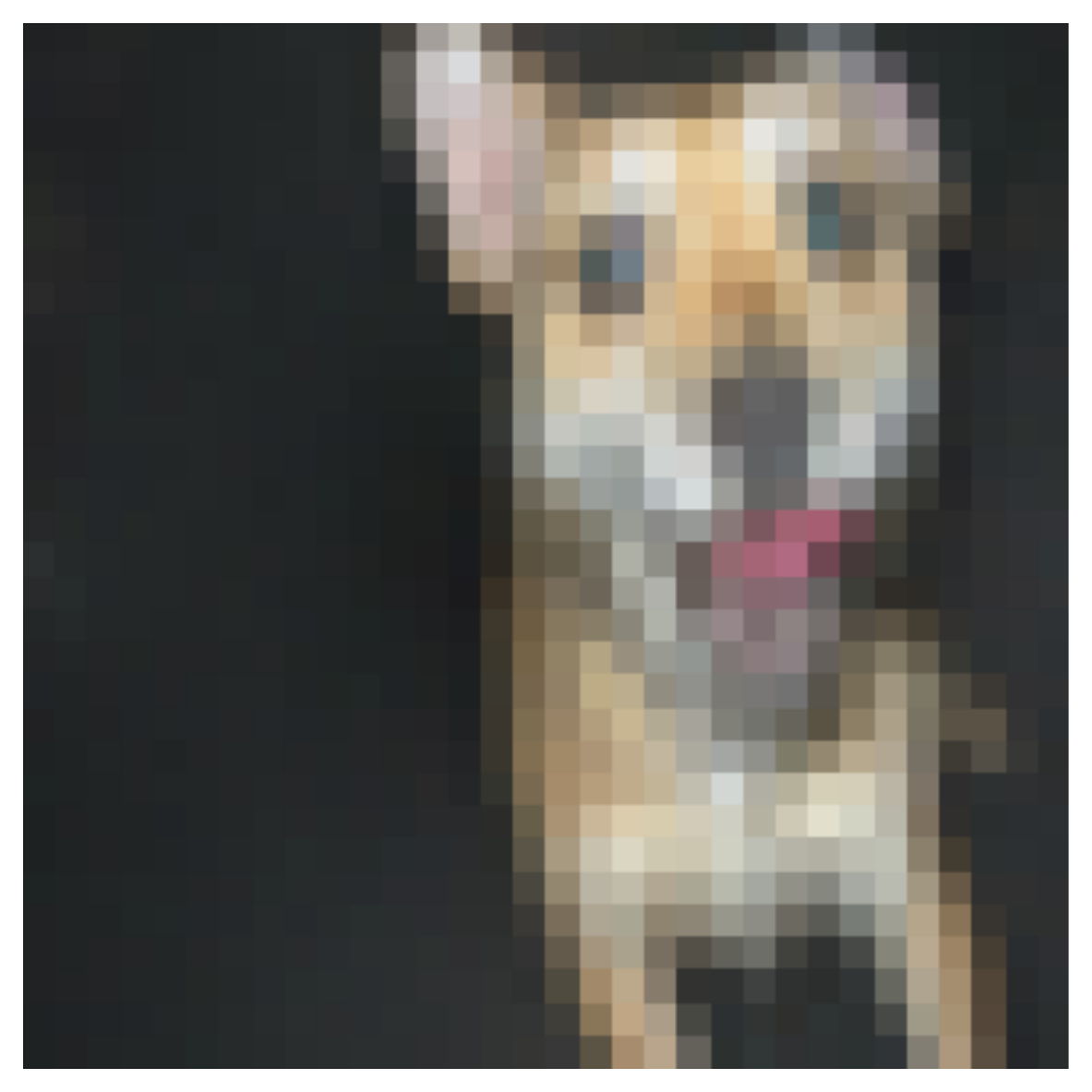}
     \caption{Fog, severity: 1}
     \label{subfig:cifar_fog1}
     \end{subfigure}
         ~ 
     \begin{subfigure}{0.2\textwidth}
         \centering
         \includegraphics[width=\textwidth]{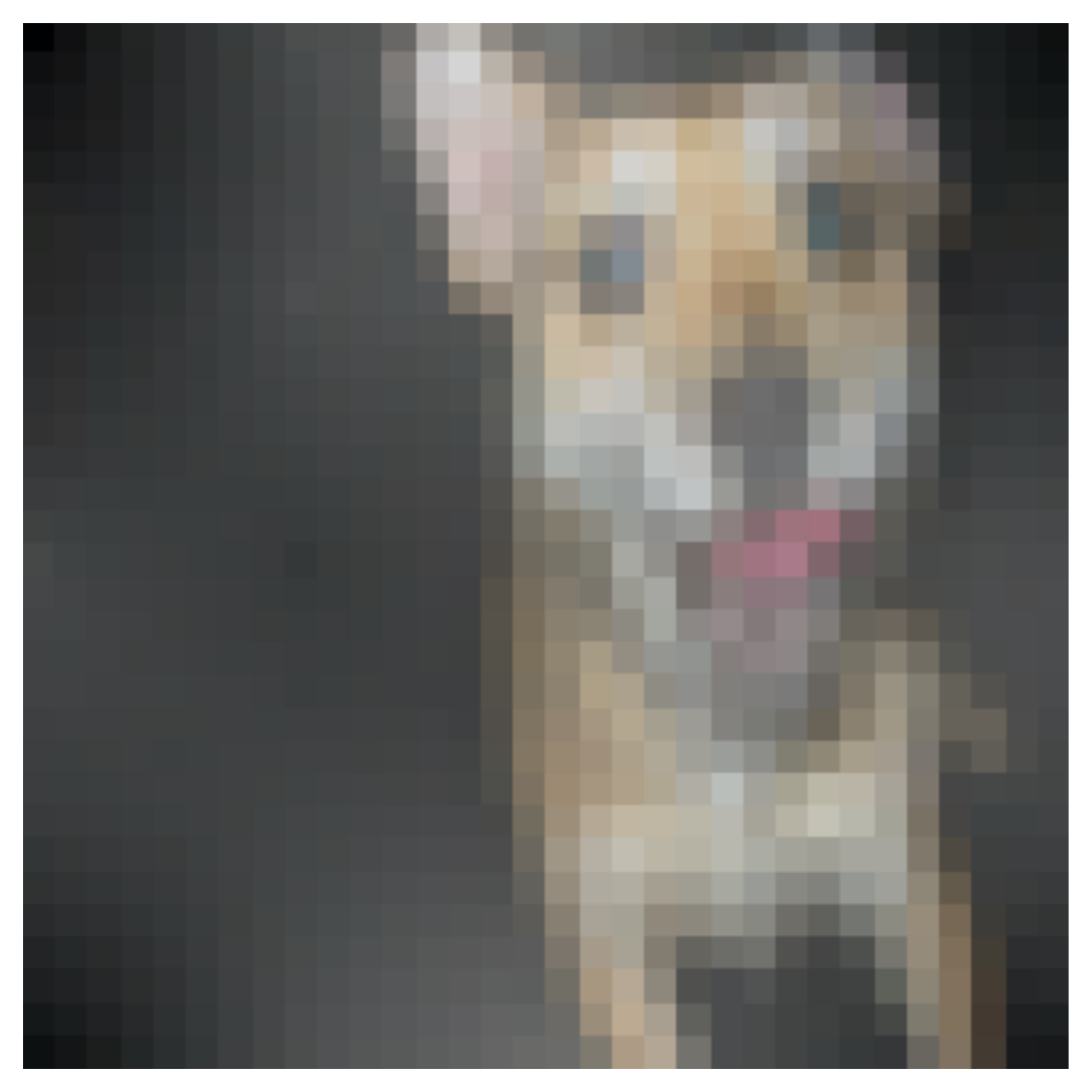}
     \caption{Fog, severity: 3}
     \label{subfig:cifar_fog3}
     \end{subfigure}
        %  ~ 
     \begin{subfigure}{0.2\textwidth}
         \centering
         \includegraphics[width=\textwidth]{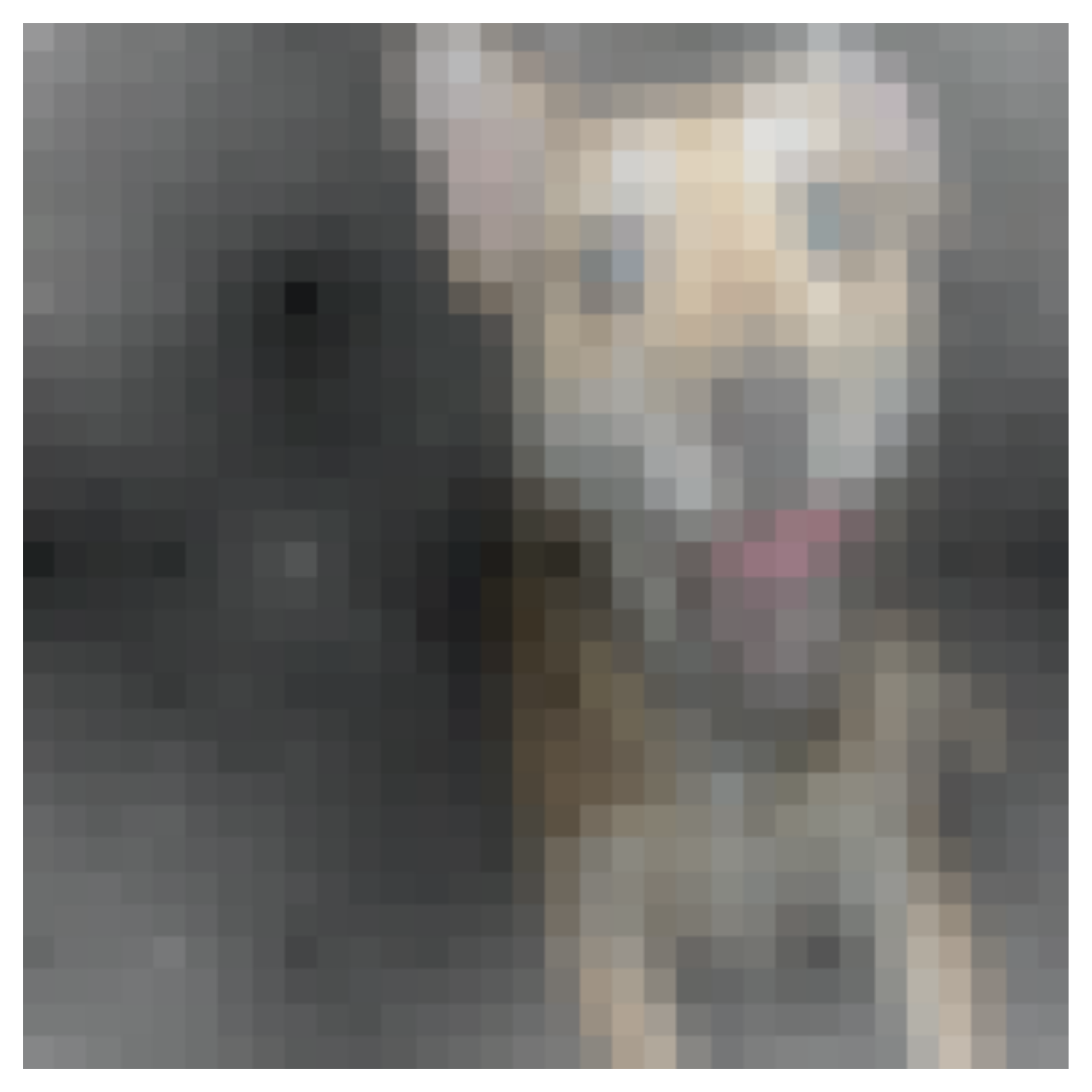}
     \caption{Fog, severity: 5}
     \label{subfig:cifar_fog5}
     \end{subfigure}
     \caption{Examples of MNIST-C ((a)--(d)) and CIFAR-10-C ((e)--(h)) images.}
    \label{fig:corrupted_datasets_vis}
\end{figure*}

\paragraph{Tracking the risk of a point predictor on MNIST-C dataset.} We train a shallow CNN on clean MNIST data and run the framework testing whether the misclassification risk increases by 10\%, feeding the network with data in batches of 50 points either from the original or shifted distributions. Details regarding the network architecture and the training process are given in Appendix~\ref{appsec:real_data_sims}. On Figure~\ref{subfig:mnist_c_trained_once}, we illustrate the results after running the procedure 50 times for each of the settings. The horizontal dashed line defines the rejection threshold that has been computed using the source data: once the lower bound on the target risk (solid lines) exceeds this value, the null hypothesis is rejected. When passing clean MNIST data as input, we do not observe degrading performance. Further, applying different corruptions leads to both benign and harmful shifts. While to the human eye the translate effect is arguably the least harmful one, it is the most harmful to the performance of a network. Such observation is also consistent with findings of~\citet{mu2019mnist} who observe that if trained on clean MNIST data without any data augmentation, CNNs tend to fail under this corruption. We validate this observation by retraining the network several times in Appendix~\ref{appsec:real_data_sims}.

\paragraph{Tracking the risk of a set-valued predictor on CIFAR-10-C dataset.} For high-consequence settings, building accurate models only can be insufficient as it is crucial to quantify uncertainty in predictions. One way to proceed is to output a set of candidate labels for each point as a prediction. The goal could be to cover the correct label of a test point with high probability~\citep{vovk2005algorithmic} or control other notions of risk~\citep{bates2021rcps}. We follow~\citet{bates2021rcps} who design a procedure that uses a holdout set for tuning the parameters of a wrapper built on top of the original model which, under the i.i.d.\ assumption, is guaranteed to have low risk with high probability (see Appendix~\ref{appsec:real_data_sims} for details). Below, we build a wrapper around a ResNet-32 model that controls the miscoverage risk (\eqref{eq:miscov_loss}) at level $0.1$ with probability at least $0.95$. For each run, CIFAR-10 test set is split at random into three folds used for: (a) learning a wrapper (1000 points), (b) estimating upper confidence bound on the miscoverage risk on the source (1000 points), and (c) evaluation purposes on either clean or corrupted images. We take $\varepsilon_{\text{tol}}=0.05$, that is, 5\% drop in coverage is treated as significant. Figure~\ref{subfig:cifar10_trained_once} illustrates that only the most intense level of fog is consistently harmful to coverage. We also consider setting a lower prescribed miscoverage level (0.05) for the set-valued predictor (see Appendix~\ref{appsec:real_data_sims}). When larger prediction sets are produced, adding fog to images becomes less harmful.

\begin{figure*}[ht]
    \centering
    \begin{subfigure}{0.45\textwidth}
        \centering
        \includegraphics[width=\textwidth]{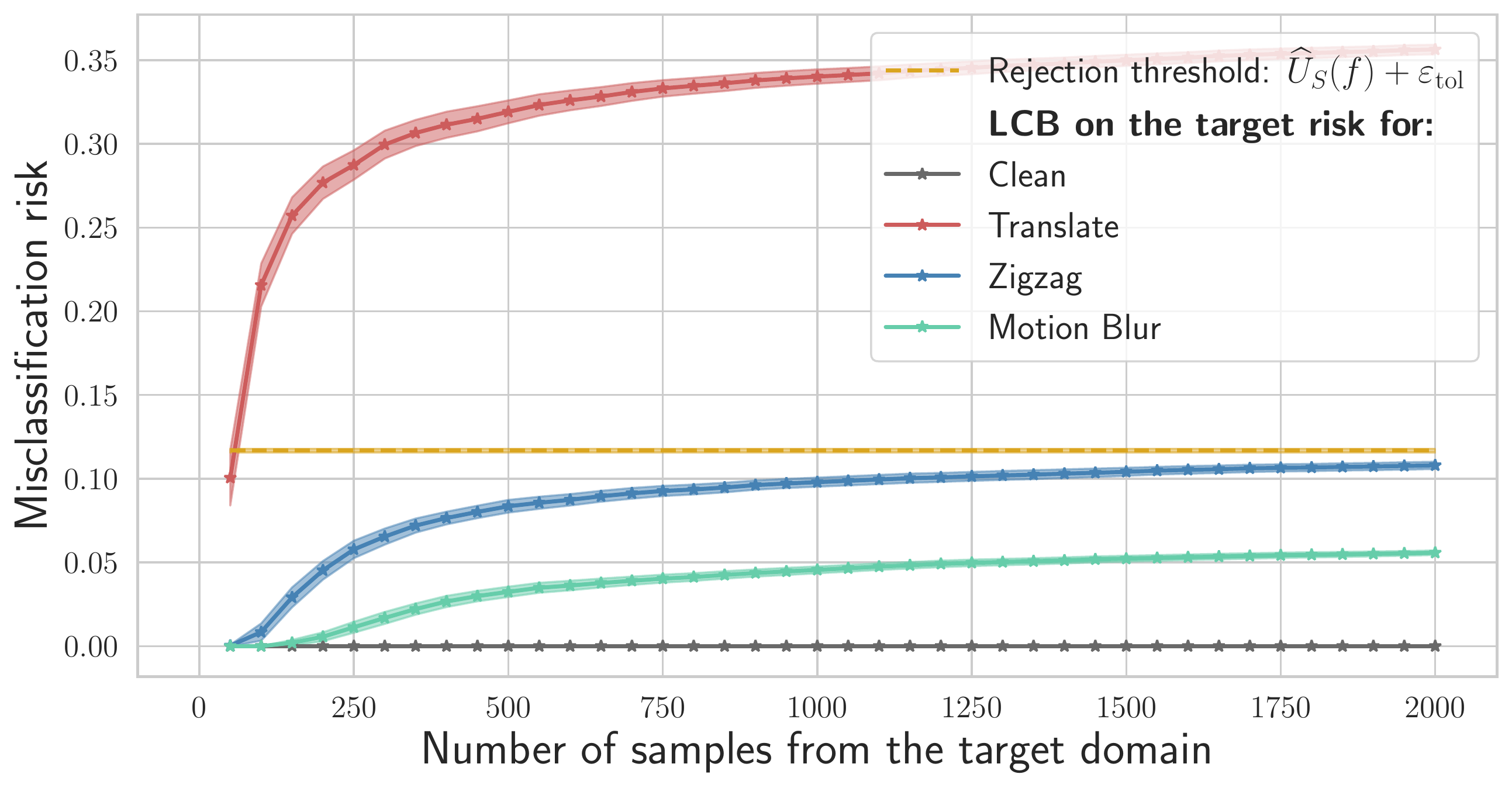}
    \caption{}
    \label{subfig:mnist_c_trained_once}
    \end{subfigure}%
    ~ 
     \begin{subfigure}{0.45\textwidth}
         \centering
         \includegraphics[width=\textwidth]{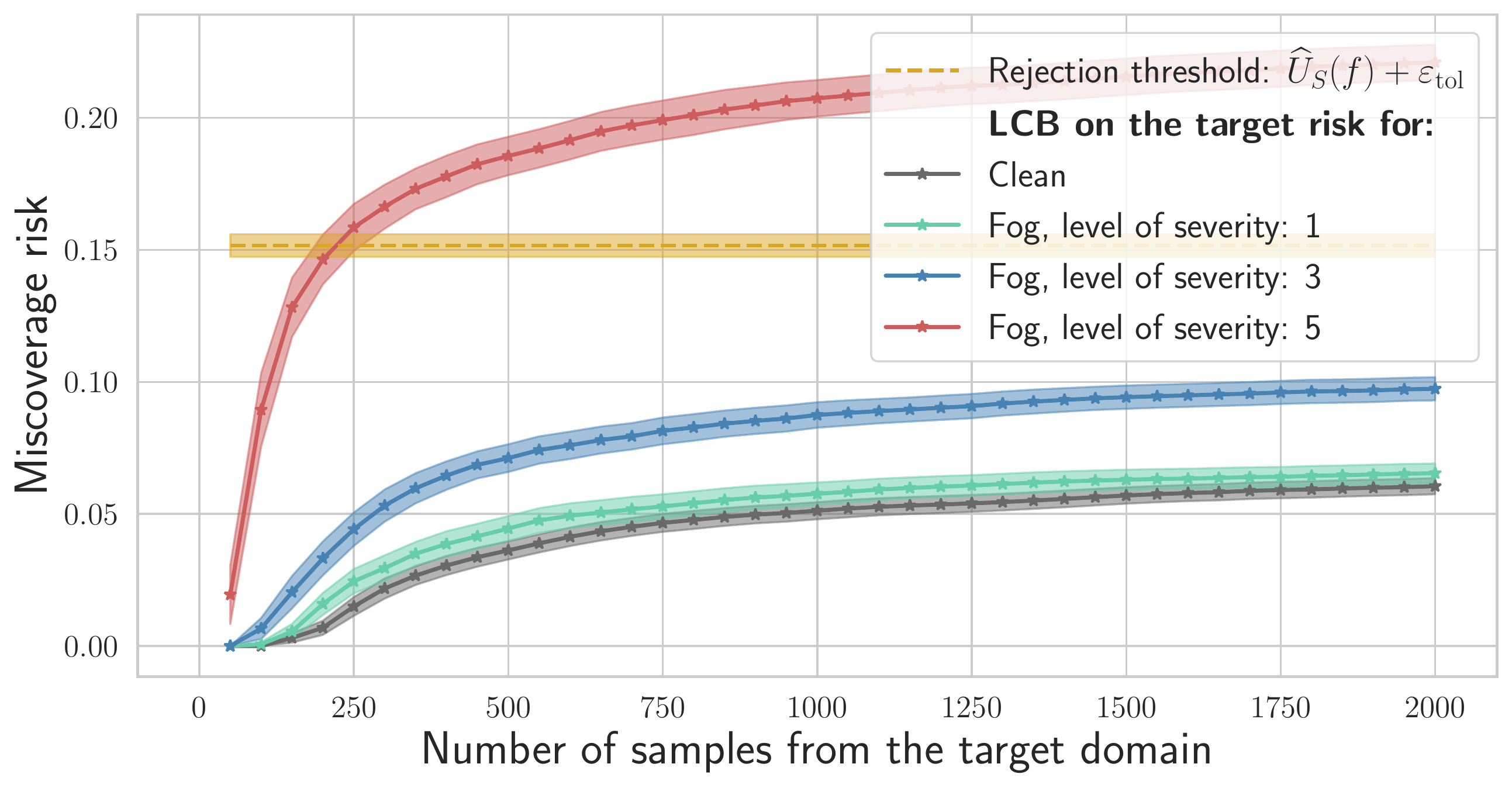}
     \caption{}
    \label{subfig:cifar10_trained_once}
     \end{subfigure}
     \caption{ (a) Performance of the testing framework on MNIST-C dataset. Only the translation effect is consistently harmful to the classification performance of a CNN trained on clean data. (b) Performance of the testing framework on CIFAR-10-C dataset. Only the most severe version of the fog lead to a significant degradation in performance measured by a decrease in coverage of a set-valued predictor trained on top of a model trained on clean data. }
    \label{fig:mnist_orig_vs_cor_runs}
\end{figure*}

\section{Conclusion}

An important component of building reliable machine learning systems is making them alarm a user when potentially unsafe behavior is observed, instead of allowing them to fail silently. Ideally, a warning should be displayed when critical changes affecting model performance are present, e.g., a significant degradation of the target performance metrics, like accuracy or calibration. In this work, we considered one particular failure scenario of deployed models---presence of distribution shifts. Relying solely on point estimators of the performance metrics ignores uncertainty in the evaluation, and thus fails to represent a theoretically grounded approach. We developed a set of tools for deciding whether the performance of a model on the test data becomes significantly worse than the performance on the training data in a data-adaptive way. The proposed framework based on performing sequential estimation requires observing true labels for test data (possibly, in a delayed fashion). Across various types of distribution shifts considered in this work, it demonstrated promising empirical performance for differentiating between harmful and benign ones.

\paragraph{Acknowledgements} The authors thank Ian Waudby-Smith for fruitful discussions and the anonymous ICLR 2022 reviewers for comments on an early version of this paper. The authors acknowledge support from NSF DMS 1916320 and an ARL IoBT CRA grant. Research reported in this paper was sponsored in part by the DEVCOM Army Research Laboratory under Cooperative Agreement W911NF-17-2-0196 (ARL IoBT CRA). The views and conclusions contained in this document are those of the authors and should not be interpreted as representing the official policies, either expressed or implied, of the Army Research Laboratory or the U.S. Government. The U.S. Government is authorized to reproduce and distribute reprints for Government purposes notwithstanding any copyright notation herein.

\paragraph{Ethics Statement.} We do not foresee any negative impacts of this work in and of itself; there may be societal impacts of the underlying classification or regression task but these are extraneous to the aims and contributions of this paper.

\paragraph{Reproducibility statement.} In order to ensure reproducibility of the results in this paper, we include the following to the supplementary materials: (a) relevant source code for all simulations that have been performed, (b) clear explanations of all data generation pipelines and preprocessing steps that have been performed, (c) complete proofs for all established theoretical results.

\bibliography{refs}

\begin{thebibliography}{27}
\providecommand{\natexlab}[1]{#1}
\providecommand{\url}[1]{\texttt{#1}}
\expandafter\ifx\csname urlstyle\endcsname\relax
  \providecommand{\doi}[1]{doi: #1}\else
  \providecommand{\doi}{doi: \begingroup \urlstyle{rm}\Url}\fi

\bibitem[Bates et~al.(2021)Bates, Angelopoulos, Lei, Malik, and
  Jordan]{bates2021rcps}
Stephen Bates, Anastasios~Nikolas Angelopoulos, Lihua Lei, Jitendra Malik, and
  Michael~I. Jordan.
\newblock Distribution-free, risk-controlling prediction sets.
\newblock \emph{Journal of the ACM}, 2021.

\bibitem[Brier(1950)]{brier1950}
Glenn~W. Brier.
\newblock Verification of forecasts expressed in terms of probability.
\newblock \emph{Monthly Weather Review}, 1950.

\bibitem[Ganin et~al.(2016)Ganin, Ustinova, Ajakan, Germain, Larochelle,
  Laviolette, March, and Lempitsky]{ganin2016domain}
Yaroslav Ganin, Evgeniya Ustinova, Hana Ajakan, Pascal Germain, Hugo
  Larochelle, Fran{\c{c}}ois Laviolette, Mario March, and Victor Lempitsky.
\newblock Domain-adversarial training of neural networks.
\newblock \emph{Journal of Machine Learning Research}, 2016.

\bibitem[Gretton et~al.(2012)Gretton, Borgwardt, Rasch, Sch{{\"o}}lkopf, and
  Smola]{gretton12kernel}
Arthur Gretton, Karsten~M. Borgwardt, Malte~J. Rasch, Bernhard Sch{{\"o}}lkopf,
  and Alexander Smola.
\newblock A kernel two-sample test.
\newblock \emph{Journal of Machine Learning Research}, 2012.

\bibitem[Gupta \& Ramdas(2022)Gupta and Ramdas]{gupta2021top_label}
Chirag Gupta and Aaditya Ramdas.
\newblock Top-label calibration and multiclass-to-binary reductions.
\newblock In \emph{International Conference on Learning Representations}, 2022.

\bibitem[Gupta et~al.(2020)Gupta, Podkopaev, and Ramdas]{gupta2020df_calib}
Chirag Gupta, Aleksandr Podkopaev, and Aaditya Ramdas.
\newblock Distribution-free binary classification: prediction sets, confidence
  intervals and calibration.
\newblock In \emph{Advances in Neural Information Processing Systems}, 2020.

\bibitem[Hendrycks \& Dietterich(2019)Hendrycks and
  Dietterich]{hendrycks2019benchmarking}
Dan Hendrycks and Thomas Dietterich.
\newblock Benchmarking neural network robustness to common corruptions and
  perturbations.
\newblock In \emph{International Conference on Learning Representations}, 2019.

\bibitem[Howard et~al.(2021)Howard, Ramdas, McAuliffe, and
  Sekhon]{howard2020time}
Steven~R. Howard, Aaditya Ramdas, Jon McAuliffe, and Jasjeet Sekhon.
\newblock Time-uniform, nonparametric, nonasymptotic confidence sequences.
\newblock \emph{The Annals of Statistics}, 2021.

\bibitem[Hu \& Lei(2020)Hu and Lei]{hu2020df_test}
Xiaoyu Hu and Jing Lei.
\newblock A distribution-free test of covariate shift using conformal
  prediction.
\newblock \emph{arXiv preprint: 2010.07147}, 2020.

\bibitem[Kamulete(2021)]{kamulete2021test}
Vathy~M. Kamulete.
\newblock Test for non-negligible adverse shifts.
\newblock \emph{arXiv preprint: 2107.02990}, 2021.

\bibitem[Krizhevsky(2009)]{krizhevsky2009learning}
Alex Krizhevsky.
\newblock Learning multiple layers of features from tiny images.
\newblock Technical report, University of Toronto, 2009.

\bibitem[Lorden(1971)]{lorden1971procedures}
G.~Lorden.
\newblock Procedures for reacting to a change in distribution.
\newblock \emph{The Annals of Mathematical Statistics}, 1971.

\bibitem[Mu \& Gilmer(2019)Mu and Gilmer]{mu2019mnist}
Norman Mu and Justin Gilmer.
\newblock {MNIST-C}: A robustness benchmark for computer vision.
\newblock \emph{arXiv preprint:1906.02337}, 2019.

\bibitem[Podkopaev \& Ramdas(2021)Podkopaev and Ramdas]{podkopaev2021label}
Aleksandr Podkopaev and Aaditya Ramdas.
\newblock Distribution free uncertainty quantification for classification under
  label shift.
\newblock \emph{Conference on Uncertainty in Artificial Intelligence}, 2021.

\bibitem[Quionero-Candela et~al.(2009)Quionero-Candela, Sugiyama, Schwaighofer,
  and Lawrence]{quionero2009dataset_shift}
Joaquin Quionero-Candela, Masashi Sugiyama, Anton Schwaighofer, and Neil~D.
  Lawrence.
\newblock \emph{Dataset Shift in Machine Learning}.
\newblock The MIT Press, 2009.

\bibitem[Rabanser et~al.(2019)Rabanser, G\"{u}nnemann, and
  Lipton]{rabanser2019failing}
Stephan Rabanser, Stephan G\"{u}nnemann, and Zachary Lipton.
\newblock Failing loudly: An empirical study of methods for detecting dataset
  shift.
\newblock In \emph{Advances in Neural Information Processing Systems}, 2019.

\bibitem[Romano et~al.(2020)Romano, Sesia, and
  Candès]{romano2020classification}
Yaniv Romano, Matteo Sesia, and Emmanuel~J. Candès.
\newblock Classification with valid and adaptive coverage.
\newblock In \emph{Advances in Neural Information Processing Systems}, 2020.

\bibitem[Saerens et~al.(2002)Saerens, Latinne, and
  Decaestecker]{saerens2002adjusting}
Marco Saerens, Patrice Latinne, and Christine Decaestecker.
\newblock Adjusting the outputs of a classifier to new a priori probabilities:
  A simple procedure.
\newblock \emph{Neural Computation}, 2002.

\bibitem[Shimodaira(2000)]{shimodaira2000improving}
Hidetoshi Shimodaira.
\newblock Improving predictive inference under covariate shift by weighting the
  log-likelihood function.
\newblock \emph{Journal of Statistical Planning and Inference}, 2000.

\bibitem[Tibshirani et~al.(2019)Tibshirani, Foygel~Barber, Candes, and
  Ramdas]{tibs2019conf}
Ryan~J Tibshirani, Rina Foygel~Barber, Emmanuel Candes, and Aaditya Ramdas.
\newblock Conformal prediction under covariate shift.
\newblock In \emph{Advances in Neural Information Processing Systems}, 2019.

\bibitem[Vovk(2020)]{vovk2020concept}
Vladimir Vovk.
\newblock Testing for concept shift online.
\newblock \emph{arXiv preprint: 2012.14246}, 2020.

\bibitem[Vovk(2021)]{vovk2020testing}
Vladimir Vovk.
\newblock Testing randomness online.
\newblock \emph{Statistical Science}, 2021.

\bibitem[Vovk et~al.(2005)Vovk, Gammerman, and Shafer]{vovk2005algorithmic}
Vladimir Vovk, Alex Gammerman, and Glenn Shafer.
\newblock \emph{Algorithmic learning in a random world}.
\newblock Springer, 2005.

\bibitem[Vovk et~al.(2021)Vovk, Petej, Nouretdinov, Ahlberg, Carlsson, and
  Gammerman]{vovk2021retrain}
Vladimir Vovk, Ivan Petej, Ilia Nouretdinov, Ernst Ahlberg, Lars Carlsson, and
  Alex Gammerman.
\newblock Retrain or not retrain: conformal test martingales for change-point
  detection.
\newblock In \emph{Symposium on Conformal and Probabilistic Prediction and
  Applications}, 2021.

\bibitem[Wald(1945)]{wald1945seq_tests}
Abraham Wald.
\newblock Sequential tests of statistical hypotheses.
\newblock \emph{The Annals of Mathematical Statistics}, 1945.

\bibitem[Waudby-Smith \& Ramdas(2021)Waudby-Smith and Ramdas]{ws2020est_means}
Ian Waudby-Smith and Aaditya Ramdas.
\newblock Estimating means of bounded random variables by betting.
\newblock \emph{arXiv preprint: 2010:09686}, 2021.

\bibitem[Wu et~al.(2019)Wu, Winston, Kaushik, and Lipton]{wu2019domain}
Yifan Wu, Ezra Winston, Divyansh Kaushik, and Zachary Lipton.
\newblock Domain adaptation with asymmetrically-relaxed distribution alignment.
\newblock In \emph{International Conference on Machine Learning}, 2019.

\end{thebibliography}
\bibliographystyle{iclr2022_conference}

\newpage

\appendix

\section{Issues with existing tests for distribution shifts/drifts}\label{appsec:issues_existing_tests}

\subsection{Non-sequential tests have highly inflated false alarm rates when deployed in sequential settings with continuous monitoring}\label{appsubsec:issues_fixed_time}

In this work, we propose a framework that utilizes confidence sequences (CSs), and thus allows for continuous monitoring of model performance. On the other hand, traditional (fixed-time) testing procedures are not valid under sequential settings, unless corrections for multiple testing are performed. First, we illustrate that deploying fixed-time detection procedures under sequential settings necessarily leads to raising false alarms. Then, we illustrate that naive corrections for multiple testing---without taking advantage of the dependence between the tests---lead to losses of power of the resulting procedure.

\paragraph{Deploying fixed-time tests without corrections for multiple testing.} Under the i.i.d.\ setting, our framework reduces to testing whether the means corresponding to two unknown distributions are significantly different. Here we consider a simplified setting: assume that one observes a sequence $Z_1,Z_2,\dots$ of bounded i.i.d.\ random variables, and the goal is to construct a lower confidence bound for the corresponding mean $\mu$. In this case, a natural alternative to confidence sequences is a lower bound obtained by invoking the Central Limit Theorem: 
\begin{equation*}
    \overline{Z}_t-z_{\delta}\cdot \frac{\widehat{\sigma}_t}{\sqrt{t}},
\end{equation*}
where $z_\delta$ is $(1-\delta)$-quantile of the standard Gaussian random variable and $\overline{Z}_t$, $\widehat{\sigma}_t$ denote the sample average and sample standard deviation respectively computed using first $t$ instances $Z_1,\dots, Z_t$. For the study below, we use the critical (significance) level $\delta=0.1$. We sample the data as: $Z_t\sim \mathrm{Ber}(0.6)$, $t=1,2,\dots$, and consider 100 possible sample sizes, evenly spaced between 20 and 1000 on a logarithmic scale. Next, we compare the CLT lower bound with the betting-based one (which underlies the framework proposed in this work) under the following settings:
    \begin{enumerate}
        \item {\bf Fixed-time monitoring.} For a given sample size $t$, we sample the sequence $Z_1,\dots, Z_t$, compute the lower bounds and check whether the true mean is covered only once. For each value of the sample size, we resample data 100 times and record the miscoverage rate, that is, the fraction of times the true mean is miscovered.
        
        \item {\bf Continuous monitoring.} Here, the lower bound is recomputed once new data become available. We resample the whole data sequence $Z_1,\dots, Z_{1000}$ 1000 times, and for each value of the sample size, we track the \emph{cumulative} miscoverage rate, that is, the fraction of times the true mean has been miscovered \emph{at some time} up to $t$.
    \end{enumerate}
    
Under fixed-time monitoring (Figure~\ref{subfig:lcb_fixed_time}), the false alarm rate is controlled at prespecified level $\delta$ by both procedures. However, under continuous monitoring (Figure~\ref{subfig:lcb_cont_mon}), deploying the CLT lower bound leads to raising false alarms. At the same time, the betting-based lower bound controls the false alarm rate under both types of monitoring.

\begin{figure*}[ht]
    \centering
    \begin{subfigure}{0.45\textwidth}
        \centering
          \includegraphics[width=\textwidth]{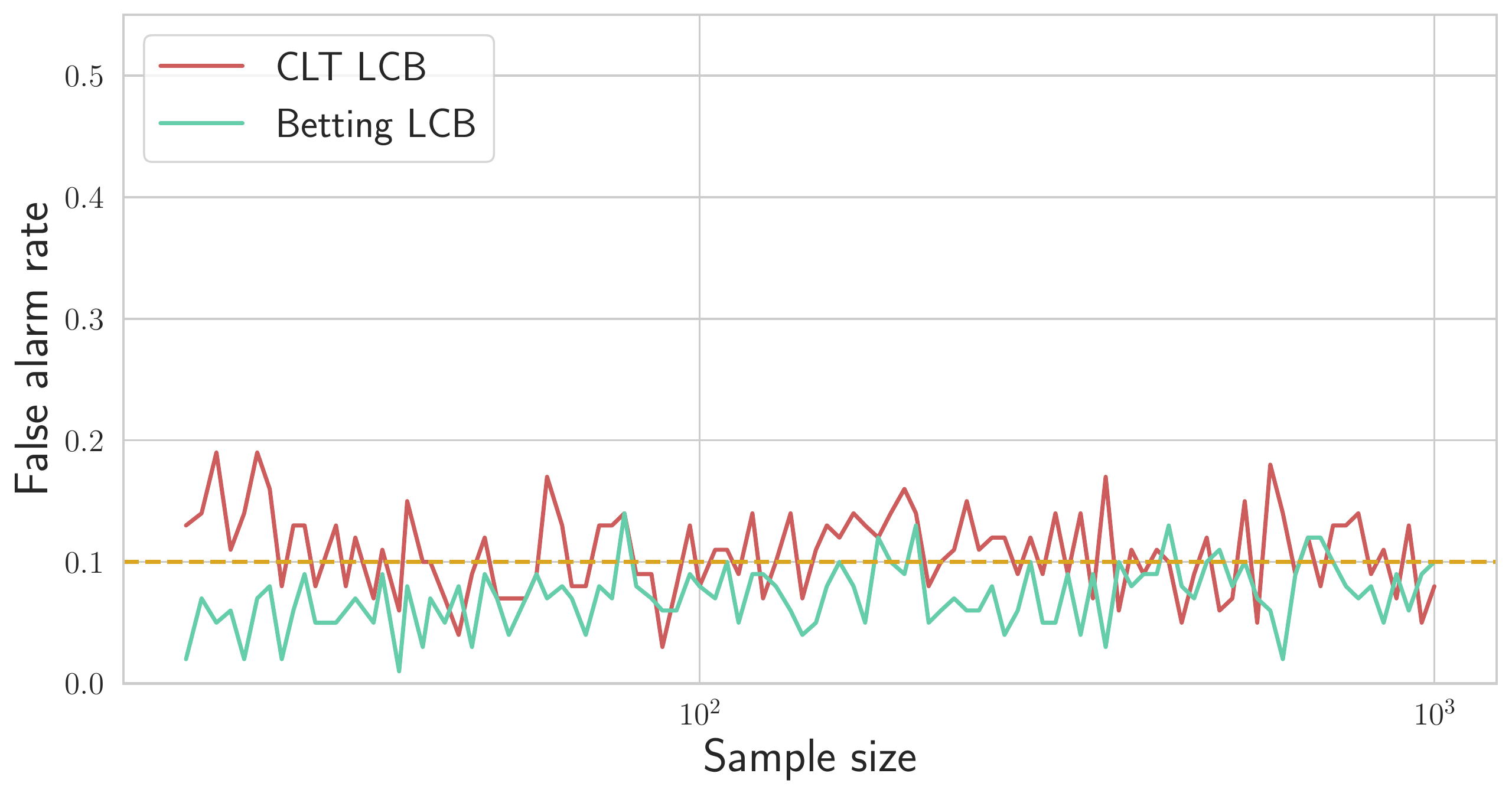}
    \caption{}
    \label{subfig:lcb_fixed_time}
    \end{subfigure}%
    ~ 
     \begin{subfigure}{0.45\textwidth}
         \centering
         \includegraphics[width=\textwidth]{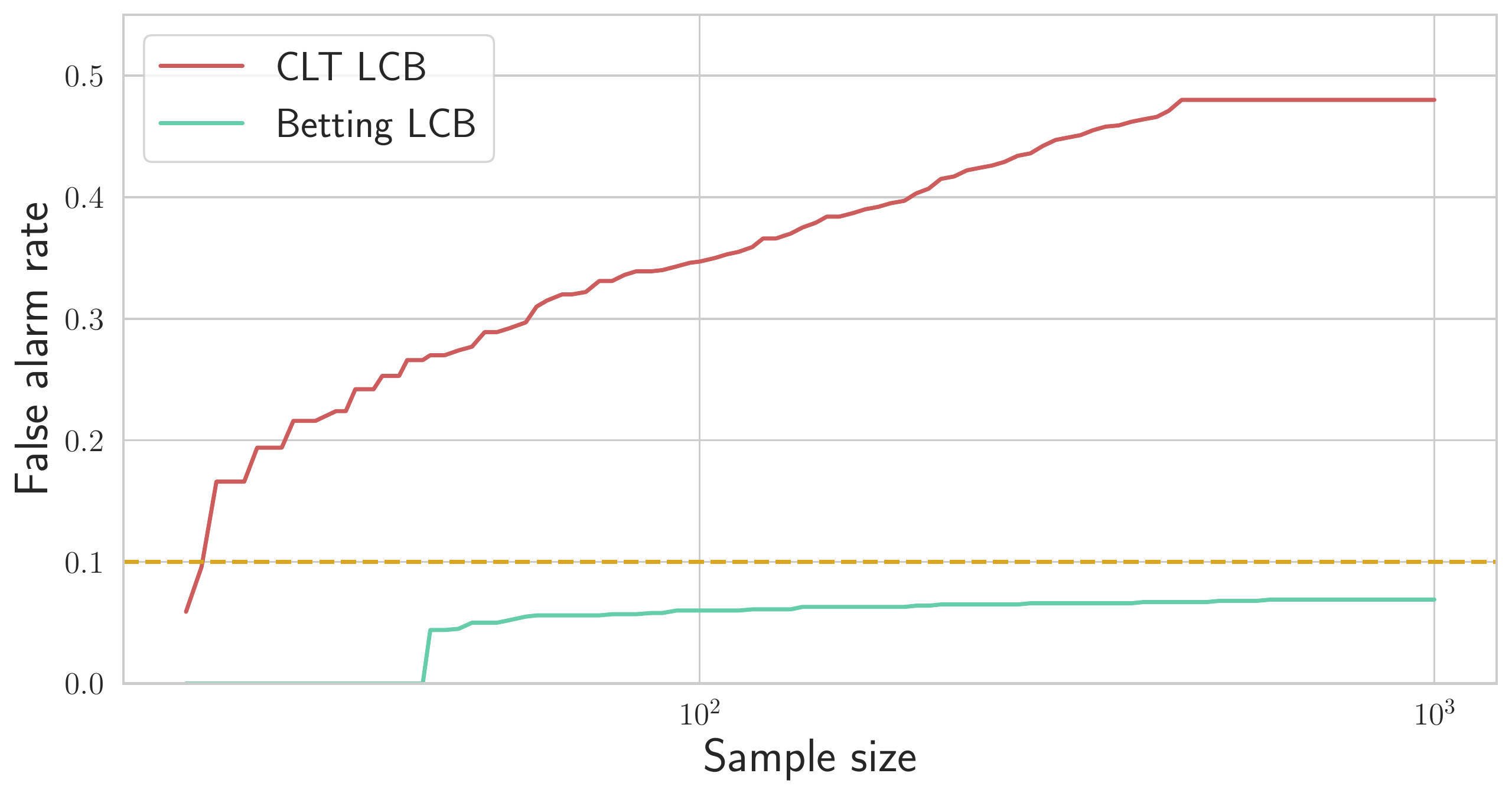}
     \caption{}
    \label{subfig:lcb_cont_mon}
     \end{subfigure}
     \caption{False alarm rate for the CLT and betting-based lower confidence bound (LCB) under: (a) fixed-time monitoring and (b) continuous monitoring. Note that both bounds control the false alarm rate at a prespecified level $\delta=0.1$ under fixed-time monitoring. However under continuous monitoring, the false alarm rate of the CLT bound quickly exceeds the critical level $\delta=0.1$. At the same time, the betting LCB successfully controls the false alarm rate.}
    % \label{subfig:lcb_cont_mon}
\end{figure*}

\paragraph{Deploying fixed-time tests with corrections for multiple testing.} Next, we illustrate that adapting fixed-time tests to sequential settings via performing corrections for multiple testing comes at the price of significant power losses. Performing the Bonferroni correction requires splitting the available error budget $\delta$ among the times when testing is performed. In particular, we consider:
\begin{equation}\label{eq:bonf_cor}
\begin{aligned}
    \textbf{Power correction:} \quad & \sum_{i=1}^\infty \frac{\delta}{2^i}=\delta,\\
    \textbf{Polynomial correction:} \quad & \frac{6}{\pi^2}\sum_{i=1}^\infty \frac{\delta}{i^2}=\delta.
\end{aligned}
\end{equation}
Note that the second option is preferable as the terms in the sum decrease at a slower rate, thus allowing for a narrower sequence of intervals. Proceeding under the setup considered in the beginning of this section (data points are sampled from $\mathrm{Ber}(0.6)$), we consider two scenarios:
\begin{enumerate}
    \item We recompute the CLT lower bound each time a batch of 25 samples is received and perform the Bonferroni correction (utilizing both ways of splitting the error budget described in~\eqref{eq:bonf_cor}). We present the lower bounds on Figure~\ref{subfig:lcb_clt_corrected_25} (the results have been aggregated over 100 data draws). Observe that:
    \begin{itemize}
        \item While the sequence of intervals shrinks in size with growing number of samples under the polynomial correction, this is not the case under the power correction.
        \item Not only the betting confidence sequence is uniformly tighter than the CLT-based over all considered sample sizes, it also allows for monitoring at arbitrary stopping times. Note that the CLT lower bound allows for monitoring only at certain times (marked with stars on Figure~\ref{subfig:lcb_clt_corrected_25}).
    \end{itemize}
    \item For a fairer comparison with the betting-based bound, we also consider recomputing the CLT bound each time a new sample is received. Since utilizing the power correction quickly leads to numerical overflows, we utilize only the polynomial correction. We present the lower bounds on Figure~\ref{subfig:lcb_clt_corrected_every_time} (the results have been aggregated over 100 data draws). While now the CLT lower bound can be monitored at arbitrary stopping times, it is substantially lower (thus more conservative) than the betting-based.
\end{enumerate}

\begin{figure*}[ht]
    \centering
    \begin{subfigure}{0.45\textwidth}
        \centering
          \includegraphics[width=\textwidth]{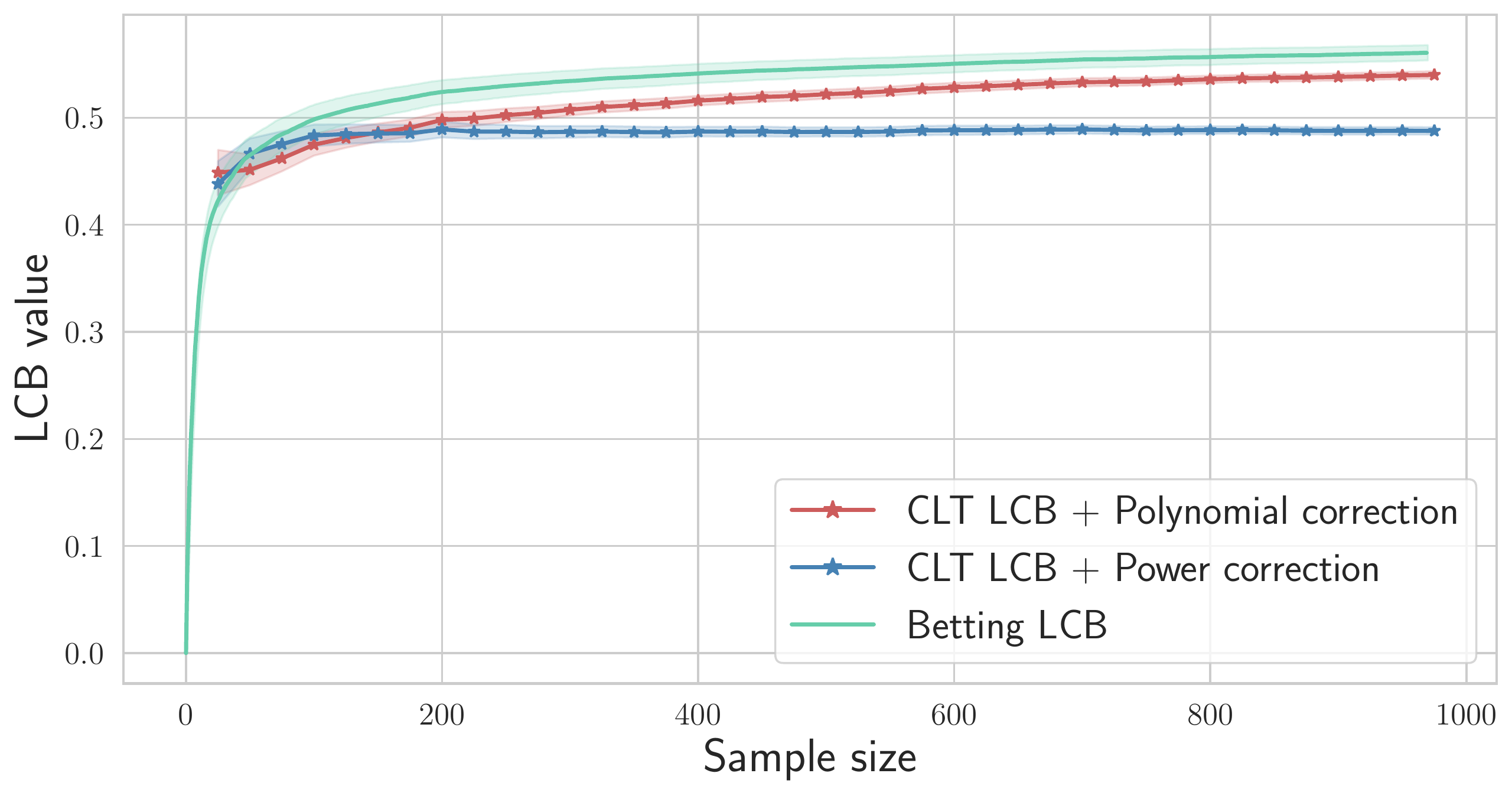}
    \caption{}
    \label{subfig:lcb_clt_corrected_25}
    \end{subfigure}%
    ~ 
     \begin{subfigure}{0.45\textwidth}
         \centering
         \includegraphics[width=\textwidth]{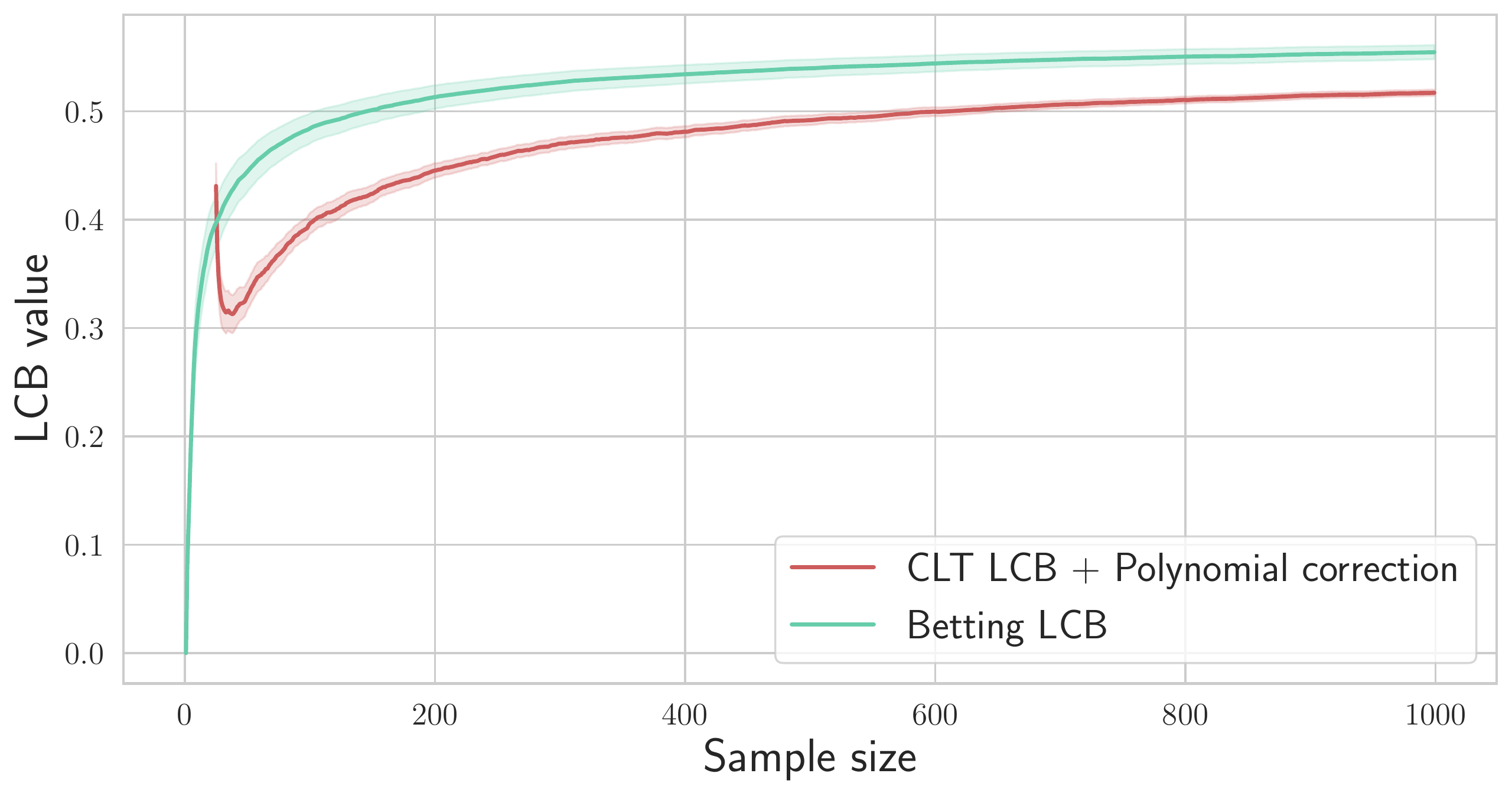}
     \caption{}
    \label{subfig:lcb_clt_corrected_every_time}
     \end{subfigure}
     \caption{Adapting the CLT lower bound to continuous monitoring via performing corrections for multiple testing: (a) each time a batch of 25 samples is received, (b) each time a new sample is received. Under both settings, the CLT-based lower bound is more conservative than the betting-based, which, in testing terminology, means that the resulting testing framework has less power.}
    % \label{subfig:lcb_cont_mon}
\end{figure*}

\subsection{Conformal test martingales do not in general differentiate between harmful and benign distribution shifts}\label{appsubsec:issues_conformal}
Testing the i.i.d.\ assumption online can be performed using conformal test martingales~\citep{vovk2021retrain,vovk2020testing,vovk2020concept}. Below, we review building blocks underlying a conformal test martingale.

\begin{enumerate}
    \item First, one has to pick a \emph{conformity score}. Assigning a lower score to a sample indicates abnormal behavior. \citet{vovk2021retrain} consider the regression setting and, like us, use scores that depends on true labels: for a point $(x_i,y_i)$, let $\widehat{y}_i$ denote the output of a predictor on input $x_i$. The authors propose a score of the form:
    \begin{equation}\label{eq:conf_measure}
        \alpha_i=-\abs{y_i-\widehat{y}_i}.
    \end{equation}
    Note that lower scores defined in~\eqref{eq:conf_measure} clearly reflect degrading performance of a predictor (possibly due to the presence of distribution drift). Under the classification setting, we propose to consider the following conformity score:
    \begin{equation}\label{eq:conf_score_clas}
        \alpha_{i} = \sum_{i=1}^K f_k(x_i)\cdot \indicator{f_k(x_i)\leq f_{y_i}(x_i)}= 1 - \sum_{i=1}^K f_k(x_i)\cdot \indicator{f_k(x_i)>f_{y_i}(x_i)},
    \end{equation}
    which is a rescaled estimated probability mass of all the labels that are more likely than the true one (here, we assume that predictor $f$ outputs an element of $\Delta^{\abs{\calY}}$). Rescaling in~\eqref{eq:conf_score_clas} is used to ensure that this score represent the \emph{conformity} score, that is, the higher the value is, the better a given data point conforms. Note that if for a given data point, the true label happens to be top-ranked by a predictor $f$, then such point receives the largest conformity score equal to one. This score is inspired by recent works in conformal classification~\citep{romano2020classification,podkopaev2021label}. 
    
    \item After processing $n$ data points, a \emph{transducer} transforms a collection of conformity scores into a conformal p-value:
    \begin{equation*}
        P_n = p\roundbrack{\curlybrack{(x_i,y_i)}_{i=1}^n,u}:= \frac{\abs{i\in\curlybrack{1,\dots,n}: \alpha_i<\alpha_n}+u\cdot \abs{i\in\curlybrack{1,\dots,n}: \alpha_i=\alpha_n}}{n},
    \end{equation*}
    where $u\sim \mathrm{Unif}([0,1])$. Conformal p-values are i.i.d.\ uniform on $[0,1]$ when the data points are i.i.d. (or more generally, exchangeable; see~\citep{vovk2021retrain}). Note that the design of conformal p-value $P_n$ ensures it takes small value when the conformity score $\alpha_n$ is small, that is, when abnormal behavior is being observed in a sequence. 
    \item A betting martingale is used to gamble again the null hypothesis that a sequence of random variables is distributed uniformly and independently on $[0,1]$. Formally, a betting martingale is a measurable function $F:[0,1]^\star\to [0,\infty]$ such that $F(\square)=1$ ($\square$ defines an empty sequence and $Z^\star$ stands for the set of all finite sequences of elements of $Z$) and for each sequence $(u_1,\dots, u_{n-1})\in [0,1]^{n-1}$ and any $n\geq 1$:
    \begin{equation*}
        \int_{0}^1 F(u_1,\dots,u_{n-1},u)du = F(u_1,\dots,u_{n-1}).
    \end{equation*}
    The simplest example is given by the product of simple bets:
    \begin{equation}\label{eq:simple_bets}
        F(u_1,\dots,u_{n}) = \varepsilon^n \roundbrack{\prod_{i=1}^nu_i}^{1-\varepsilon}, \quad \varepsilon>0,
    \end{equation}
    but more sophisticated options are available~\citep{vovk2005algorithmic}. For the simulations that follow, we use simple mixture martingale which is obtained by integrating~\eqref{eq:simple_bets} over $\varepsilon\in [0,1]$. 
    % \begin{equation*}
    %     S_n=F(P_1,\dots,P_n), \quad n = 0,1,\dots.
    % \end{equation*}
    % $S_n$ describes the current wealth of a gambler who starts with initial capital 1 and gambles again the null hypothesis of the sequence of p-values being distributed uniformly and independently. 
    % Then higher value of Sn can be interpreted as more evidence found against the null hypothesis. 
    
    Conformal test martingale $S_n$ is obtained by plugging in the sequence of conformal p-values $P_1,\dots,P_n$ into the betting martingale. The test starts with $S_0 = 1$ and it rejects at the first time $n$ when $S_n$ exceeds $1/\alpha$. They type I error control for this test is justified by Ville's inequality which states that for any nonnegative martingale (which $S_n$ is, under the i.i.d.\ null), the entire process $S_n$ stays below $1/\alpha$ with probability at least $1-\alpha$. Mathematically:
    \begin{equation*}
        \Prob\roundbrack{\exists n: S_n\geq 1/\alpha}\leq \alpha.
    \end{equation*}
\end{enumerate}

    % Conformity measure $A$ is a function mapping any finite sequence $z^{(n)}=(z_1,\dots,z_n)\in(\calX\times\calY)^n$ of observations ($z_i=(x_i,y_i)$) to a sequence $\alpha^{(n)}=(\alpha_1,\dots,\alpha_n)$ which is permutation invariant, that is for any permutation $\pi:[n]\to[n]$, any $z^{(n)}$ and any $\alpha^{(n)}$:
    % \begin{equation*}
    %     A\roundbrack{z^{(n)}} = \alpha^{(n)} \Longrightarrow A\roundbrack{z_{\pi(1)},\dots,z_{\pi(n)}} = \roundbrack{\alpha_{\pi(1)},\dots, \alpha_{\pi(n)}}
    % \end{equation*}
%   Under the regression setting, the authors consider the non-conformity score of the form: $\alpha_i=\abs{y_i-\widehat{y}_i}$ for $z_i=(x_i,y_i)$, and here $\widehat{y}_i$ is the output of a predictor of $x_i$. 

To study conformal test martingales, we consider the label shift setting described in Section~\ref{subsec:sim_data_exp}. Recall that for this setting we know exactly when a shift in label proportions becomes harmful to misclassification risk of the Bayes-optimal rule on the source distribution (see Figures~\ref{subfig:risk_bayes_not_well_sep} and~\ref{subfig:label_shift_number_of_rejections}). For the simulations that follow, we assume that the marginal probability of class 1 on the source is $\pi_1^S=0.25$, and use the corresponding optimal rule.

We analyze conformal test martingales under several distribution drift scenarios differing in their severity and rate, and start with the settings where a sharp shift is present.

\begin{enumerate}
    \item {\bf \emph{Harmful} distribution \emph{shift} with \emph{cold start}.} Here, the data are sampled i.i.d.\ from a shifted distribution corresponding to $\pi_1^T=0.75$. We illustrate 50 runs of the procedure on Figure~\ref{subfig:conf_test_mart_shift_harmful_iid}. Recall that when the data are sampled i.i.d.\, the conformal p-values are i.i.d.\ uniform on $[0,1]$. Under the (exchangeability) null, conformal test martingales are not growing, and thus are not able to detect that a present shift, even though it corresponds to a harmful setting.

    \item {\bf \emph{Harmful} distribution \emph{shift} with \emph{warm start}.} For a fairer comparison, we also consider a warm start setting when the first 100 points are sampled i.i.d.\ from the source distribution ($\pi_1^T=0.25$), followed by the data sampled i.i.d.\ from a shifted distribution ($\pi_1^T=0.75$). We illustrate 50 runs of the procedure on Figure~\ref{subfig:conf_test_mart_shift_harmful}. In this case, conformal test martingales demonstrate better detection properties. However, a large fraction of conformal test martingales still is incapable of detecting a shift. 
    
\end{enumerate}

\begin{figure*}[ht]
    \centering
    \begin{subfigure}{0.45\textwidth}
        \centering
          \includegraphics[width=\textwidth]{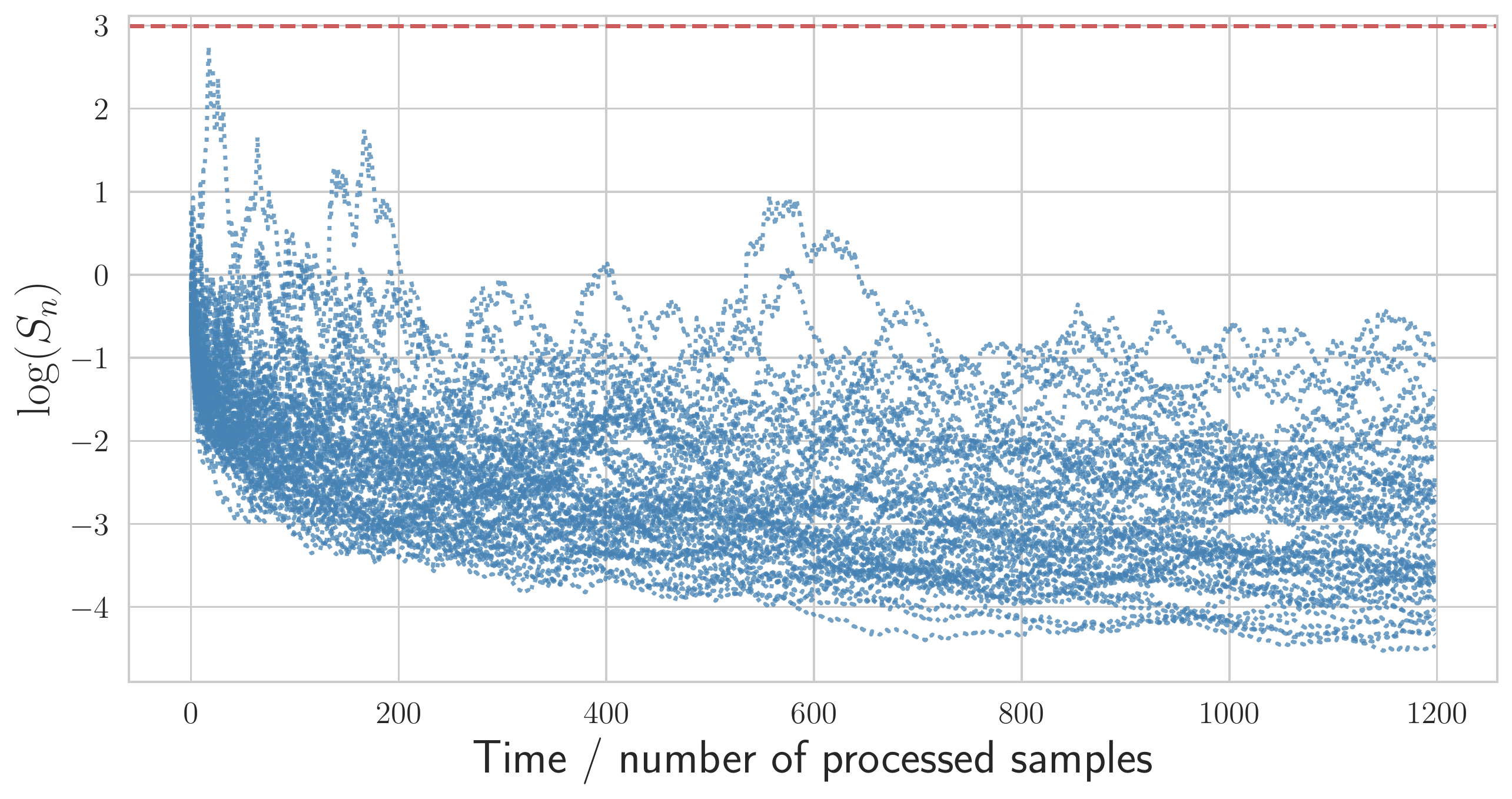}
    \caption{}
    \label{subfig:conf_test_mart_shift_harmful_iid}
    \end{subfigure}%
    ~ 
    \begin{subfigure}{0.45\textwidth}
        \centering
          \includegraphics[width=\textwidth]{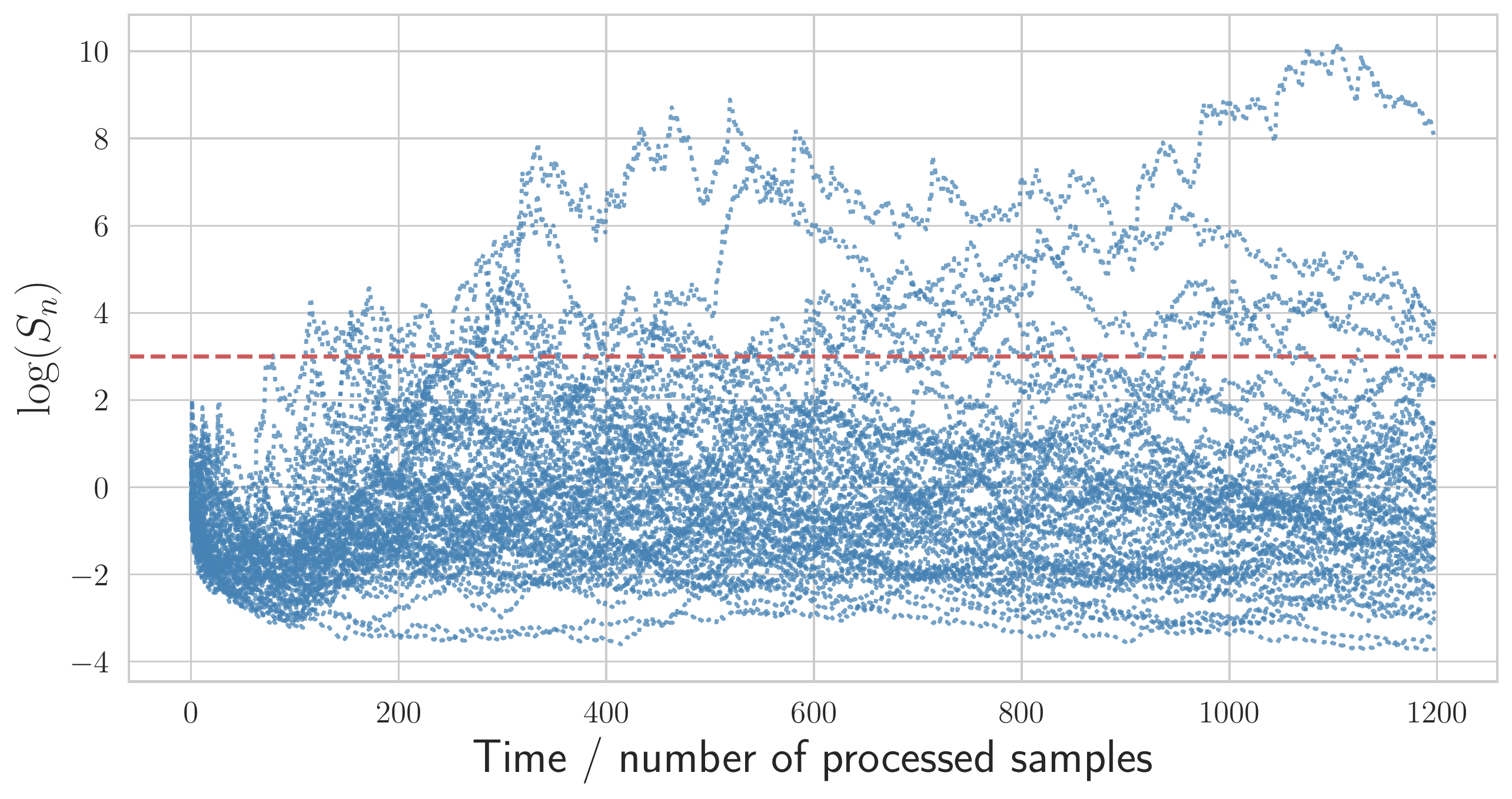}
    \caption{}
    \label{subfig:conf_test_mart_shift_harmful}
    \end{subfigure}
     \caption{50 runs of conformal test martingales (blue dotted lines) under harmful distribution shift with: (a) cold start (shift happens in the beginning), (b) warm start (shift happens in an early stage of a model deployment). The horizontal red dashed line outlines to the rejection threshold due to Ville's inequality. Even though warm start improves detection properties, only a small fraction of conformal test martingales detects a shift that leads to more than 10\% drop in classification accuracy. }
    \label{fig:conf_test_mart}
\end{figure*}

The simulations above illustrate that conformal test martingales are inferior to the framework proposed in this work whenever a sharp distribution shift happens in the early stage of a model deployment, even when such shift is harmful. Next, we consider several settings where instead of changing sharply, the distribution drifts gradually.

\begin{enumerate}
\setcounter{enumi}{2}

    \item {\bf \emph{Slow} and \emph{benign} distribution \emph{drift}.} Starting with the marginal probability of class 1, $\pi_1^T=0.1$, we keep \emph{increasing} $\pi_1^T$ by 0.05 each time a batch of 75 data points is sampled until it reaches the value 0.45. Recall from Section~\ref{subsec:sim_data_exp} that those values of $\pi_1^T$ correspond to a \emph{benign} setting where the risk of the predictor on the target domain does not exceed substantially the source risk. We illustrate 50 runs of the procedure on Figure~\ref{subfig:conf_test_mart_drift_benign}.
    
    \item {\bf \emph{Slow} and \emph{harmful} distribution \emph{drift}.} Starting with the marginal probability of class 1 $\pi_1^T=0.5$, we keep \emph{increasing} $\pi_1^T$ by 0.05 each time a batch of 75 data points is sampled until it reaches the value 0.85. Recall from Section~\ref{subsec:sim_data_exp} that those values of $\pi_1^T$ correspond to a \emph{harmful} setting where the risk of the predictor on the target domain is substantially larger than the source risk. We illustrate 50 runs of the procedure on Figure~\ref{subfig:conf_test_mart_drift_harmful}.
    
    \item {\bf \emph{Sharp} and \emph{harmful} distribution \emph{drift}.}  Starting with the marginal probability of class 1, $\pi_1^T=0.1$, we keep \emph{increasing} $\pi_1^T$ by 0.2 each time a batch of 150 data points is sampled until it reaches the value $0.9$. We illustrate 50 runs of the procedure on Figure~\ref{subfig:conf_test_mart_drift_sharp}. 
    
\end{enumerate}

\begin{figure*}[ht]
    \centering
     \begin{subfigure}{0.45\textwidth}
         \centering
         \includegraphics[width=\textwidth]{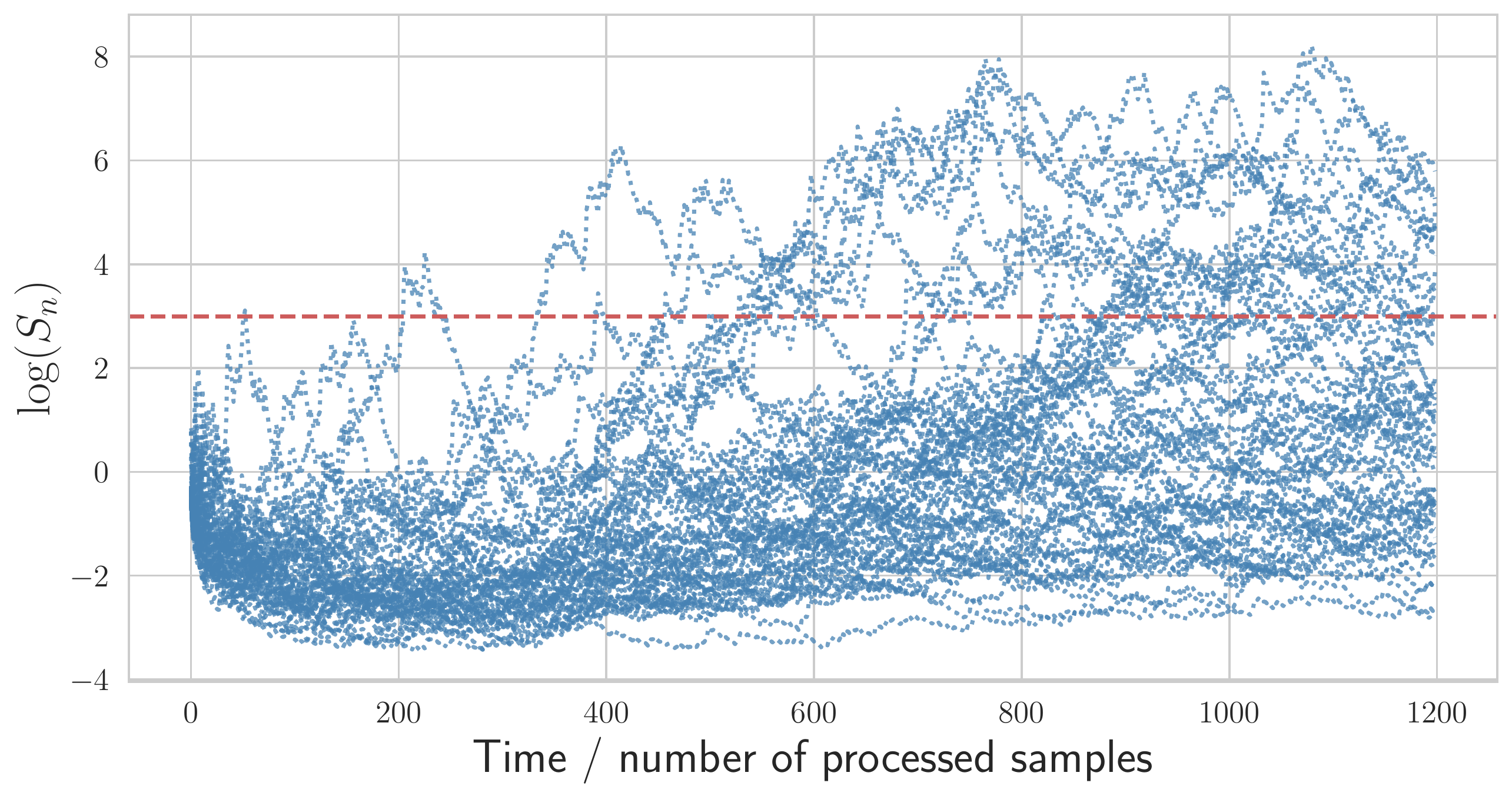}
     \caption{}
    \label{subfig:conf_test_mart_drift_benign}
     \end{subfigure}
     ~ 
     \begin{subfigure}{0.45\textwidth}
         \centering
         \includegraphics[width=\textwidth]{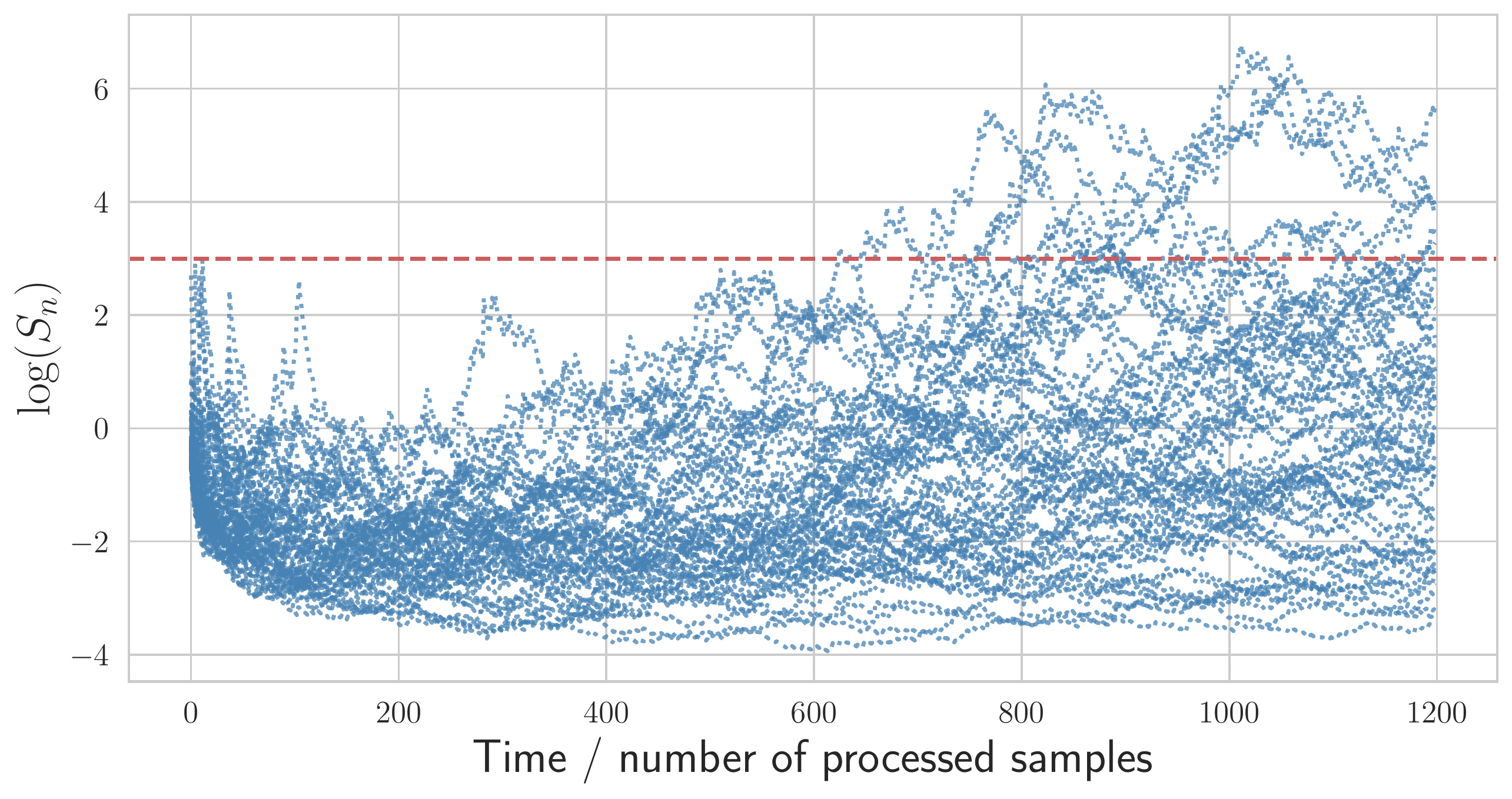}
     \caption{}
    \label{subfig:conf_test_mart_drift_harmful}
     \end{subfigure}
     ~ 
     \begin{subfigure}{0.45\textwidth}
         \centering
         \includegraphics[width=\textwidth]{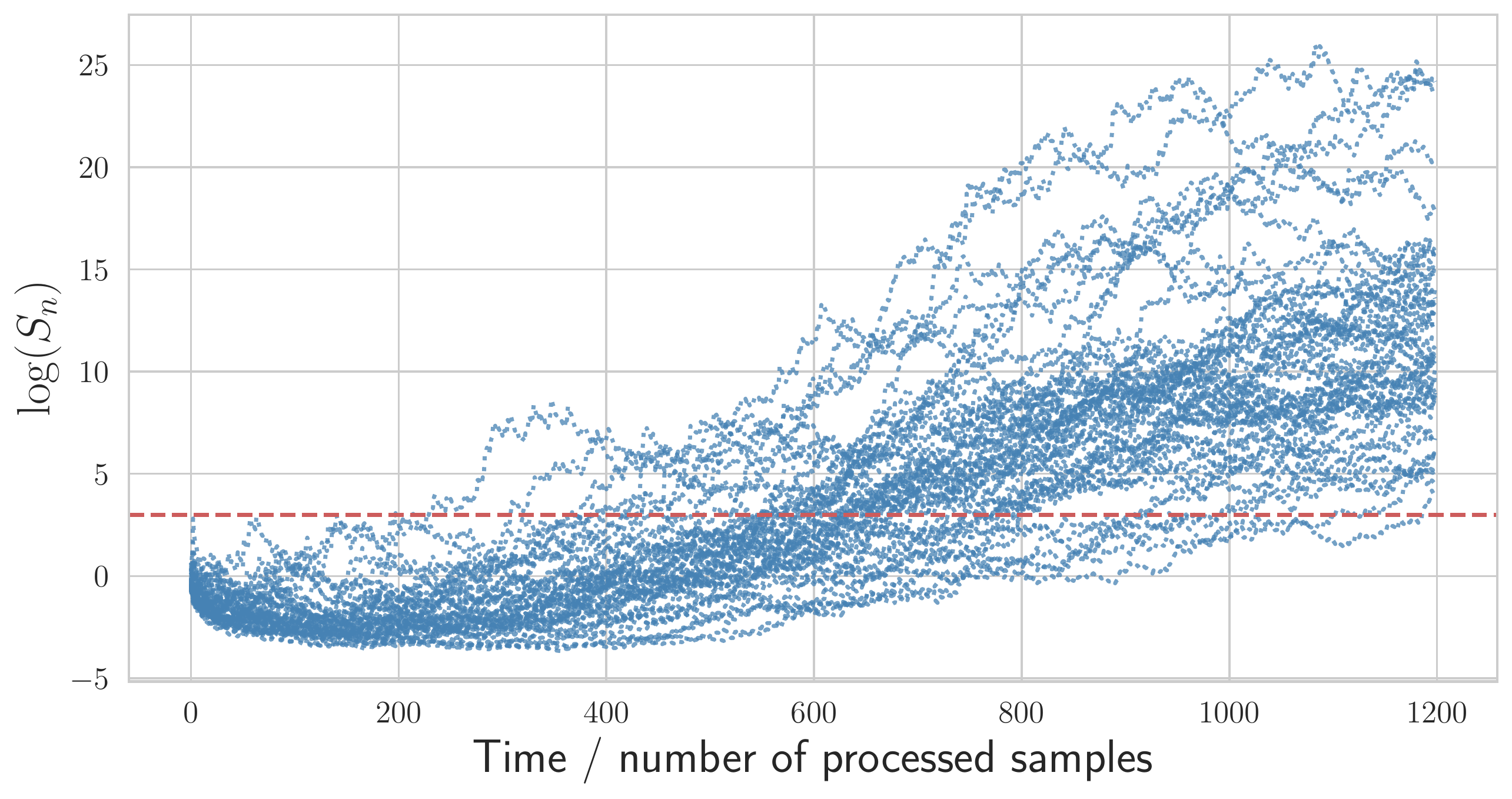}
     \caption{}
    \label{subfig:conf_test_mart_drift_sharp}
     \end{subfigure}
     \caption{50 runs of conformal test martingales (blue dotted lines) under gradual distribution drifts: (a) slow and benign, (b) slow and harmful, (c) sharp and harmful. The horizontal red dashed line outlines to the rejection threshold due to Ville's inequality. Note that conformal test martingales consistently detect only sharp distribution drifts. Moreover, conformal test martingales illustrate similar behavior under (a) and (b) but the corresponding settings are drastically different. }
    % \label{fig:conf_test_mart}
\end{figure*}

The settings where the distribution drifts gradually illustrate several shortcomings of conformal test martingales:

\begin{itemize}
    
    \item Conformal test martingales consistently detect \emph{only} sharp distribution drifts. Recall from Section~\ref{subsec:sim_data_exp} that increasing $\pi_1^T$ from 0.1 to 0.9 results in more than 20\% accuracy drop. When a drift is slow (Figures~\ref{subfig:conf_test_mart_drift_benign} and~\ref{subfig:conf_test_mart_drift_harmful}), conformal test martingales demonstrate much less power.
    
    \item Inspired by the ideas of~\citet{vovk2021retrain} who assumed, like us, that (some) true data labels are observed, we designed a conformity score that reflects decrease in performance. On Figures~\ref{subfig:conf_test_mart_drift_benign} and~\ref{subfig:conf_test_mart_drift_harmful}, conformal test martingales illustrate similar behavior but the corresponding settings are drastically different. Only one corresponds to a benign drift when the risk does not become significantly worse than the source risk. Thus, even though it is possible to make conformal test martingale reflect degrading performance, it is hard to incorporate evaluation of the malignancy of a change \emph{in an interpretable way}, like a decrease of an important metric. 

    \item Another problem of using conformal test martingales to be aware of is that after some time the corresponding values of the test martingale (larger implies more evidence for a shift) could start to decrease as the shifted distribution becomes the `new normal'~\citep{vovk2021retrain}. This is because they measure deviations from iid data, not degradations in performance from some benchmark (like source accuracy).
\end{itemize}

\section{Loss functions}\label{appsec:losses}

For our simulations, we consider the following bounded losses. Below, let $\widehat{y}(x;f) := \argmax_{k\in\calY} f_k(x)$ denote the label prediction of a model $f$ on a given input $x\in\calX$.

\paragraph{Multiclass losses.} The most common example is arguably the misclassification loss and its generalization that allows for a label-dependent cost of a mistake:
\begin{equation*}
    \ell^{\text{mis}}(f(x),y) := \indicator{\widehat{y}(x;f)\neq y} \in \{0,1\}, \quad
     \ell^{\text{w-mis}}(f(x),y) := \ell_y\cdot\indicator{\widehat{y}(x;f)\neq y} \in [0,L],
    % \quad R^{\mathrm{misclas}}(f) := \Exp{}{\ell^{\mathrm{misclas}}(f(X),Y)} = \Prob\roundbrack{f(X)\neq Y}.
\end{equation*}
% or its generalization which allows for a label-dependent cost of a mistake:
% \begin{equation*}
%     \ell^{\text{w-mis}}(f(x),y) := \ell_y\cdot\indicator{\widehat{y}(x;f)\neq y} \in [0,L],
% \end{equation*}
where $\curlybrack{\ell_k}_{k\in\calY}$ is a collection of per-class costs and $L = \max_{k\in\calY}\ell_k$. The loss $\ell^{\text{w-mis}}$ is more relevant to high-stakes decision settings and imbalanced classification. However, high accuracy alone can often be insufficient.
% and a user might want the scores assigned to the input data to represent certain confidence measures. Such model is called \emph{well-calibrated}, but small calibration error on its own can also be misleading since underlying model might have limited predictive power. 
The Brier score (squared error), introduced initially for the binary setting~\citep{brier1950}, is commonly employed to encourage calibration of probabilistic classifiers. 
% We analyze several extensions of this metric to multiclass prediction problems.
For multiclass problems, one could consider the mean-squared error of the whole output vector:
\begin{equation}\label{eq:brier_score}
    \ell^{\text{brier}}(f(x),y) := \frac{1}{2} \|f(x)-h(y)\|^2 \in [0,1],
\end{equation}
where $h:\calY\to\{0,1\}^{\abs{\calY}}$ is a one-hot label encoder: $h_{y'}(y)=\indicator{y'=y}$ for $y,y'\in \calY$. 
% Define a function $c:\calX\to \Delta^{\abs{\calY}-1}$ with entries $c_k(X) := \Prob\roundbrack{Y=k\mid f(X)}$. In words, coordinates of $c(X)$ represent the true conditional probabilities of belonging to the corresponding classes given an output vector $f(X)$. In particular, a canonical calibration requirement is satisfied if $f(X)\overset{a.s.}{=}c(X)$, that is, when all entries of the output reflect the true conditional probabilities. 
% The relationship between the expected Brier score and the calibration error is observed through the following decomposition:
% \begin{align}
%     R^{\text{brier}}(f) =\frac{1}{2}(\underbrace{\E\|f(X)-c(X)\|^2}_{\text{calibration error}}-\underbrace{\E\|c(X)-\Exp{}{c(X)}\|^2}_{\text{sharpness}}+\underbrace{\E\|h(Y)-\Exp{}{h(Y)}\|^2}_{\text{intrinsic uncertainty}}), \label{eq:brier_score_decomp}
%     % &= \underbrace{\Exp{}{\|f(X)-c(X)\|^2}}_{\text{calibration error}} +1 - \underbrace{\Exp{}{\|c(X)\|^2}}_{\text{sharpness}},
% \end{align}
% which is a straightforward generalization of a similar result for the binary setting, and the three terms in the decomposition are also referred to as \emph{reliability}, \emph{resolution} and \emph{uncertainty} respectively~\citep{murphy1973_partition}. 
% However, achieving canonical calibration is known to be a challenging problem, and weaker notions can typically be accepted by a practitioner. 
Top-label calibration~\citep{gupta2021top_label} restricts attention to the entry corresponding to the top-ranked label. A closely related loss function, which we call the \emph{top-label Brier score}, is the following:
\begin{equation}
\label{eq:brier_top_label}
    \ell^{\text{brier-top}}(f(x),y) := (f_{\widehat{y}(x;f)}(x)-\indicator{\widehat{y}(x;f)=y})^2= (f_{\widehat{y}(x;f)}(x)-h_{\widehat{y}(x;f)}(y))^2 \in [0,1].
\end{equation}
% It is easy to derive that the expected top-label Brier score satisfies a similar decomposition:
% \begin{equation*}\label{eq:brier_top_decomp}
% \begin{aligned}
%     R^{\text{brier-top}}(f) & = \underbrace{\E\roundbrack{f_{\widehat{y}(X;f)}(X)-c^{\text{top}}(X)}^2 }_{\text{top-label calibration error}} - \underbrace{\E\roundbrack{c^{\text{top}}(X)-\Exp{}{c^{\text{top}}(X)}}^2}_{\text{top-label sharpness}} +\underbrace{\Var{\roundbrack{h_{\widehat{y}(X;f)}(Y)}}}_{\substack{\text{variance of the} \\ \text{misclassification loss}}},
%     % \text{where} \quad  c^{\text{top}}(X) & = \Prob\roundbrack{Y=\widehat{y}(X;f) \mid f_{\widehat{y}(X;f)}(X),\widehat{y}(X;f)}.
%     % = \Exp{}{h_{\widehat{y}(X;f)}(Y) \mid f_{\widehat{y}(X;f)}(X),\widehat{y}(X;f)}.
% \end{aligned}
% \end{equation*}
% where $c^{\text{top}}:\calX\to [0,1]$ is defined as:
% \begin{equation*}
% c^{\text{top}}(X) := \Prob\roundbrack{Y=\widehat{y}(X;f) \mid f_{\widehat{y}(X;f)}(X),\widehat{y}(X;f)}= \Exp{}{h_{\widehat{y}(X;f)}(Y) \mid f_{\widehat{y}(X;f)}(X),\widehat{y}(X;f)}.
% \end{equation*}
% where $c^{\text{top}}(X) := \Prob\roundbrack{Y=\widehat{y}(X;f) \mid f_{\widehat{y}(X;f)}(X),\widehat{y}(X;f)}$ defines the fraction of correctly classified points among those that are mapped to the same class with the same confidence score as $X$. 
% In contrast to the decomposition~\eqref{eq:brier_score_decomp}, the last term here depends on the performance of a predictor $f$. 
Alternatively, instead of the top-ranked label, one could focus only on the entry corresponding to the true class. It gives rise to another loss function which we call the \emph{true-class} Brier score:
\begin{equation}\label{eq:brier_true_class}
    \ell^{\text{brier-true}}(f(x),y) := (f_y(x)-1)^2 \in [0,1].
\end{equation}
The loss functions $\ell^{\text{brier}}$, $\ell^{\text{brier-top}}$, $\ell^{\text{brier-true}}$ trivially reduce to the same loss function in the binary setting. In Appendix~\ref{appsec:alt_losses}, we present a more detailed study of the Brier score in the multiclass setting with several illustrative examples. 

\paragraph{Set-valued predictors.} The proposed framework can be used for set-valued predictors that output a subset of $\calY$ as a prediction. Such predictors naturally arise in multilabel classification, where more than a single label can be the correct one, or as a result of post-processing point predictors. Post-processing could target covering the correct label of a test point with high probability~\citep{vovk2005algorithmic} or controlling other notions of risk~\citep{bates2021rcps} like the miscoverage loss:
\begin{equation}\label{eq:miscov_loss}
    \ell^{\text{miscov}}(y,S(x)) = \indicator{y\notin S(x)},
\end{equation}
where $S(x)$ denotes the output of a set-valued predictor on any given input $x\in\calX$. When considering multilabel classification, relevant loss functions include the symmetric difference between the output and the set of true labels, false negative rate and false discovery rate.
% The following loss functions are relevant to the multilabel classification problems:
% \begin{equation*}
%     \ell^{\text{dif}}(S(x),y)  = \abs{S(x)\backslash y}+\abs{y\backslash S(x)}, \quad \ell^{\text{FNR}}(S(x),y) = \frac{\abs{y\backslash S(x)}}{\abs{y}}, \quad \ell^{\text{FDR}}(S(x),y) = \frac{\abs{S(x)\backslash y}}{\abs{S(x)}},
% \end{equation*}
% where $S(x)$ denotes the output of a set-valued predictor on any given input $x\in\calX$.

\section{Brier score in the multiclass setting}\label{appsec:alt_losses} 

This section contains derivations of decompositions stated in Section~\ref{sec:test_for_change} and comparisons between introduced versions of the Brier score.

\paragraph{Brier score decompositions.} Define a function $c:\calX\to \Delta^{\abs{\calY}}$, with entries $c_k(X) := \Prob\roundbrack{Y=k\mid f(X)}$. In words, coordinates of $c(X)$ represent the true conditional probabilities of belonging to the corresponding classes given an output vector $f(X)$. Recall that $h:\calY\to\{0,1\}^{\abs{\calY}}$ is a one-hot label encoder. The expected Brier score for the case when the whole output vector is considered (that is, the expected value of the loss defined in~\eqref{eq:brier_score}) satisfies the following decomposition:
\begin{equation*}
\begin{aligned}
    2\cdot R^{\text{brier}}(f)  &= \E\|f(X)-h(Y)\|^2\\
    & = \E\|f(X)-c(X)+c(X)-h(Y)\|^2\\
    & \overset{(a)}{=} \E\|f(X)-c(X)\|^2+\E\|c(X)-h(Y)\|^2\\
    & = \E\|f(X)-c(X)\|^2+\E\|c(X)-\Exp{}{c(X)}+\Exp{}{c(X)}-h(Y)\|^2\\
    % & = \E\|f(X)-c(X)\|^2-\E\|c(X)-\Exp{}{c(X)}\|^2+\E\|h(Y)-\Exp{}{h(Y)}\|^2\\
    & \overset{(b)}{=} \underbrace{\E\|f(X)-c(X)\|^2}_{\text{calibration error}}-\underbrace{\E\|c(X)-\Exp{}{c(X)}\|^2}_{\text{sharpness}}+\underbrace{\E\|h(Y)-\Exp{}{h(Y)}\|^2}_{\text{intrinsic uncertainty}}.
    % &= \underbrace{\Exp{}{\|f(X)-c(X)\|^2}}_{\text{calibration error}} +1 - \underbrace{\Exp{}{\|c(X)\|^2}}_{\text{sharpness}},
    \end{aligned}
\end{equation*}
Above, (a) follows by conditioning on $f(X)$ for the cross-term and recalling that $\Exp{}{h(Y)\mid f(X)} = c(X)$, (b) also follows by conditioning on $f(X)$ and noticing that $\Exp{}{h(Y)} = \Exp{}{c(X)}$. Now, recall that a predictor is (canonically) calibrated if $f(X)\overset{a.s.}{=} c(X)$, in which case the calibration error term is simply zero. 

Next, we consider the top-label Brier score $\ell^{\text{brier-top}}$. 
Define $c^{\text{top}}:\calX\to [0,1]$, as:
\begin{equation*}
\begin{aligned}
c^{\text{top}}(X) &:= \Prob\roundbrack{Y=\widehat{y}(X;f) \mid f_{\widehat{y}(X;f)}(X),\widehat{y}(X;f)}, 
\end{aligned}
\end{equation*}
or the fraction of correctly classified points among those that are predicted to belong to the same class and share the same confidence score as $X$. Following essentially the same argument as for the standard Brier score, we get that:
\begin{equation*}
\begin{aligned}
    & \E\squarebrack{\ell^{\text{brier-top}}(f(X),Y)} \\
 = \ & \E\roundbrack{f_{\widehat{y}(X;f)}(X)-h_{\widehat{y}(X;f)}(Y)}^2 \\
 = \ &  \E\roundbrack{f_{\widehat{y}(X;f)}(X)-c^{\text{top}}(X) +c^{\text{top}}(X) -h_{\widehat{y}(X;f)}(Y)}^2 \\
 = \ & \E\roundbrack{f_{\widehat{y}(X;f)}(X)-c^{\text{top}}(X)}^2 +\E\roundbrack{c^{\text{top}}(X) -h_{\widehat{y}(X;f)}(Y)}^2 \\
    % & = \E\roundbrack{f_{\widehat{y}(X)}(X)-c^{\text{top}}(X)}^2 +\E\roundbrack{c^{\text{top}}(X) -\Exp{}{c^{\text{top}}(X)}+\Exp{}{c^{\text{top}}(X)}-h_{\widehat{y}(X)}(Y)}^2 \\
 = \ & \underbrace{\E\roundbrack{f_{\widehat{y}(X;f)}(X)-c^{\text{top}}(X)}^2 }_{\text{top-label calibration error}} - \underbrace{\E\roundbrack{c^{\text{top}}(X) -\Exp{}{c^{\text{top}}(X)}}^2}_{\text{top-label sharpness}} +\underbrace{\Var{\roundbrack{h_{\widehat{y}(X;f)}(Y)}}}_{\substack{\text{variance of the} \\ \text{misclassification loss}}}. \\
\end{aligned}
\end{equation*}
Note that in contrast to the classic Brier score decomposition, the last term in this decomposition depends only on the top-class prediction of the underlying predictor, and thus on its accuracy.

\paragraph{Comparison of the scores in multiclass setting.} Recall that the difference between three versions of the Brier score arises when one moves beyond the binary classification setting. We illustrate the difference by considering a 4-class classification problem where the data represent a balanced (that is all classes are equally likely) mixture of 4 Gaussians with identity covariance matrix and mean vectors being the vertices of a 2-dimensional unit cube. One such sample is presented on Figure~\ref{subfig:4_class_brier_data_vis}. 

Next, we analyze locally the Brier scores when the Bayes-optimal rule is used as an underlying predictor, that is we split the area into small rectangles and estimate the mean score within each rectangle by a sample average. The results are presented on Figures~\ref{subfig:4_class_brier_classic}, \ref{subfig:4_class_brier_top} and \ref{subfig:4_class_brier_true}. Note that the difference between the assigned scores is mostly observed for the points that lie at the intersection of 4 classes where the support of the corresponding output vectors is large. 

\begin{figure*}[ht]
    \centering
    \begin{subfigure}{0.45\textwidth}
        \centering
        \includegraphics[width=\textwidth]{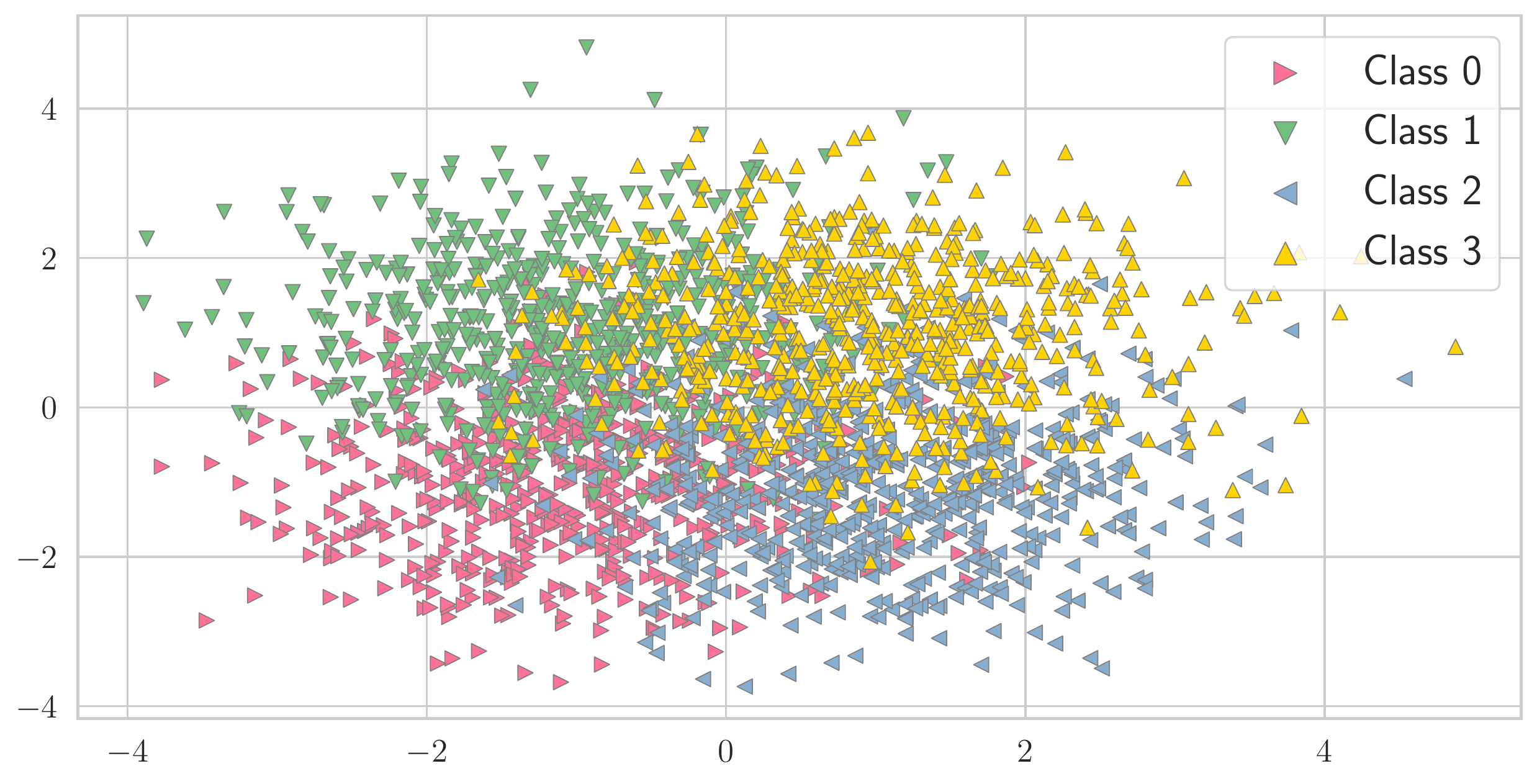}
    \caption{}
    \label{subfig:4_class_brier_data_vis}
    \end{subfigure}%
     ~ 
     \begin{subfigure}{0.45\textwidth}
         \centering
         \includegraphics[width=0.8\textwidth]{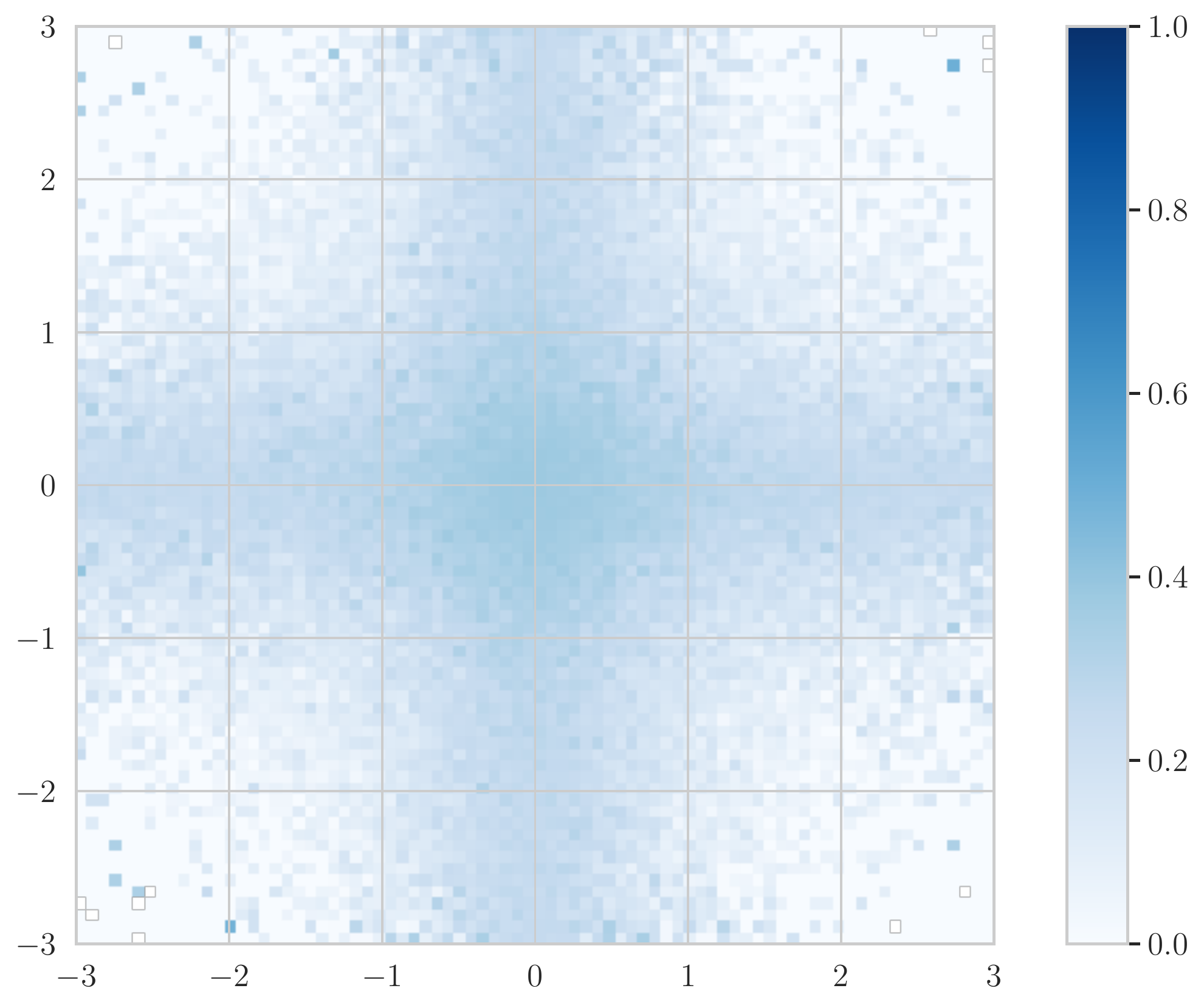}
     \caption{}
    \label{subfig:4_class_brier_classic}
     \end{subfigure}
      ~ 
     \begin{subfigure}{0.45\textwidth}
         \centering
         \includegraphics[width=0.8\textwidth]{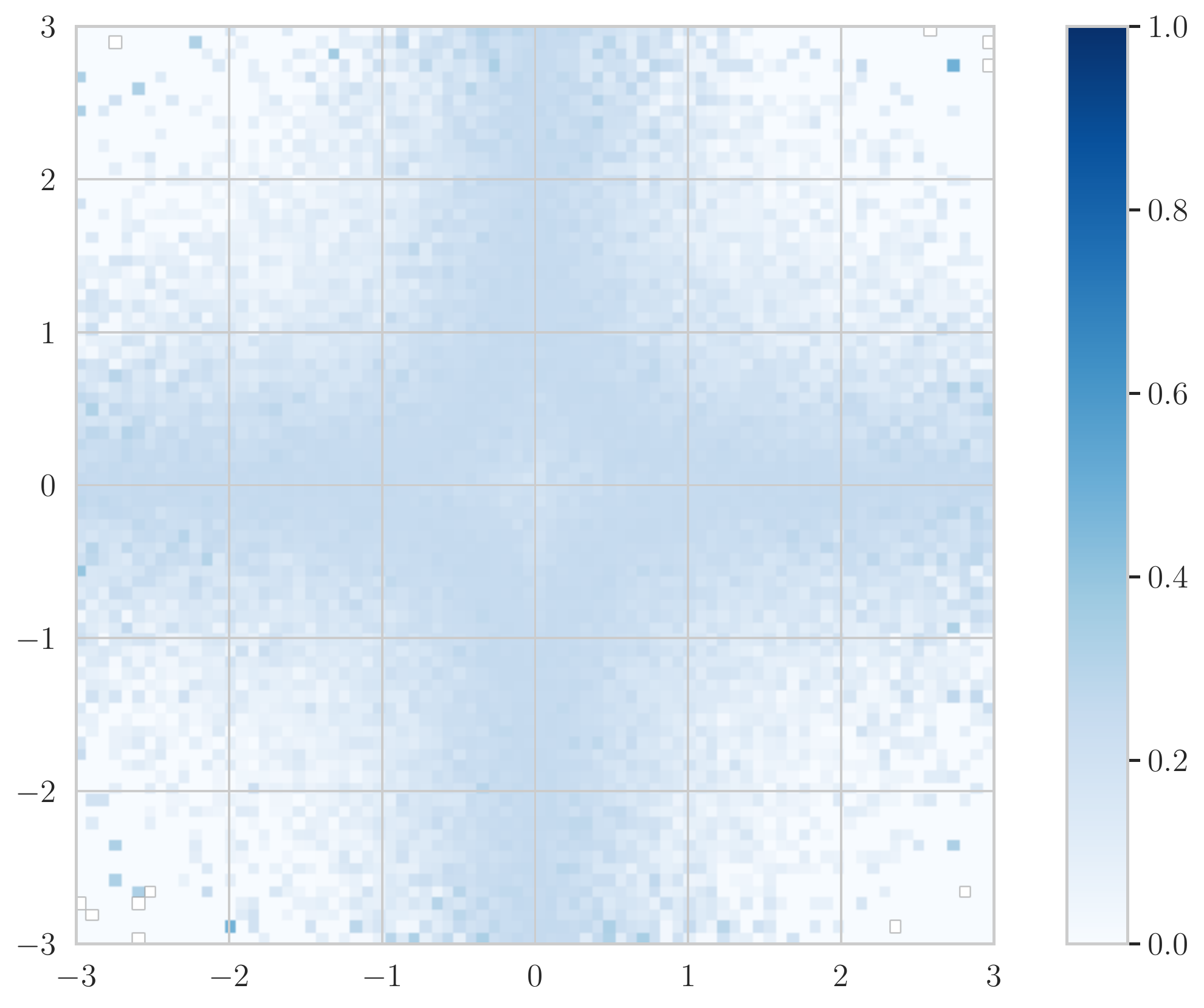}
     \caption{}
     \label{subfig:4_class_brier_top}
     \end{subfigure}
      ~ 
     \begin{subfigure}{0.45\textwidth}
         \centering
         \includegraphics[width=0.8\textwidth]{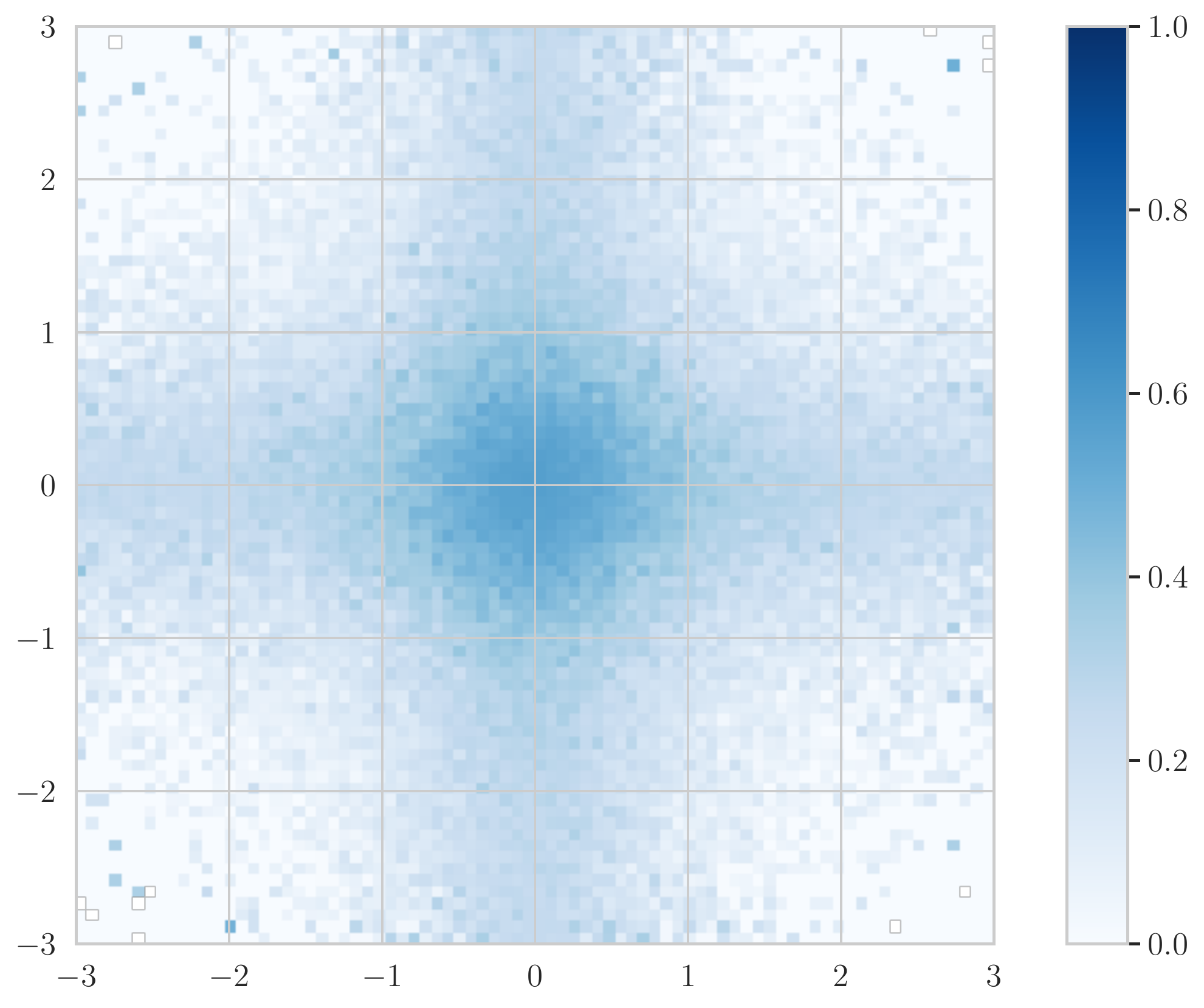}
     \caption{}
 \label{subfig:4_class_brier_true}
     \end{subfigure}
    \caption{(a) Visualization of 4-class classification problem with all classes being equally likely; (b) localized classic Brier score $\ell^{\text{brier}}$ (\eqref{eq:brier_score}); (c) localized top-label Brier score $\ell^{\text{brier-top}}$ (\eqref{eq:brier_top_label});  (d) localized true-class Brier score $\ell^{\text{brier-true}}$ (\eqref{eq:brier_true_class}). }
    \label{fig:4_class_brier_comparison}
\end{figure*}

% \clearpage

\paragraph{Brier scores under label shift.}  Here we consider the case when label shift on the target domain is present. First, introduce the label likelihood ratios, also known as the importance weights, $w_y := \pi_y^T/\pi_y^S$, $y\in\calY$. For measuring the strength of the shift, we introduce the \emph{condition number}: $\kappa = \sup_{y}w_y/\inf_{y:w_y\neq 0}w_y$. Note that when the shift is not present, the condition number $\kappa=1$. To evaluate the sensitivity of the losses to the presence of label shift, we proceed as follows: first, the class proportions for both source and target domains are sampled from the Dirichlet distribution (to avoid extreme class proportion, we perform truncation at levels 0.15 and 0.85 and subsequent renormalization). Then we use the Bayes-optimal rule for the source domain to perform predictions on the target and compute the corresponding losses. On Figure~\ref{fig:sim_data_cond_number}, we illustrate relative increase in the average Brier scores plotted against the corresponding condition number when all data points are considered (Figure~\ref{subfig:brier_scores_cond_number}) and when attention is restricted to the area where classes highly intersect (Figure~\ref{subfig:brier_scores_cond_number_cube}). In general, all three versions of the Brier score suffer similarly on average, but in the localized area where classes highly intersect, the top-label Brier score does not increase significantly under label shift.

% Next, we run the framework testing for a 20\% relative increase in each version of the Brier score. We proceed by splitting the interval $[1,21]$ into 20 equally-sized bins: $[1,2],[2,3],\dots,[20,21]$, and for each version of the Brier score, we track the proportions of rejected nulls in each bin. In this example, we use betting to compute upper and lower confidence bounds on the source and target risks respectively. After performing data resampling 1000 times, we present aggregated results on Figure~\ref{subfig:risk_not_well_sep}. 

\begin{figure*}[ht]
\centering
\begin{subfigure}{0.45\textwidth}
         \centering
         \includegraphics[width=\textwidth]{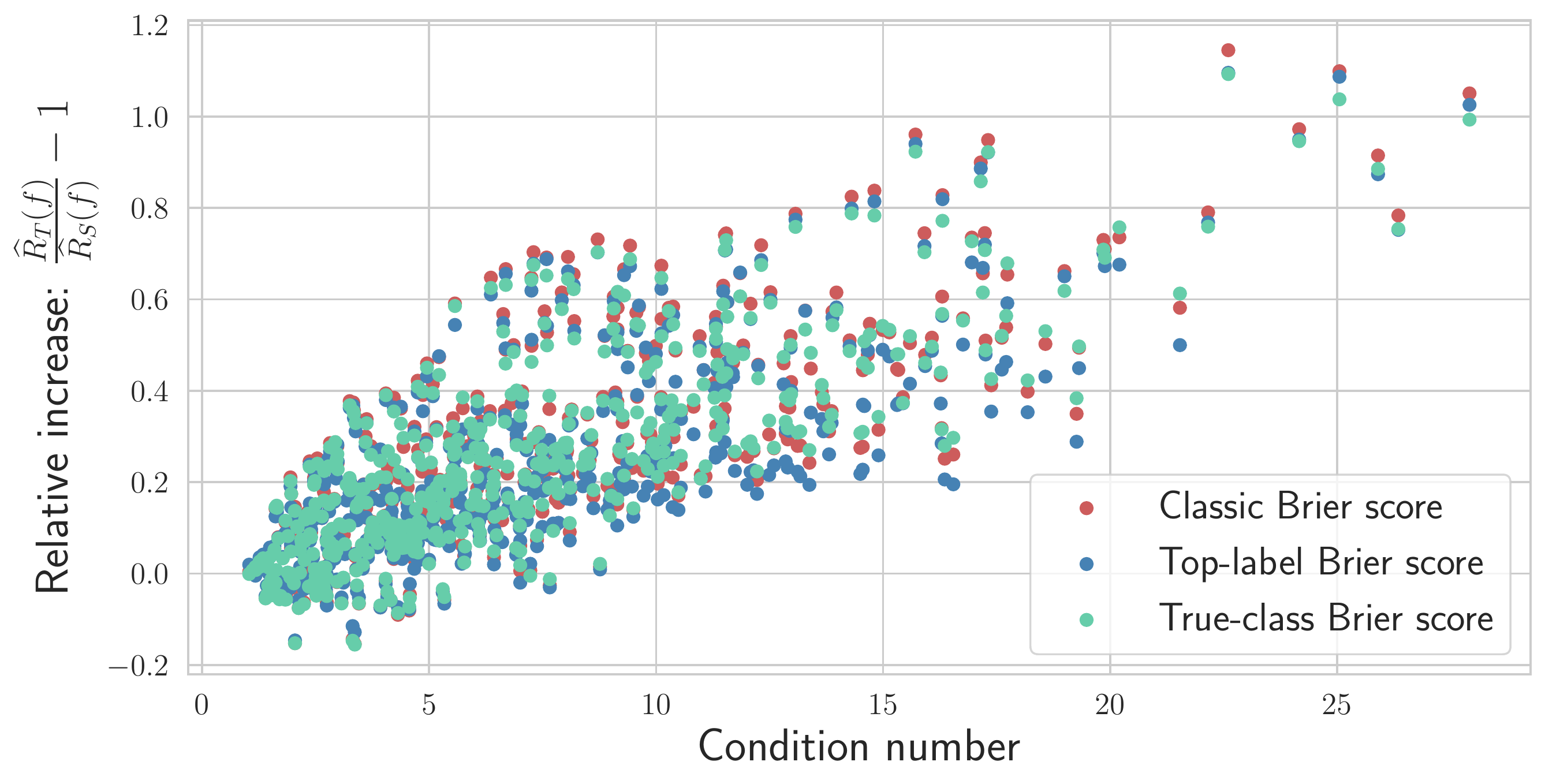}
     \caption{}
     \label{subfig:brier_scores_cond_number}
     \end{subfigure}~
  \begin{subfigure}{0.45\textwidth}
         \centering
         \includegraphics[width=\textwidth]{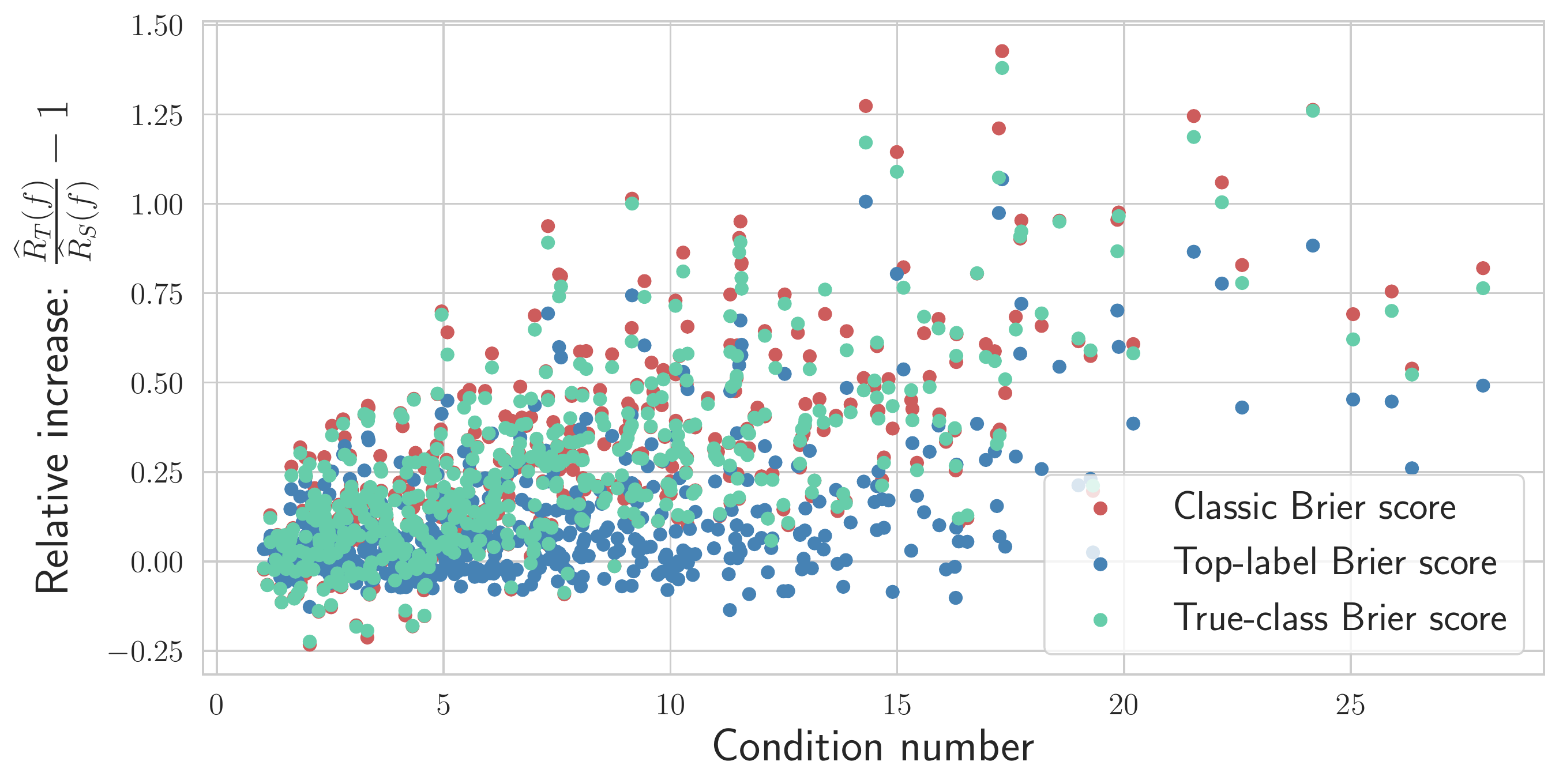}
     \caption{}
     \label{subfig:brier_scores_cond_number_cube}
     \end{subfigure}
     \caption{(a) Relative increase for different versions of the Brier score in the multiclass setting under label shift; (b) Relative increase for different versions of the Brier score in the multiclass setting under label shift when attention is restricted to the area where classes highly intersect (cube with vertices at $(\pm 1/2,\pm 1/2)$). While in general, all three versions of the Brier score suffer similarly on average, in the localized area where classes highly intersect, the top-label Brier score does not increase significantly under label shift.}
    \label{fig:sim_data_cond_number}
\end{figure*}
% \clearpage

\section{Proofs}\label{appsec:proofs}

\begin{proof}[Proof of Proposition~\ref{prop:drop_validity}]

For brevity, we omit writing $f$ for the source/target risks and the corresponding bound upper/lower confidence bounds. Starting with the absolute change and the null $H_0:R_T-R_S\leq \varepsilon_{\text{tol}}$, we have:
\begin{equation*}
\begin{aligned}
    &\Prob_{H_0}\roundbrack{\exists t\geq 1: \widehat{L}^{(t)}_T>\widehat{U}_S+\varepsilon_{\text{tol}}} \\
     =\ & \Prob_{H_0}\roundbrack{\exists t\geq 1: \roundbrack{\widehat{L}^{(t)}_T-R_T} -\roundbrack{\widehat{U}_S-R_S}>\varepsilon_{\text{tol}}-(R_T-R_S)}\\
    \leq \ & \Prob_{H_0}\roundbrack{\exists t\geq 1: \roundbrack{\widehat{L}^{(t)}_T-R_T} -\roundbrack{\widehat{U}_S-R_S}>0}.
\end{aligned}
\end{equation*}
Note that $\exists t\geq 1: (\widehat{L}^{(t)}_T-R_T) -(\widehat{U}_S-R_S)>0$ implies that either $\exists t\geq 1: \widehat{L}^{(t)}_T-R_T>0$ or $\widehat{U}_S-R_S<0$. Thus, invoking union bound yields:
\begin{equation*}
    \begin{aligned}
       & \Prob_{H_0}\roundbrack{\exists t\geq 1: \roundbrack{\widehat{L}^{(t)}_T-R_T} -\roundbrack{\widehat{U}_S-R_S}>0} \\
       \leq \ & \Prob\roundbrack{\exists t\geq 1: \widehat{L}^{(t)}_T-R_T>0}+\Prob\roundbrack{\widehat{U}_S-R_S<0}\\
        \leq \ & \delta_T+\delta_S,
    \end{aligned}
\end{equation*}
by construction and validity guarantees for $\widehat{L}_T^{(t)}$ and $\widehat{U}_S$. Similarly, considering the relative change, i.e., the null: $H'_0: R_T\leq (1+\varepsilon'_{\text{tol}})R_S$, we have:
\begin{equation*}
\begin{aligned}
    & \Prob_{H'_0}\roundbrack{\exists t\geq 1: \widehat{L}^{(t)}_T>(1+\varepsilon'_{\text{tol}})\widehat{U}_S} \\
    = \ & \Prob_{H'
    _0}\roundbrack{\exists t\geq 1: \roundbrack{\widehat{L}^{(t)}_T-R_T} -(1+\varepsilon'_{\text{tol}})\roundbrack{\widehat{U}_S-R_S}>(1+\varepsilon'_{\text{tol}})R_S-R_T}\\
    \leq \ & \Prob_{H'_0}\roundbrack{\exists t\geq 1: \roundbrack{\widehat{L}^{(t)}_T-R_T} -(1+\varepsilon'_{\text{tol}})\roundbrack{\widehat{U}_S-R_S}>0}.
\end{aligned}
\end{equation*}
Similarly, note that $\exists t\geq 1: (\widehat{L}^{(t)}_T-R_T) -(1+\varepsilon'_{\text{tol}})(\widehat{U}_S-R_S)>0$ implies that either $\exists t\geq 1: \widehat{L}^{(t)}_T-R_T>0$ or $\widehat{U}_S-R_S<0$. Thus, invoking union bound yields the desired result.

\end{proof}

\section{Primer on the upper and lower confidence bounds}\label{appsec:concentration}

This section contains the details for the concentration results used in this work. Results presented in this section are not new were developed in a series of recent works~\citep{ws2020est_means,howard2020time}. We follow the notation from~\citet{ws2020est_means} for consistency and use the superscript $(t)$ when referring to confidence sequences (CS) and $(n)$ when referring to confidence intervals (CI). 

\paragraph{Predictably-mixed Hoeffding's (PM-H) confidence sequence.} Then the upper and lower endpoints of the predictably-mixed Hoeffding's (PM-H) confidence sequence are given by:
\begin{equation*}\begin{aligned}
    L^{(t)}_{\pmh} & := \roundbrack{\frac{\sum_{i=1}^t\lambda_i Z_i}{\sum_{i=1}^t\lambda_i} - \frac{\log(1/\delta)+\sum_{i=1}^t \psi_H(\lambda_i)}{\sum_{i=1}^t\lambda_i}}, \\
    U^{(t)}_{\pmh} & := \roundbrack{\frac{\sum_{i=1}^t\lambda_i Z_i}{\sum_{i=1}^t\lambda_i} + \frac{\log(1/\delta)+\sum_{i=1}^t \psi_H(\lambda_i)}{\sum_{i=1}^t\lambda_i}},
\end{aligned}
\end{equation*}
where $\psi_H(\lambda) := \lambda^2/8$ and $\lambda_1,\lambda_2,\dots$ is a predictable mixture. We use a particular predictable mixture given by:
\begin{equation*}
    \lambda_t^{\pmh}:= \sqrt{\frac{8\log(1/\delta)}{t\log(t+1)}}\wedge 1.
\end{equation*}
When approximating the risk on the source domain, one would typically have a holdout sample of a fixed size $n$, and so one could either use the classic upper limit of the Hoeffding's confidence interval, which is recovered by taking equal $\lambda_i=\lambda = \sqrt{8\log(1/\delta)/n}$, $i=1,\dots,n$, in which case the upper and lower limits simplify to:
\begin{equation*}
    L^{(n)}_{\text{H}}:= \roundbrack{\frac{\sum_{i=1}^n Z_i}{n} - \sqrt{\frac{\log(1/\delta)}{2n}}}, \quad     U^{(n)}_{\text{H}}:= \roundbrack{\frac{\sum_{i=1}^n Z_i}{n} + \sqrt{\frac{\log(1/\delta)}{2n}}},
\end{equation*}
or by considering running intersection of the predictably mixed Hoeffding's confidence sequence: $(\min_{t\leq n} U^{(t)}_{\text{PM-H}}$, $\max_{t\leq n} L^{(t)}_{\pmh})$.

\paragraph{Predictably-mixed empirical-Bernstein (PM-EB) confidence sequence.} The upper and lower endpoints of the predictably-mixed empirical-Bernstein (PM-EB) confidence sequence are given by:
\begin{equation*}
\begin{aligned}
    L^{(t)}_{\pmeb} & := \frac{\sum_{i=1}^t\lambda_i Z_i}{\sum_{i=1}^t\lambda_i} - \frac{\log(1/\delta)+\sum_{i=1}^t v_i\psi_E(\lambda_i)}{\sum_{i=1}^t\lambda_i}, \\ U^{(t)}_{\pmeb} & := \frac{\sum_{i=1}^t\lambda_i Z_i}{\sum_{i=1}^t\lambda_i}+ \frac{\log(1/\delta)+\sum_{i=1}^t v_i\psi_E(\lambda_i)}{\sum_{i=1}^t\lambda_i},
\end{aligned}
\end{equation*}
where
\begin{equation*}
    v_i:=4\roundbrack{X_i-\widehat{\mu}_{i-1}}^2, \quad \text{and} \quad \psi_E(\lambda) := \roundbrack{-\log(1-\lambda)-\lambda}/4, \quad \text{for} \quad \lambda \in [0,1).
\end{equation*}
One particular choice of a predictable mixture $(\lambda_t^{\pmeb})_{t=1}^\infty$ is given by:
\begin{equation*}
    \lambda_t^{\pmeb} := \sqrt{\frac{2\log(1/\delta)}{\widehat{\sigma}_{t-1}^2t\log(1+t)}}\wedge c, \quad \widehat{\sigma}_t^2:= \frac{\frac{1}{4}+\sum_{i=1}^t(Z_i-\widehat{\mu}_i)^2}{t+1} , \quad \widehat{\mu}_t := \frac{\frac{1}{2}+\sum_{i=1}^t Z_i}{t+1},
\end{equation*}
for some $c\in (0,1)$. We use $c=1/2$ and also set $\widehat{\mu}_0 = 1/2$, $\widehat{\sigma}_0 = 1/4$. If given a sample of a fixed size $n$, we consider running intersection along with the predictable sequence given by:
\begin{equation*}
    \lambda^{\pmeb}_t = \sqrt{\frac{2\log(1/\delta)}{n\widehat{\sigma}^2_{t-1}}} \wedge c, \quad t=1,\dots,n.
\end{equation*}

\paragraph{Betting-based confidence sequence.} Tighter confidence intervals/sequences can be obtained by invoking tools from martingale analysis and deploying betting strategies for confidence intervals/sequences construction proposed in~\citep{ws2020est_means}. While those can not be computed in closed-form, empirically they tend to outperform previously considered confidence intervals/sequences. Recall that we are primarily interested in one-sided results, and for simplicity we discuss. For any $m\in [0,1]$, introduce a capital (wealth) process:
\begin{align*}
     \calK_t^{\pm}(m):= \prod_{i=1}^t \roundbrack{1\pm\lambda_i^{\pm}(m)\cdot\roundbrack{Z_i-m}}, 
\end{align*}
where $\curlybrack{\lambda_t^+(m)}_{t=1}^\infty$ and $\curlybrack{\lambda_t^-(m)}_{t=1}^\infty$  are $\squarebrack{0,1/m}$-valued and  $\squarebrack{0,1/(1-m)}$-valued predictable sequences respectively. A particular example of such predictable sequences we use is given by:
\begin{equation*}
    \lambda^+_t(m):= \abs{\dot{\lambda}^+_t} \wedge \frac{c}{m}, \quad \lambda^-_t(m):= \abs{\dot{\lambda}^-_t} \wedge \frac{c}{1-m}, \quad
\end{equation*}
where, for example, $c=1/2$ or $3/4$ and $\dot{\lambda}^{\pm}_t$ do not depend on $m$. Such choice guarantees, in particular, that the resulting martingale is nonnegative. For example, the wealth process $\calK_t^{+}(m)$ incorporates a belief that the true mean $\mu$ is larger than $m$ and the converse belief is incorporated in $\calK_t^{-}(m)$, that is, the wealth is expected to be accumulated under the corresponding belief (e.g., consider $m=0$ and the corresponding $\calK_t^+(0)$ with a high value, and $m=1$ and the corresponding $\calK_t^+(1)$ with a low value). Using that $\calK_t^{+}(m)$ is non-increasing in $m$, i.e., $m_2\geq m_1$, then $\calK_t^{+}(m_2)\leq \calK_t^{+}(m_1)$, we thus can use grid search (up to specified approximation error $\Delta_{\text{grid}}$) to efficiently approximate $L_{\text{Bet}}^{(t)} = \inf B_t^+$, where
\begin{equation*}
    \calB_t^+:= \curlybrack{m\in [0,1]: \calK_t^{+}(m)<1/\delta},
\end{equation*}
that is, the collection of all $m$ for which that the capital wasn't accumulated. Then we can consider $L_{\text{Bet}}^{(n)}= \max_{t\leq n} L_{\text{Bet}}^{(t)}$. When $m=\mu$ is considered, none of $\calK_t^{\pm}(\mu)$ is expected to be large, since by Ville's inequality:
\begin{equation*}
    \Prob\roundbrack{\exists t\geq 1: \calK_t^{+}(\mu) \geq 1/\delta} \leq \delta,
\end{equation*}
and thus we know that with high probability the true mean is larger than $L_{\text{Bet}}^{(n)}$. That is,
\begin{equation*}
\begin{aligned}
    \Prob\roundbrack{\mu<  L_{\text{Bet}}^{(n)}} &= \Prob\roundbrack{ \mu < \max_{t\leq n} L_{\text{Bet}}^{(t)}} = \Prob\roundbrack{\exists t\geq 1: \mu < \inf B_t^+}\\
    & = \Prob\roundbrack{\exists t\geq 1: \calK_t^{+}(\mu) \geq 1/\delta} \leq \delta.
\end{aligned}
\end{equation*}
By a similar argument, we get that with high probability, the true mean is less than $U_{\text{Bet}}^{(n)} = \min_{t\leq n}\sup B_t^-$:
\begin{equation*}
\begin{aligned}
    \Prob\roundbrack{\mu>  U_{\text{Bet}}^{(n)}} & = \Prob\roundbrack{\mu>  \min_{t\leq n}\sup B_t^-} =\Prob\roundbrack{\exists t\geq 1: \mu>  \sup B_t^-} \\
    & =\Prob\roundbrack{\exists t\geq 1: K_t^-(\mu)\geq 1/\delta}\leq \delta.
\end{aligned}
\end{equation*}

\paragraph{Conjugate-mixture empirical-Bernstein (CM-EB) confidence sequence.} Below, we present a shortened description of CM-EB and refer the reader to~\citet{howard2020time} for more details. 
% Consider the empirical Bernstein-type process:
% \begin{equation*}
    % L_t (\lambda) = \prod_{i=1}^t \exp \roundbrack{\lambda(Z_i-\mathbb{E}_{i-1}Z_i) - v_i\psi_E(\lambda)}
% \end{equation*}
Assume that one observes a sequence of random variables $Z_t$, bounded in $[a,b]$ almost surely for all $t$, and the goal is to construct a confidence sequence for $\mu_t := t^{-1}\sum_{i=1}^t\mathbb{E}_{i-1}Z_i$, the average conditional expectation. Theorem 4 in~\citet{howard2020time} states that for any  $(\widehat{Z}_t)$, $[a, b]$-valued predictable sequence, and any $u$, the sub-exponential uniform boundary with crossing probability $\alpha$ for scale $c = b - a$, it holds that:
\begin{equation*}
    \Prob\roundbrack{\forall t\geq 1: \abs{ \overline{Z}_t-\mu_t}<\frac{u\roundbrack{\sum_{i=1}^t(Z_i-\widehat{Z}_i)^2}}{t}}\geq 1-2\alpha,
\end{equation*}
% \begin{theorem}[Theorem 4, \citet{howard2020time}]
% Suppose that $Z_t\in [a,b]$ a.s. for all $t$. Let $(\widehat{Z}_t)$ be any $[a, b]$-valued predictable sequence, and let $u$ be any sub-exponential uniform boundary with crossing probability $\alpha$ for scale $c = b - a$. Then
% \begin{equation*}
%     \Prob\roundbrack{\forall t\geq 1: \abs{ \overline{Z}_t-\mu_t}<\frac{u\roundbrack{\sum_{i=1}^t(Z_i-\widehat{Z}_i)^2}}{t}}\geq 1-2\alpha,
% \end{equation*}
% where $\mu_t := t^{-1}\sum_{i=1}^t\mathbb{E}_{i-1}Z_i$ is the average conditional expectation.
% \end{theorem}
where a reasonable choice for the predictable sequence $(\widehat{Z}_t)$ is given by $\widehat{Z}_t = (t-1)^{-1}\sum_{i=1}^{t-1}Z_i$. 

The key challenge which is addressed by conjugate mixtures is obtaining sublinear uniform boundaries that allows the radius of the confidence sequences to shrink to zero asymptotically. When a closed form of the confidence sequence is not required, the gamma-exponential mixture generally yields the tightest bounds. The procedure relies on the following \emph{mixing} result (Lemma 2, \citet{howard2020time}) which states that for any $\alpha\in (0,1)$ and any chosen probability distribution $F$ on $[0,\lambda_{\max})$:
\begin{equation*}\label{eq:cm_def}
    u^{\mathrm{CM}}_\alpha(v) := \sup \curlybrack{s:\in\mathbb{R}: \underbrace{\int \exp\roundbrack{\lambda s-\psi(\lambda)v}\mathrm{d}F(\lambda)}_{=:m(s,v)}<\frac{1}{\alpha}},
\end{equation*}
yields a sub-$\psi$ uniform boundary with crossing probability $\alpha$. When the gamma distribution is used for mixing, $m(s,v)$ has a closed form given in [Proposition 9, \citet{howard2020time}].
% \begin{lemma}[Lemma 2, \citet{howard2020time}]
% For any probability distribution $F$ on $[0,\lambda_{\max})$ and any $\alpha\in (0,1)$,
% is a sub-$\psi$ uniform boundary with crossing probability $\alpha$, so long as the supermartingale $(L_t)$ of Definition 1 is product measurable when the underlying probability space is augmented with the independent random variable $\lambda$.
% \end{lemma}
Subsequently, the resulting \emph{gamma-exponential mixture} boundary $u^{\mathrm{CM}}_\alpha(v)$ is computed by numerically solving the equation $m(s,v)=1/\alpha$ in $s$. \citet{howard2020time} provide the packages for computing the corresponding confidence sequence.

\section{Experiments on simulated data}\label{appsec:sim_label_shift}

Figure~\ref{fig:sim_data_vis} illustrates data samples for two settings where the presence of label shift is not expected to cause degradation in model performance (measured in terms of absolute increase in misclassification risk) for the first one ($\mu_0=(-2,0)^\top$, $\mu_1=(2,0)^\top$), but label shift may potentially degrade performance for the second ($\mu_0=(-1,0)^\top$, $\mu_1=(1,0)^\top$). For both cases, samples follow the same data generation pipeline as in Section~\ref{sec:intro} with only changes in class centers.   

\begin{figure*}[ht]
    \centering
    \begin{subfigure}{0.45\textwidth}
        \centering
        \includegraphics[width=\textwidth]{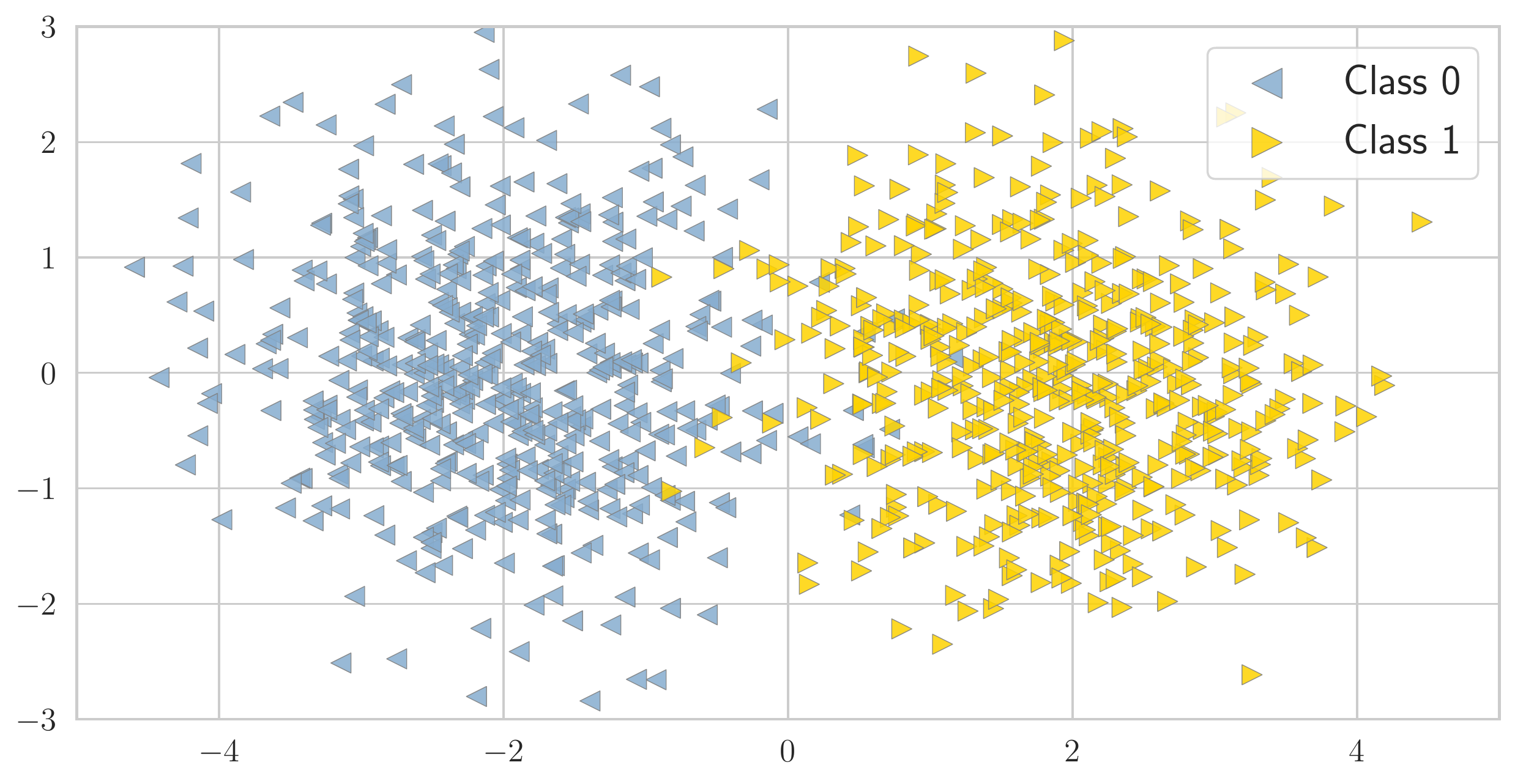}
    \caption{}
    % \label{subfig:well_sep_data}
    \end{subfigure}%
    ~ 
     \begin{subfigure}{0.45\textwidth}
         \centering
         \includegraphics[width=\textwidth]{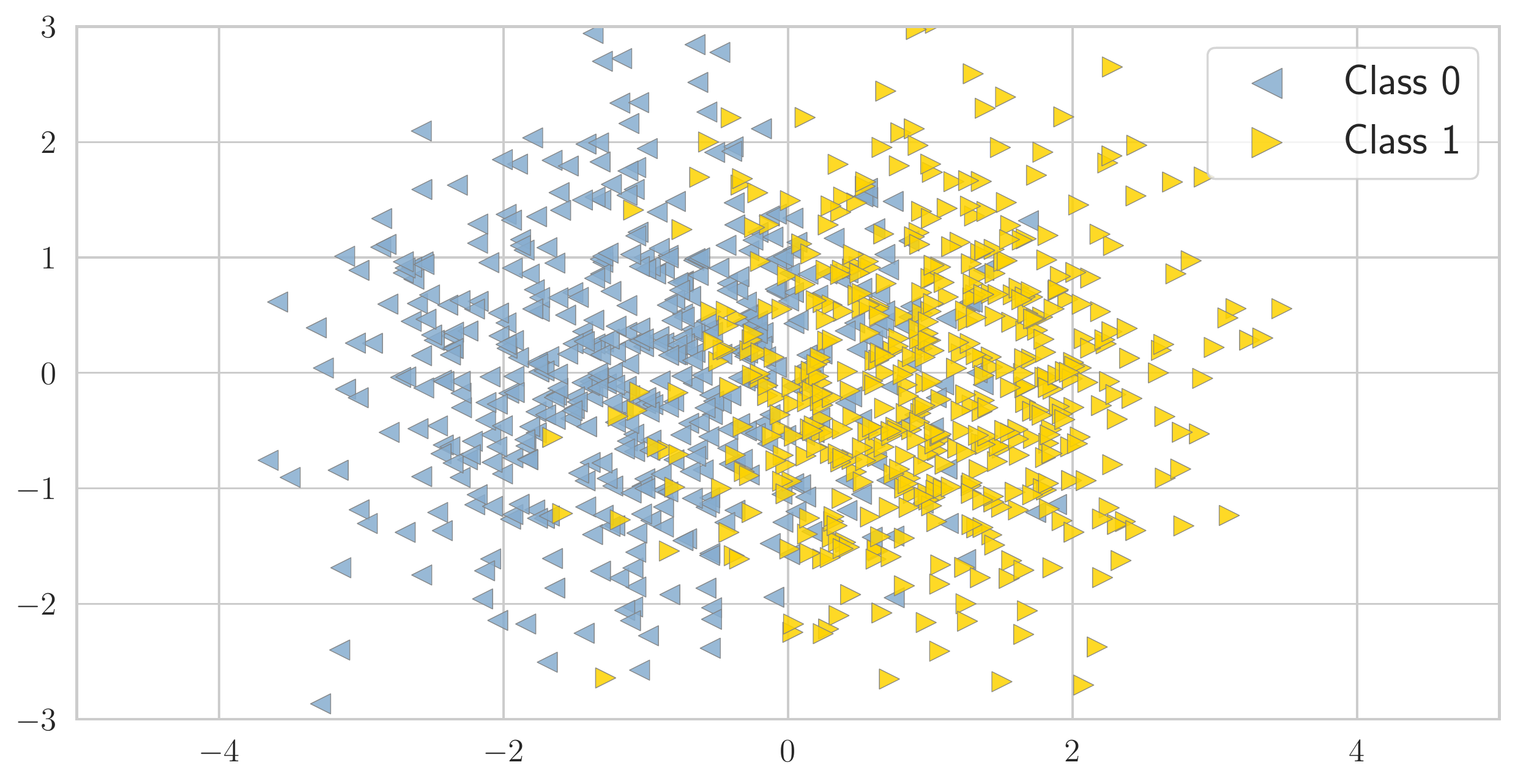}
     \caption{}
    %  \label{subfig:not_well_sep_data}
     \end{subfigure}
    \caption{(a) Simulated dataset with well-separated classes. Presence of label shift presumably will not lead to a high absolute increase in the misclassification risk. (b) In contrast, when the classes are \textit{not} well-separated, presence of label shift presumably might hurt the misclassification risk.}
    \label{fig:sim_data_vis}
\end{figure*}
% \clearpage

\subsection{Brier score as a target metric}\label{appsubsec:brier_sim}

Here we replicate the empirical study from Section~\ref{subsec:sim_data_exp} but use the Brier score as a target metric. Recall that all three multiclass versions of the Brier score discussed in this work reduce to the same loss in the binary setting. First, we compare upper confidence bounds for the Brier score computed by invoking different concentration results on Figure~\ref{fig:approx_source_brier}. Similar to the misclassification risk, variance-adaptive confidence bounds exploit the low-variance structure and are tighter when compared against the non-adaptive one. 

Next, we perform empirical analysis of the power of the testing framework. We take $\varepsilon_{\text{tol}}=0.1$ which corresponds to testing for a 10\% relative increase in the Brier score. We take $n_S=1000$ data points from the source distribution to compute upper confidence bound on the source risk $\widehat{U}_S(f)$. Subsequently, we sample the data from the target distribution in batches of 50, with maximum number of samples from the target set to be 2000. On Figure~\ref{subfig:label_shift_number_of_rejections_brier}, we present the proportion of cases when the null hypothesis is rejected out of 250 simulations performed for each candidate class 1 probability. On Figure~\ref{subfig:label_shift_number_of_samples_brier}, we illustrate average sample size from the target domain that was needed to reject the null hypothesis. When a stronger and more harmful label shift is present, less samples are required to reject the null, and moreover, the most powerful tests utilize upper/lower confidence bounds obtained via the betting approach. On Figure~\ref{subfig:label_shift_iid_vs_non_iid_brier}, we present the comparison of different time-uniform lower confidence bounds. 

\begin{figure*}[ht]
    \centering
     \begin{subfigure}{0.45\textwidth}
         \centering
         \includegraphics[width=\textwidth]{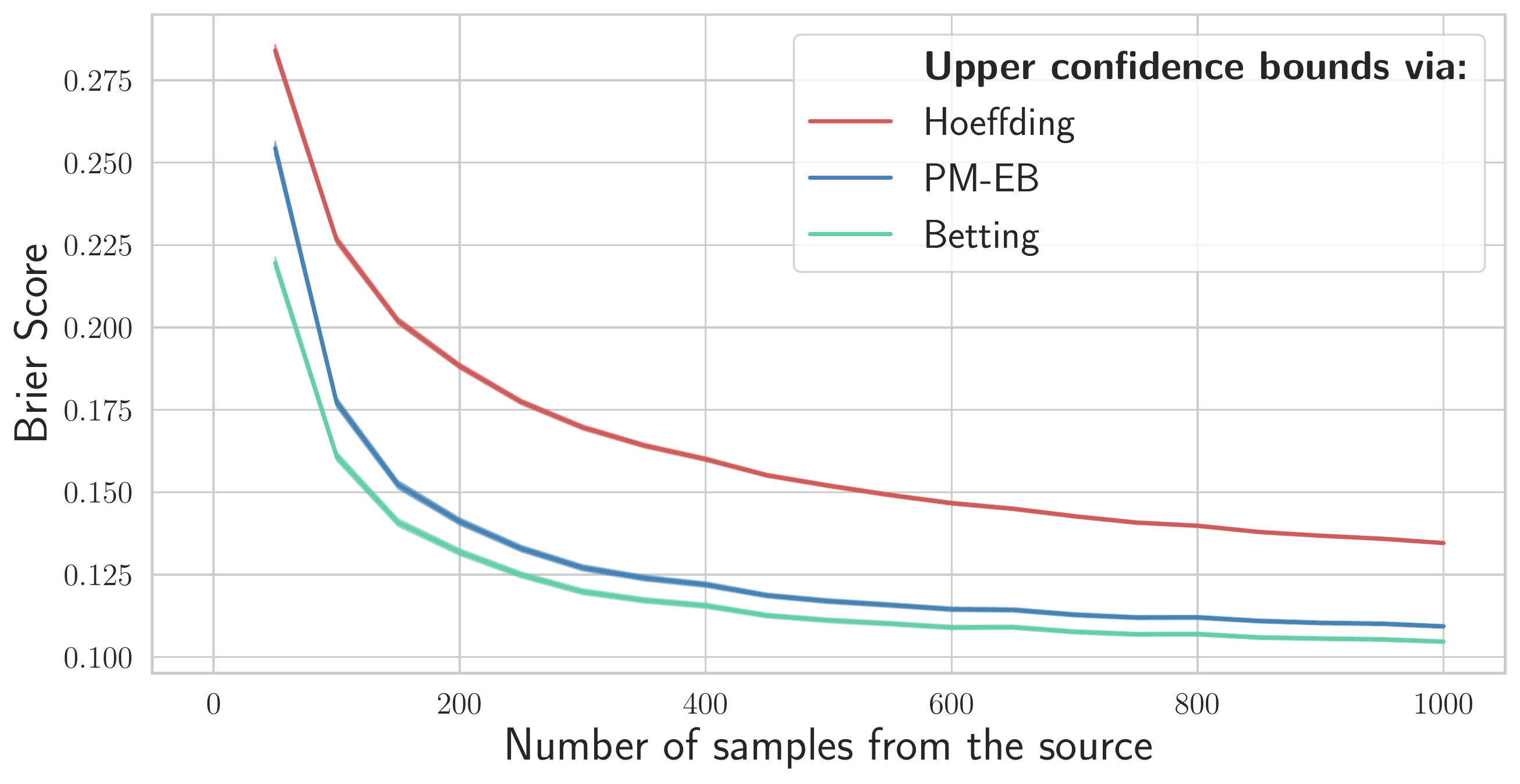}
     \caption{}
    %  \label{subfig:approx_source_brier}
     \end{subfigure}
     ~
     \begin{subfigure}{0.45\textwidth}
         \centering
         \includegraphics[width=\textwidth]{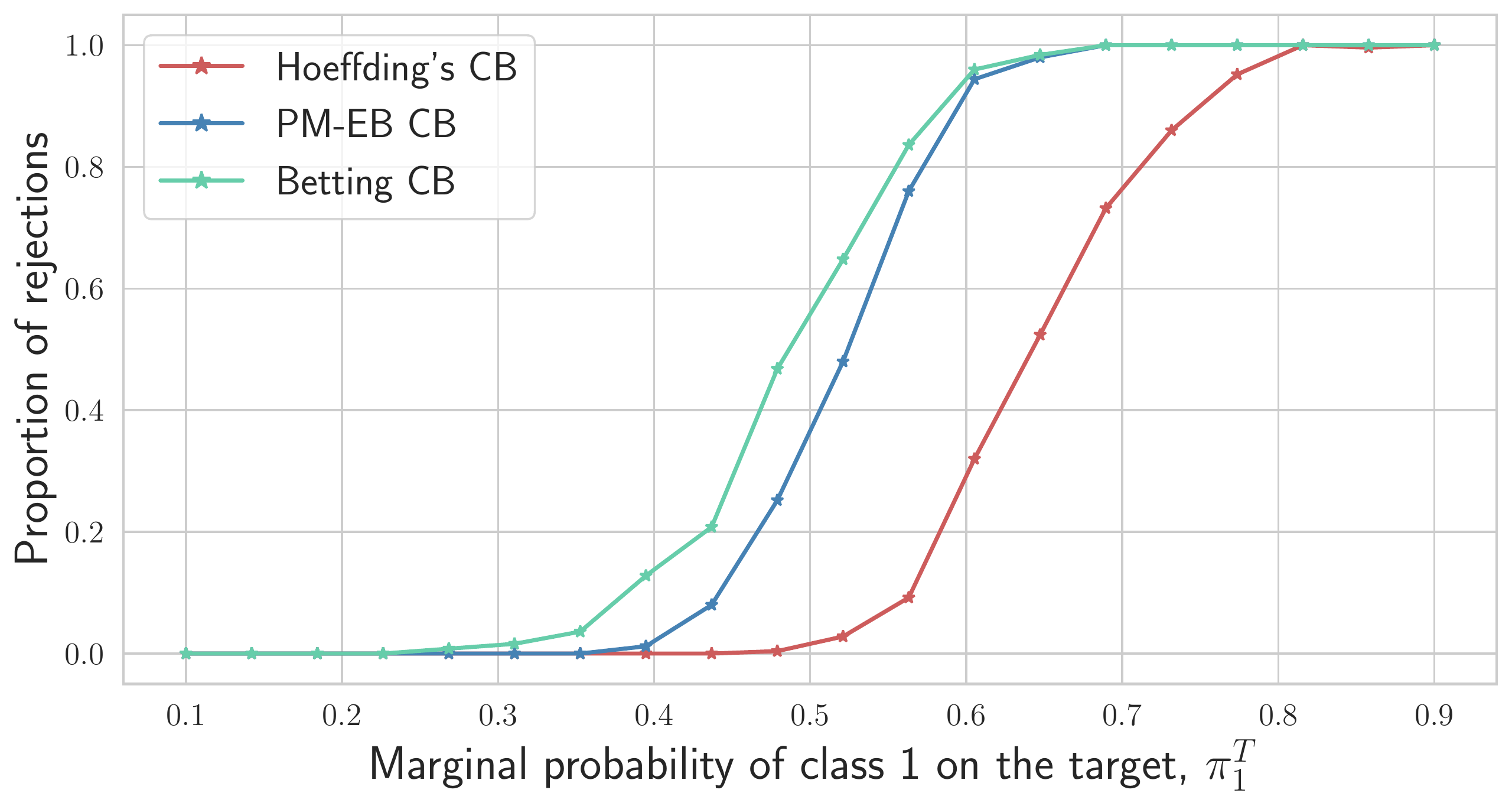}
     \caption{}
     \label{subfig:label_shift_number_of_rejections_brier}
     \end{subfigure}
     ~
     \begin{subfigure}{0.45\textwidth}
         \centering
         \includegraphics[width=\textwidth]{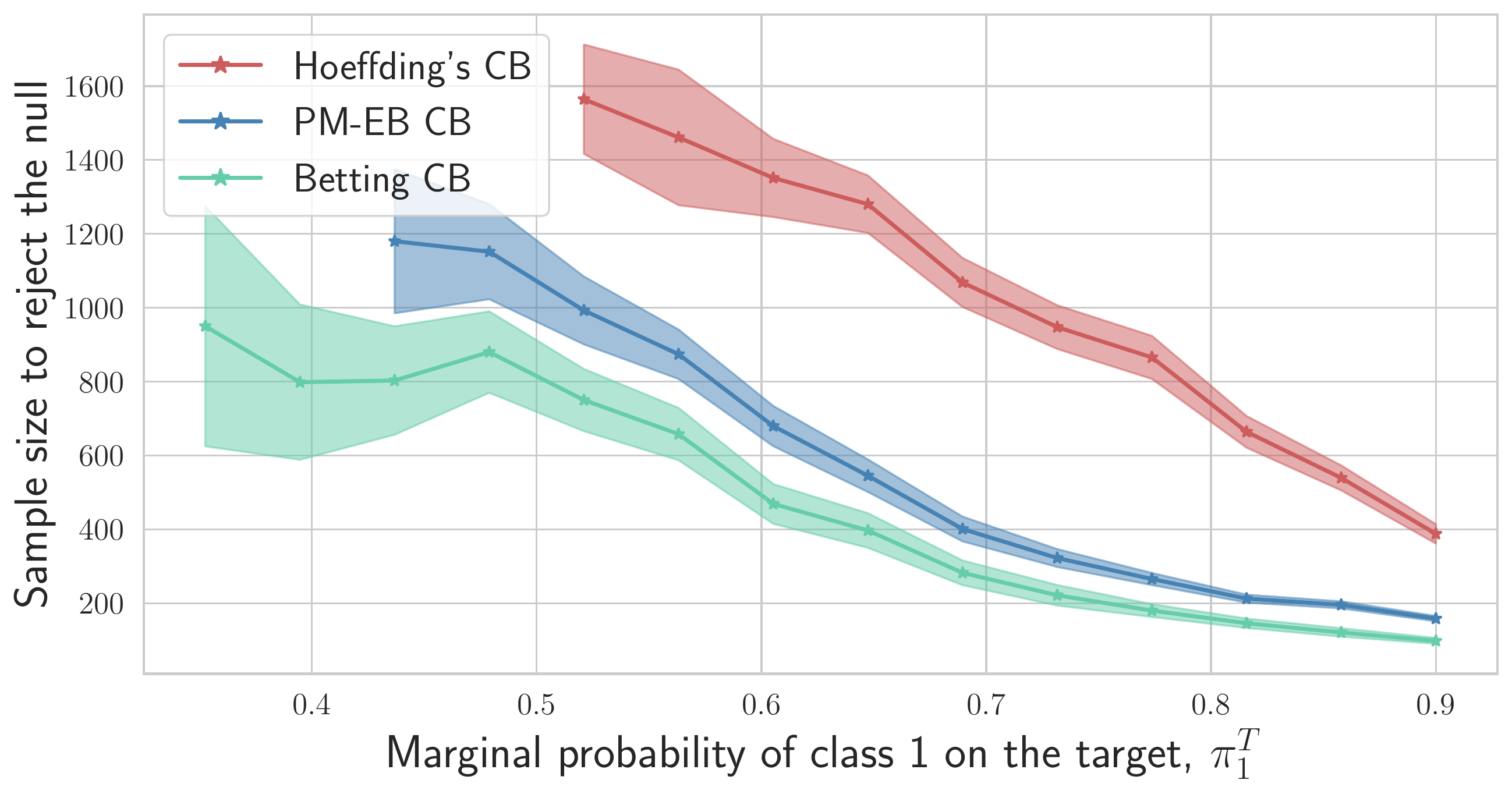}
     \caption{}
     \label{subfig:label_shift_number_of_samples_brier}
     \end{subfigure}
     ~
     \begin{subfigure}{0.45\textwidth}
         \centering
         \includegraphics[width=\textwidth]{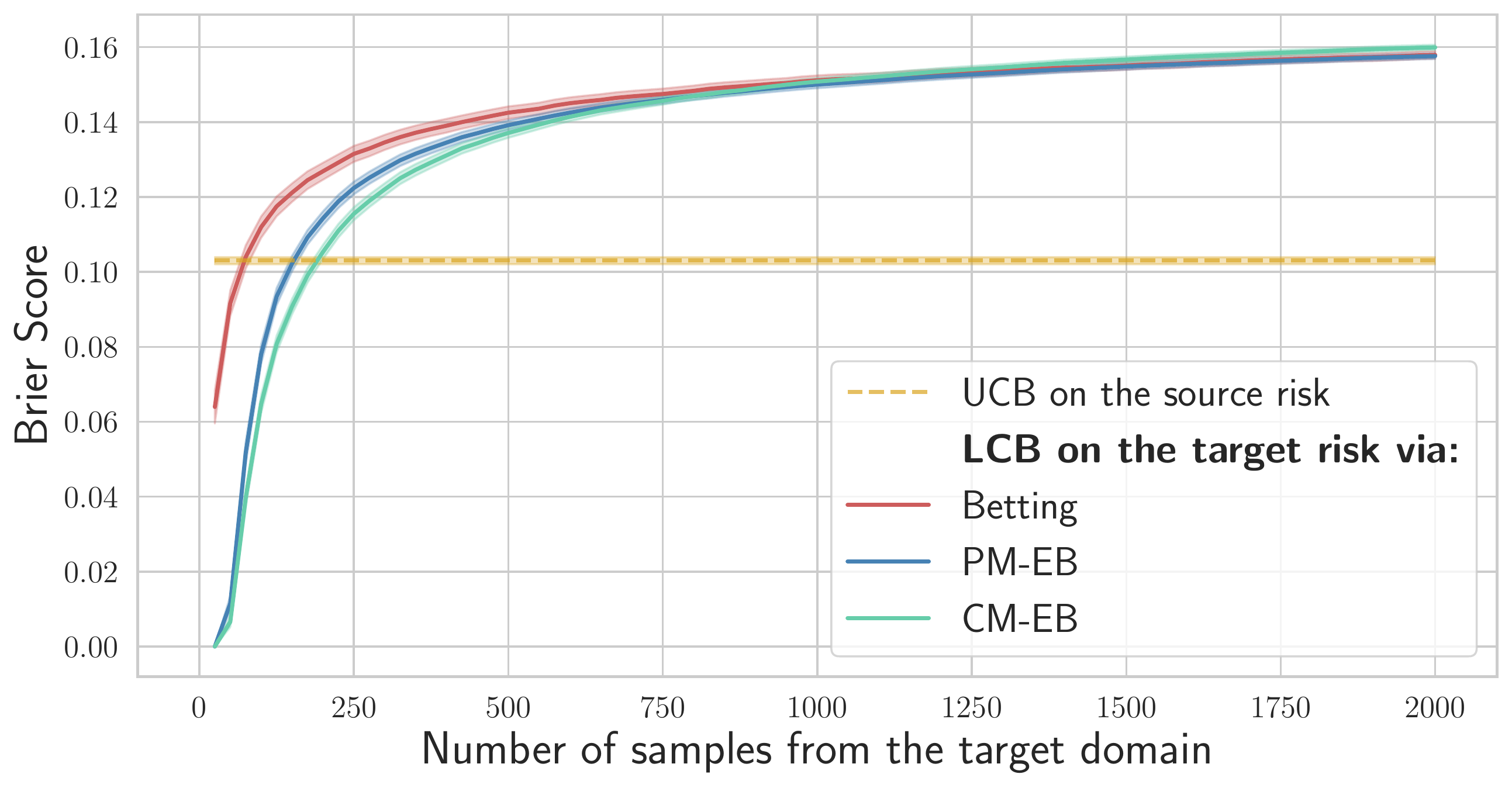}
     \caption{}
     \label{subfig:label_shift_iid_vs_non_iid_brier}
     \end{subfigure}
    \caption{(a) Upper confidence bounds $\widehat{U}_S(f)$ on the Brier score for the source domain. Similar to the misclassification risk, variance-adaptive confidence bounds are tighter when compared against the Hoeffding's one. For each fixed number of data points from the source domain used to compute $\widehat{U}_S(f)$, presented results are aggregated over 1000 random data draws. (b) Proportion of null rejections made by the procedure when testing for 10\% relative increase of the Brier score. (c) Average sample size from the target distribution that was needed to reject the null. Invoking tighter concentration results allows to raise an alarm after processing less samples from the target domain. (d) Different lower/upper confidence bounds on the target/source domain for the Brier score. }
    \label{fig:approx_source_brier}
\end{figure*}
% \clearpage

\section{Experiments on real datasets}\label{appsec:real_data_sims}

% \subsection{Simulations on MNIST-c dataset}\label{appsubsec:mnist_c_sims}

\subsection{MNIST-C simulation}

\paragraph{Architecture and training.} For MNIST-C dataset, we train a shallow CNN with two convolutional layers (each with $3\times 3$ kernel matrices), each followed by max-pooling layers. Subsequently, the result is flattened and followed by a dropout layer ($p=0.5$), a fully-connected layer with 128 neurons and an output layer. Note that the network is trained on original (clean) MNIST data, which is split split into two folds with 10\% of data used for validation purposes. All images are scaled to $[0,1]$ range before the training is performed.

\paragraph{Training multiple networks.} To validate observations regarding shift malignancy from Section~\ref{subsec:real_data_exp}, we train 5 different networks (following the same training protocol) and report aggregated (over 25 random ordering of the data from the target) results on Figure~\ref{subfig:mnist_c_trained_multiple}. The observation that applying translation to the MNIST images represents a harmful shift is consistent across all networks. 

\begin{figure*}[ht]
    \centering
    \includegraphics[width=0.7\textwidth]{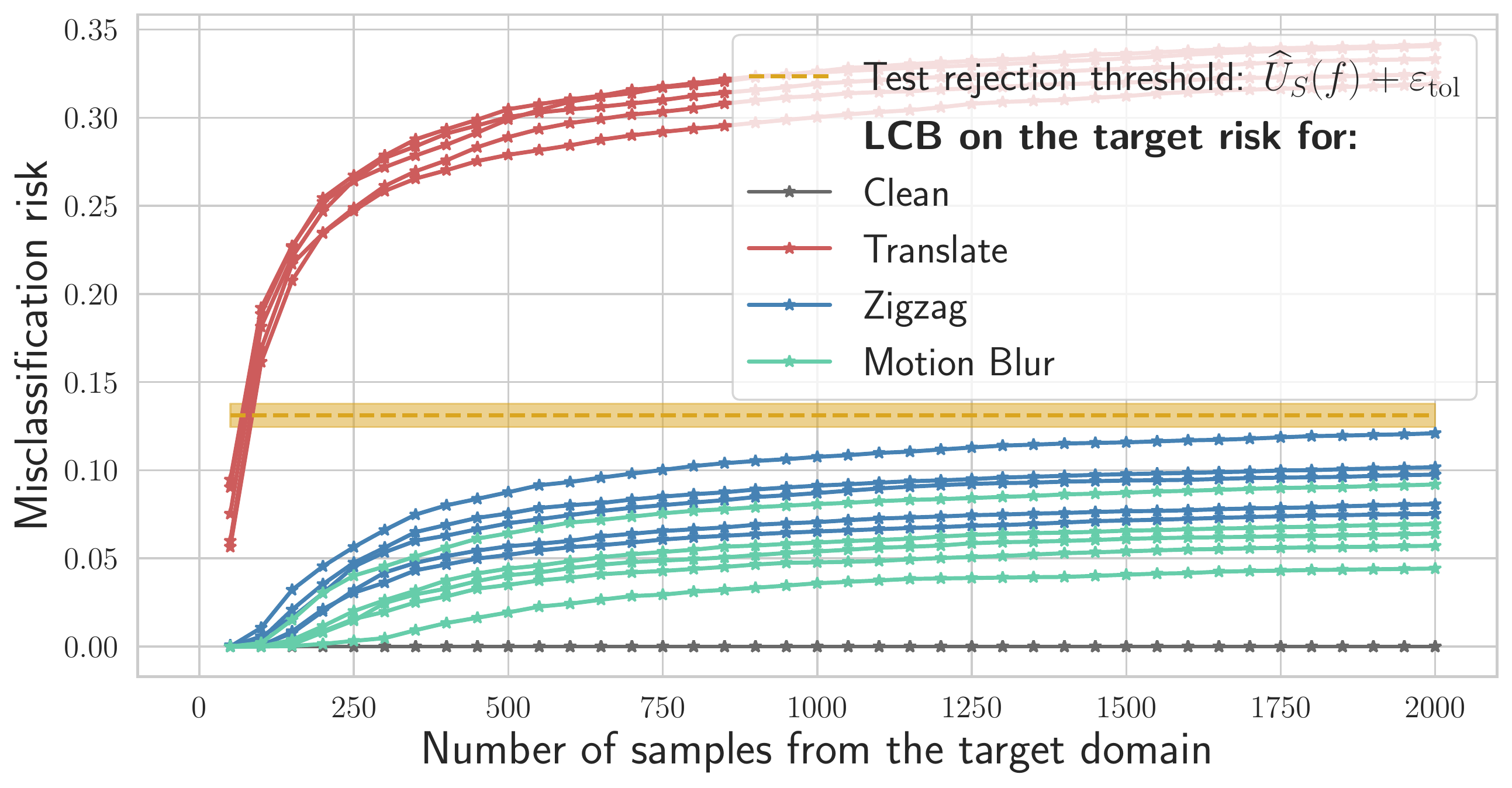}
    \caption{Lines of the same color correspond to 5 different CNNs. For each network, the results aggregated over 25 random runs of the testing framework for randomly permuted test data. Applying translate effect is consistently harmful to the performance of CNNs trained on clean MNIST data. The bar around the yellow dashed line corresponds to 2 standard deviations. }
    \label{subfig:mnist_c_trained_multiple}
\end{figure*}
% \clearpage

\subsection{CIFAR-10-C simulation}

\paragraph{Architecture and training.} The model underlying a set-valued predictor is a standard ResNet-32. It is trained for 50 epochs on the original (clean) CIFAR-10 dataset, without data augmentation, using $10\%$ of data for validation purposes. All images are scaled to $[0,1]$ range before the training is performed. The accuracy of the resulting network is $\approx 80.5\%$. 

\paragraph{Transforming a point predictor into a set-valued one.} To transform a point predictor into a set-valued one, we consider a sequence of candidate prediction sets $S_\lambda(x)$, parameterized by univariate parameter $\lambda$, with larger $\lambda$ leading to larger prediction sets. Under the considered setting, the underlying predictor is an estimator of the true conditional probabilities $\pi_y(x) = \Prob\roundbrack{Y=y\mid X=x}$. Given a learnt predictor $f$, we can define
\begin{equation*}
    \rho_y(x;f) := \sum_{k=1}^K f_k(x) \cdot \indicator{f_{k}(x)>f_y(x)}
\end{equation*}
to represent estimated probability mass of the labels that are more likely than $y$. Subsequently, we can consider the following sequence of set-valued predictors:
\begin{equation*}
    S_\lambda(x) = \curlybrack{y\in\calY: \rho_y(x;f)\leq \lambda}, \quad \lambda \in \Lambda := [0,1],
\end{equation*}
that is the sequence is based on the estimated density level sets, starting by including the most likely labels according to the predictions of $f$. To tune the parameter $\lambda$, we follow~\citet{bates2021rcps}: we keep a labeled holdout \emph{calibration set}, and use it to pick:
\begin{equation*}
    \widehat{\lambda} = \inf\curlybrack{\lambda \in \Lambda: \ \widehat{R}^+(\lambda')<\beta, \ \forall \lambda'>\lambda}, 
\end{equation*}
where $\widehat{R}^+(\lambda')$ is an upper confidence bound for the risk function at level $\beta$. The resulting set-valued predictor is then $(\beta,\gamma)$-RCPS, that is,
\begin{equation*}
    \Prob\roundbrack{R(S_{\widehat{\lambda}})\leq \beta}\geq 1-\gamma,
\end{equation*}
under the i.i.d.\ assumption. More details and validity guarantees can be found in~\citet{bates2021rcps}.

\paragraph{Set-valued predictor when $\beta=0.05$ is used as a prescribed error level.} In contrast to $\beta=0.1$ used in the main paper, we also consider decreasing $\beta$ to 0.05, which in words, corresponds to increasing a desired coverage level of the resulting set-valued predictor.  Figure~\ref{subfig:size_vs_alpha_and_cor} compares average sizes of the prediction sets for two candidate values: $\beta_1=0.05$ and $\beta_2=0.1$, when the set-valued predictor is passes either clean CIFAR-10 data, or images to which fog corruption has been applied. As expected, decreasing $\beta$ leads to larger prediction sets on average, with the size increasing when corrupted images are passes as input, that the size reflects uncertainty in prediction. In Figure~\ref{subfig:cifar_alpha_05}, we observe that when we run the testing framework for the set-valued predictor corresponding to $\beta=0.05$, only the most severe version of corruptions by adding fog is consistently marked as harmful, and thus raising an alarm. Similar to Section~\ref{subsec:real_data_exp}, we also use $\varepsilon_{\text{tol}}=0.05$.

\begin{figure*}[ht]
    \centering
    \begin{subfigure}{0.45\textwidth}
        \centering
          \includegraphics[width=\textwidth]{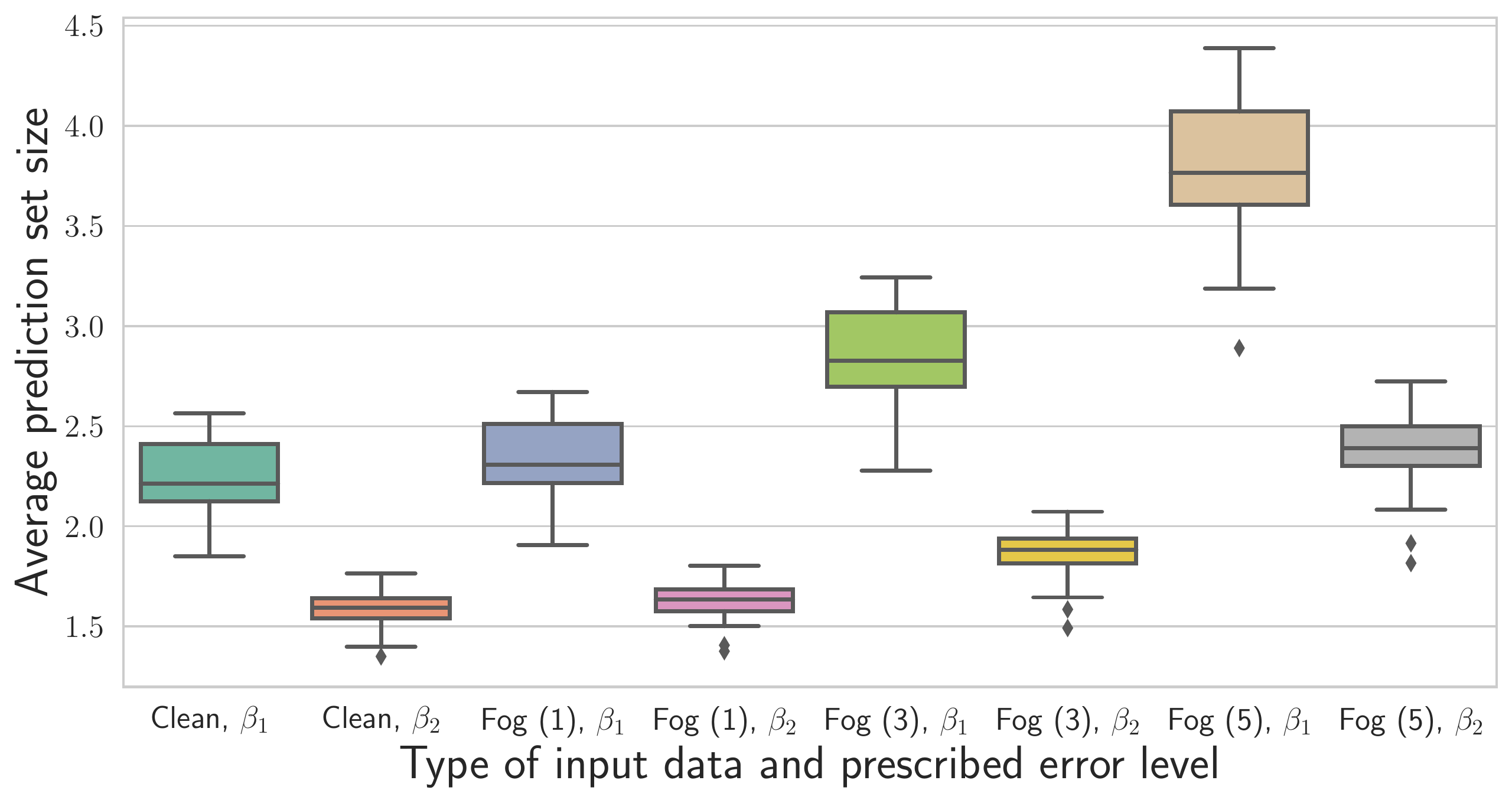}
    \caption{}
    \label{subfig:size_vs_alpha_and_cor}
    \end{subfigure}%
    ~ 
     \begin{subfigure}{0.45\textwidth}
         \centering
         \includegraphics[width=\textwidth]{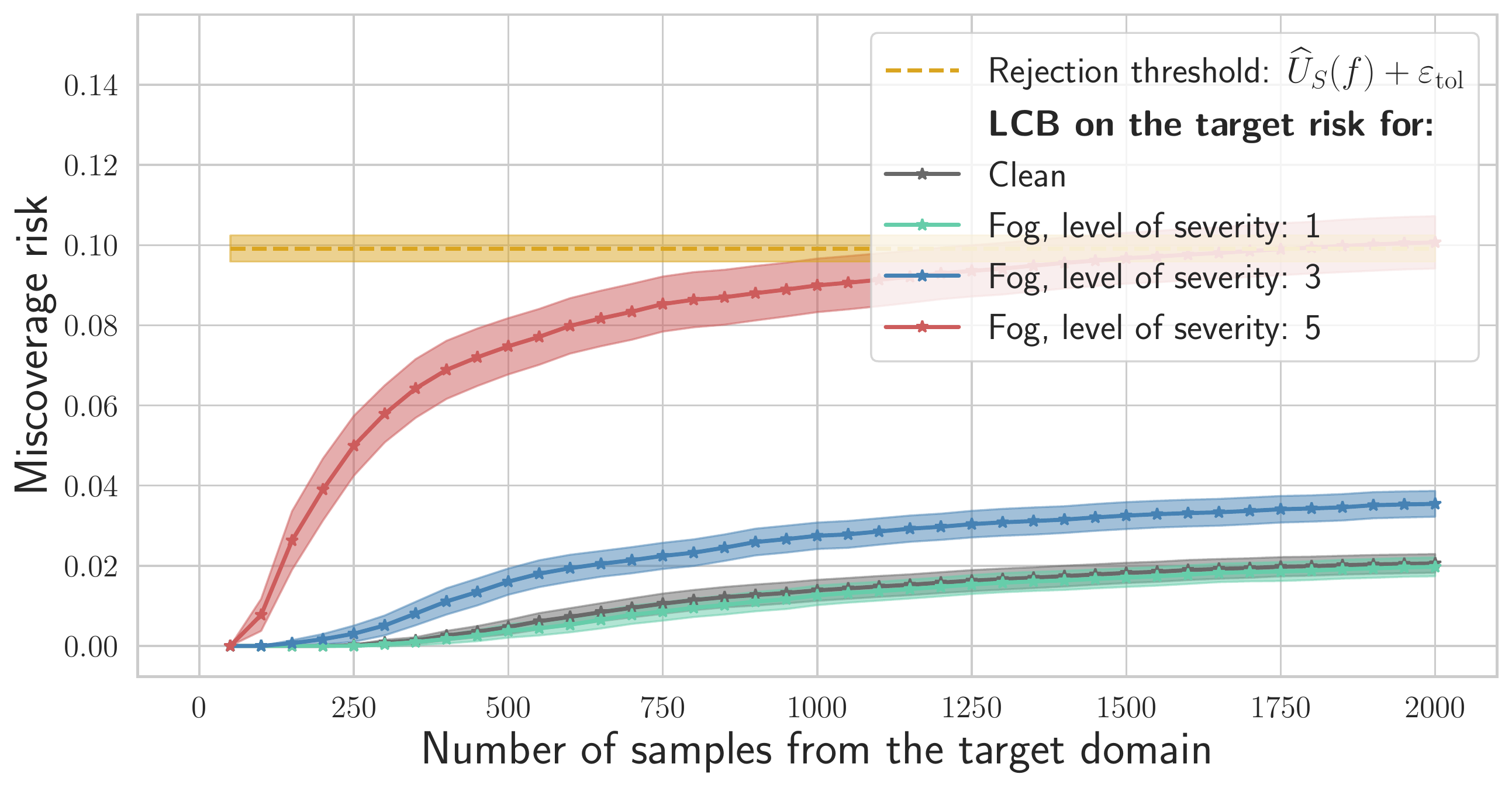}
     \caption{}
    \label{subfig:cifar_alpha_05}
     \end{subfigure}
     \caption{(a) Average size of prediction sets for $\beta_1=0.05$ and $\beta_2=0.1$ and different types of input data. First, lower $\beta$, corresponding to higher desired coverage, leads to larger prediction sets on average. Second, average size of the prediction sets increases when more corrupted images are passed as input, thus reflecting uncertainty in prediction. (b) Results of running the framework when $\beta_1=0.05$ is used to construct a wrapper. Observe that setting a lower prescribed error level $\beta$ and thus enlarging resulting prediction sets partially mitigates the impact of corrupting images with the fog effect. However, the most severe form of such corruption still  consistently leads to rejecting the null. The bars around dashed and solid lines correspond to 2 standard deviations. }
    \label{fig:mnist_and_cifar}
\end{figure*}

\section{Testing for harmful covariate shift}\label{appsec:cov_shift}

In this section, we consider a case when covariate shift is present on the target domain, that is, when the marginal distribution $P(X)$ changes but $P(Y|X)$ does not. Consider the binary classification setting with accuracy being a target metric. It is known that the optimal decision rule for this case is the Bayes-optimal rule: 
\begin{equation*}
    f^\star(x) = \indicator{\Prob\roundbrack{Y=1\mid X=x}\geq 1/2},
\end{equation*}
which minimizes the probability of misclassifying a new data point. Then one might expect that a change in the marginal distribution of $X$ does not necessarily call for retraining if a learnt predictor $f$ is `close' to $f^\star$ (which itself could be a serious assumption to make). However, a change in the marginal distribution of $X$ could indicate, in particular, that one should reconsider certain design choices made during training a model. To illustrate when it could be useful, we consider the following data generation pipeline:

\begin{enumerate}
    \item Initially, each point is assigned either to the origin or to a circle of radius 1 centered at the origin with probability $1/2$ .
    \item For points assigned to the origin, the coordinates are sampled from multivariate Gaussian distribution with zero mean and rescaled identity covariance matrix: $\frac{1}{36}I_2$.
    \item For points assigned to the circle, the coordinates are sampled from multivariate Gaussian distribution with the same covariance matrix but with the mean vector:
    \begin{equation*}
    \begin{cases}
        \mu_x = \cos(\varphi),\\
        \mu_y = \sin(\varphi),
    \end{cases}
    \end{equation*}
    where $\varphi^S\sim \mathrm{Unif}([-\pi/3,\pi/3])$ on the source domain and $\varphi^T\sim \mathrm{Unif}([0,2\pi])$ on the target domain (see Figure~\ref{fig:cov_shift_data} for a visualization).
    \item Then points are assigned the corresponding labels according to:
    \begin{equation*}
        \Prob \roundbrack{Y=1\mid X=x} = \indicator{x_1^2+x_2^2\geq \frac{1}{2}}.
    \end{equation*}
\end{enumerate}

It is easy to see that a linear predictor, e.g., logistic regression, can achieve high accuracy if deployed on the data sampled from the source distribution. However, it will clearly fail to recover the true relationship between the features and responses. In this case, a change in the marginal distribution of $X$ might indicate that updating a functional class could be necessary. On this data, we also run the proposed framework testing for a 10\% increase in misclassification risk ($\varepsilon_{\text{tol}}=0.1$). At each run, we use 200 points to train a logistic regression and 100 points to estimate the betting-based upper confidence bound on the source risk. On the target domain, we use the lower confidence bound due to conjugate-mixture empirical-Bernstein (CM-EB). The results presented on Figure~\ref{subfig:cov_shift_runs} (which have been aggregated over 100 random data draws) illustrate that the framework successfully detects a harmful shift, requiring only a small number of samples to do so.

\begin{figure*}[ht]
    \centering
    \begin{subfigure}{0.45\textwidth}
        \centering
          \includegraphics[width=0.85\textwidth]{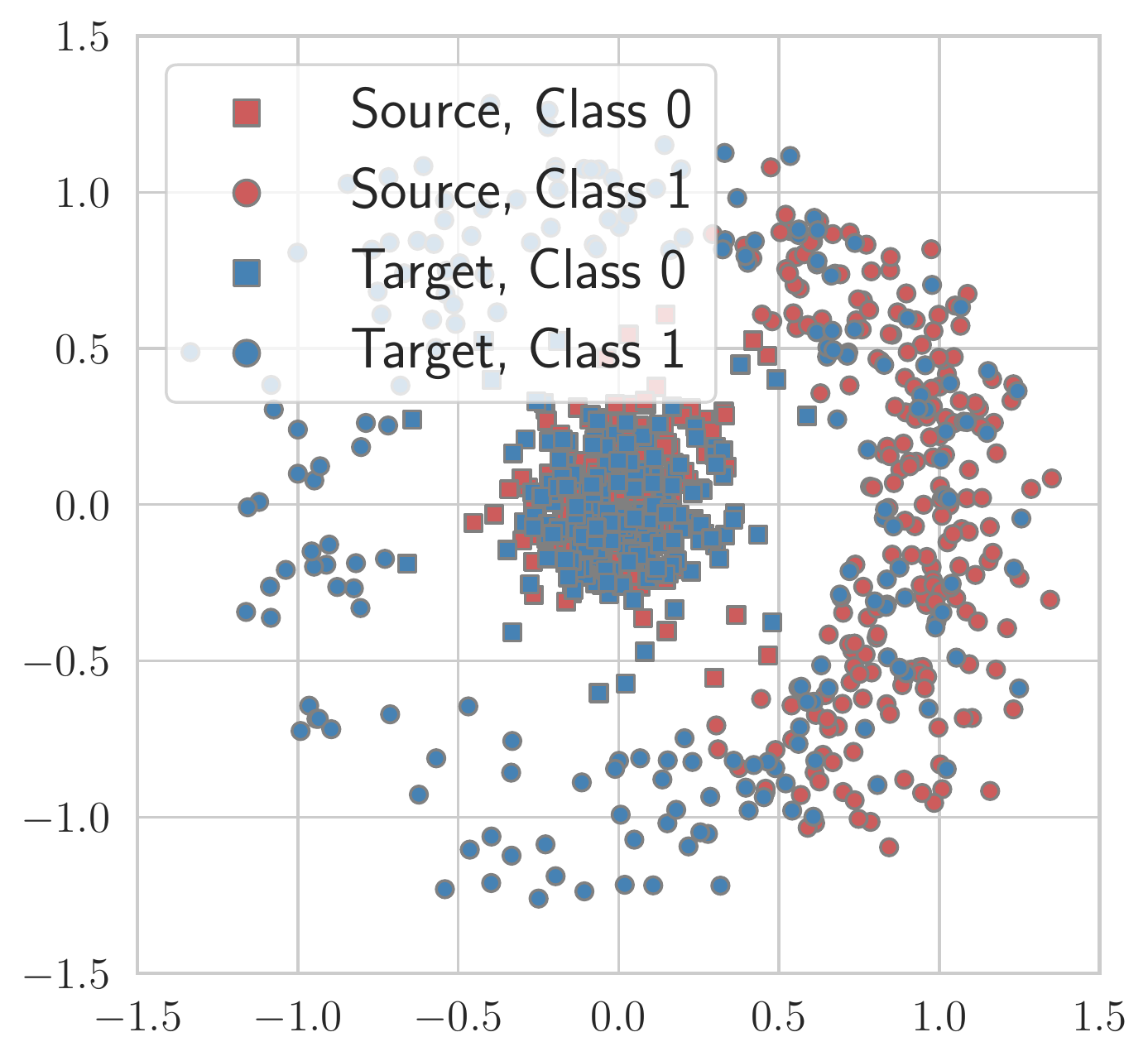}
    \caption{}
    % \label{subfig:size_vs_alpha_and_cor}
    \end{subfigure}%
    ~ 
     \begin{subfigure}{0.45\textwidth}
         \centering
         \includegraphics[width=\textwidth]{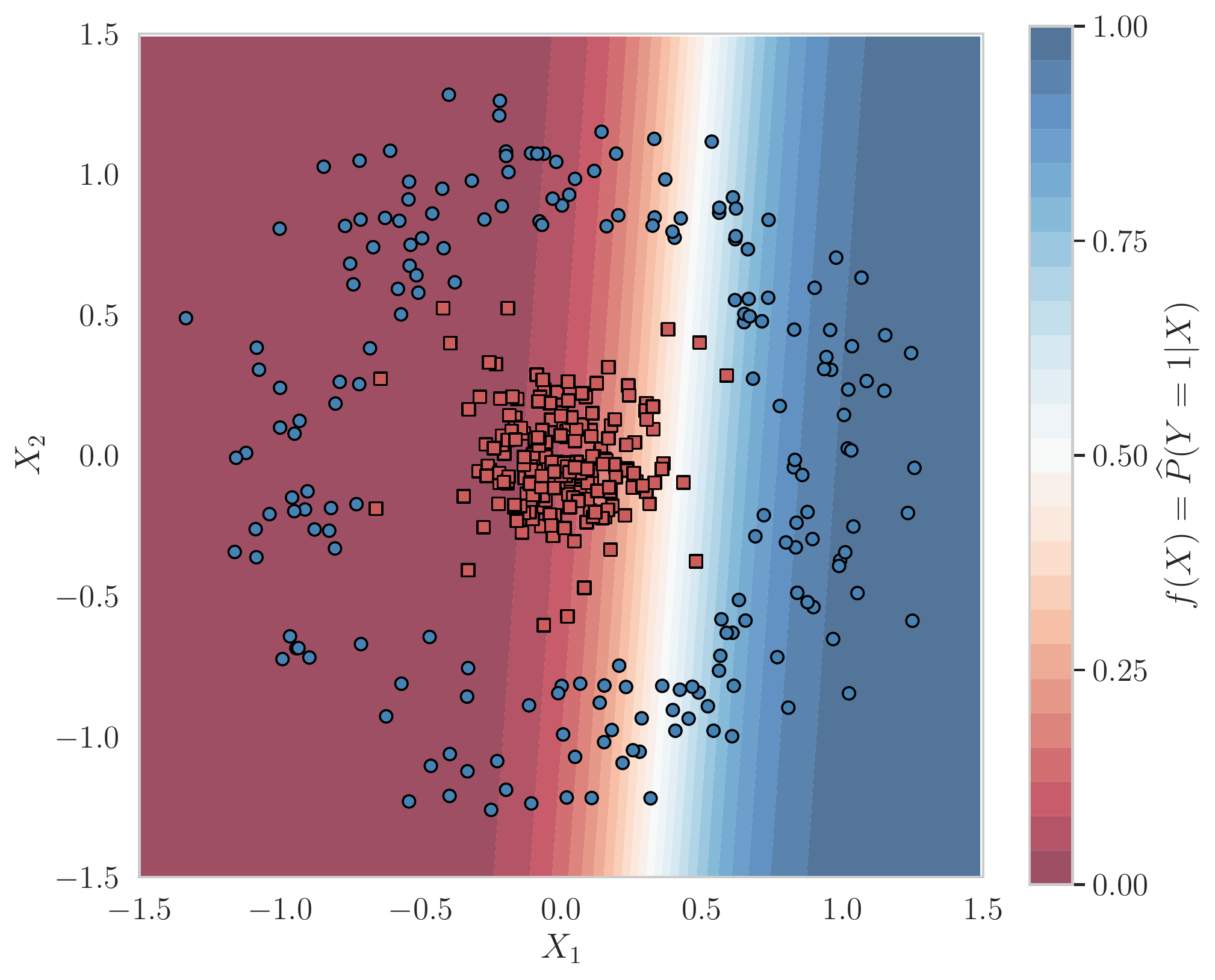}
     \caption{}
    % \label{subfig:cifar_alpha_05}
     \end{subfigure}
         ~ 
     \begin{subfigure}{0.45\textwidth}
         \centering
         \includegraphics[width=\textwidth]{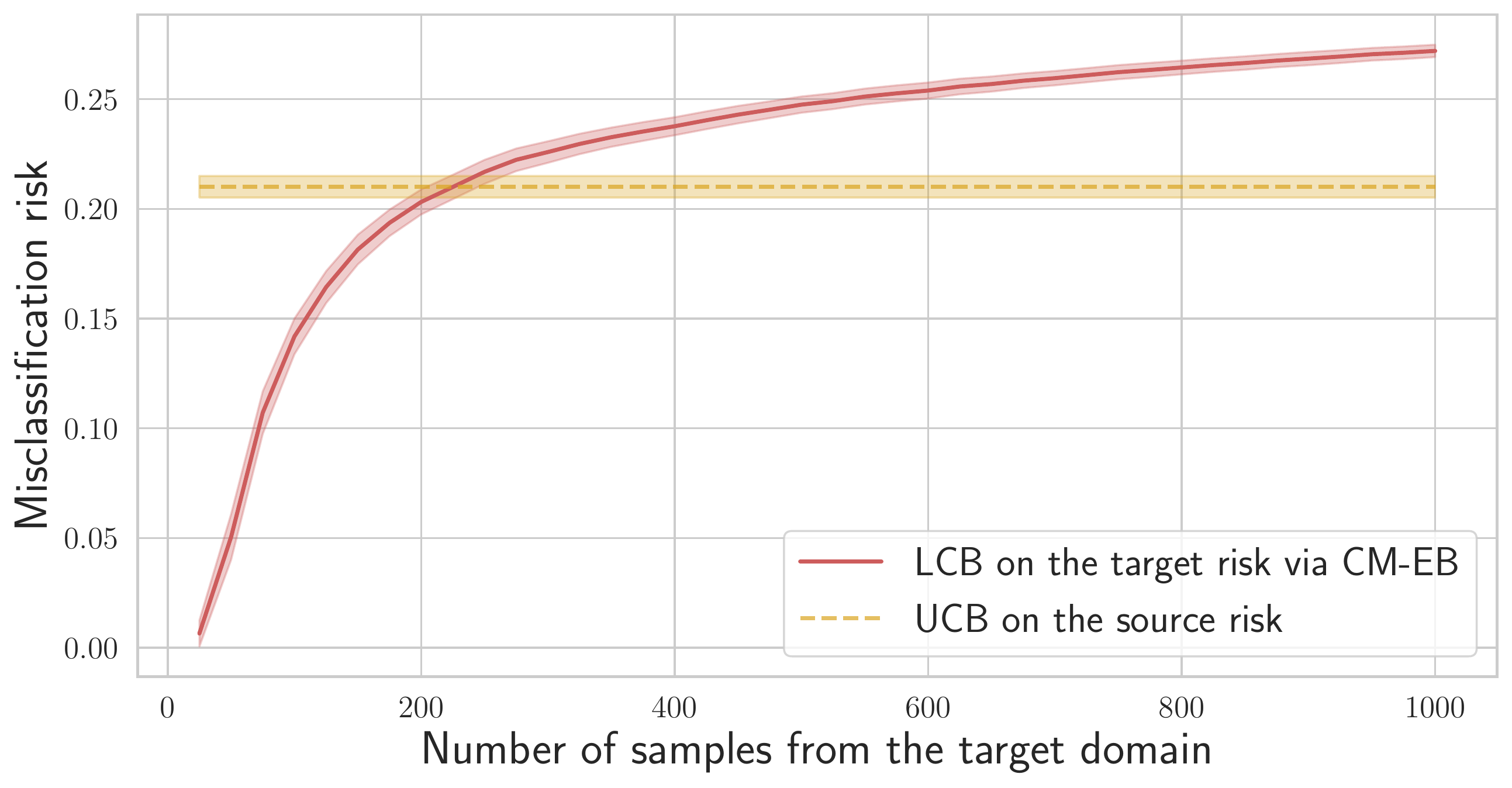}
     \caption{}
    \label{subfig:cov_shift_runs}
     \end{subfigure}
     \caption{ (a) Data samples from the source (red) and target (blue) distributions for the covariate shift simulation. (b) Logistic regression predictor learnt on the source distribution plotted along with a data sample from the target distribution. While learnt predictor clearly has high accuracy on the source domain, it fails to approximate the true underlying data generating distribution. (c) Results of running the framework when testing for a 10\% increase in the misclassification risk. The framework detects a harmful shift after processing only a small number of samples.}
    \label{fig:cov_shift_data}
\end{figure*}

\end{document}